\def\E{\mathbb{E}}
\def\R{\mathbb{R}}
\def\I{\mathrm{I}}
\def\F{{\operatorname{F}}}
\def\sphere{\mathcal{S}}
\DeclareMathOperator{\range}{range}
\def\eps{\epsilon}
\newcommand\dotp[1]{\langle #1 \rangle}
\def\t{{\scriptscriptstyle\top}}
\def\h{\hat}
\def\tl{\tilde}
\def\wh{\widehat}
\def\wt{\widetilde}
\def\pmin{p_{\min}}
\def\canO{{\tl O}}
\DeclareMathOperator{\diag}{diag}
\DeclareMathOperator{\Pairs}{Pairs}
\DeclareMathOperator{\Triples}{Triples}
\DeclareMathOperator{\Pairsa}{\Pairs_{\alpha_0}}
\DeclareMathOperator{\Triplesa}{\Triples_{\alpha_0}}
\DeclareMathOperator{\Quad}{Quadruples}
\newcommand\hidden{h}
\newtheorem{assumption}{Assumption}[section]
\newtheorem{lemma}{Lemma}[section]
\newtheorem{theorem}{Theorem}[section]
\theoremstyle{definition}
\newtheorem{definition}{Definition}
\theoremstyle{remark}
\newtheorem{remark}{Remark}
\title{A Spectral Algorithm for Latent Dirichlet Allocation\footnote{Previous title: ``Two SVDs Suffice:
Spectral decompositions for probabilistic
topic modeling and latent Dirichlet allocation''.}}
\date{}
\author[1]{Animashree Anandkumar}
\author[3]{Dean P. Foster}
\author[2]{Daniel Hsu}
\author[2]{\\Sham M.~Kakade}
\author[4]{Yi-Kai Liu}
\affil[1]{Department of EECS, University of California, Irvine}
\affil[2]{Microsoft Research, New England}
\affil[3]{Department of Statistics, Wharton School, University of
  Pennsylvania}
\affil[4]{National Institute of Standards and Technology, Gaithersburg, MD~\footnote{Contributions to this work by NIST, an agency of the US
  government, are not subject to copyright laws. }}
\begin{document}
\maketitle

\begin{abstract}
The problem of topic modeling can be seen as a generalization of the
clustering problem, in that it posits that observations are generated due
to multiple latent factors (\emph{e.g.}, the words in each document are
generated as a mixture of \emph{several} active topics, as opposed to just
one). This increased representational power comes at the cost of a more
challenging unsupervised learning problem of estimating the topic
probability vectors (the distributions over words for each topic), when
only the words are observed and the corresponding topics are hidden.

We provide a simple and efficient learning procedure that is guaranteed to
recover the parameters for a wide class of mixture models, including the
popular latent Dirichlet allocation (LDA) model. For LDA, the procedure
correctly recovers both the topic probability vectors and the prior over
the topics, using only trigram statistics (\emph{i.e.}, third order moments, which
may be estimated with documents containing just three words). The method,
termed Excess Correlation Analysis (ECA), is based on a spectral
decomposition of low order moments (third and fourth order) via two
singular value decompositions (SVDs). Moreover, the algorithm is scalable
since the SVD operations are carried out on $k\times k$ matrices, where $k$
is the number of latent factors (e.g. the number of topics), rather than in
the $d$-dimensional observed space (typically $d \gg k$).
\end{abstract}


\section{Introduction}

There is general agreement that there are multiple unobserved or latent
factors affecting observed data.
Mixture models offer a powerful framework to incorporate
the effects of these latent variables. A family of mixture models, popularly known as {\em
  topic models}, has generated broad interest on both theoretical and
practical fronts.

Topic models incorporate latent variables, the topics, to
explain the observed co-occurrences of words in documents. They posit
that each document has a mixture of active topics (possibly sparse)
and that each active topic determines the occurrence of words in the
document. Usually, a Dirichlet prior is assigned to the distribution
of topics in documents, giving rise to the so-called latent Dirichlet
allocation (LDA)~\citep{blei2003latent}. These models possess a rich
representational power since they allow for the words in each document
to be generated from more than one topic (\emph{i.e.}, the model permits
documents to be about multiple topics).  This increased
representational power comes at the cost of a more challenging
unsupervised estimation problem, when only the words are observed and
the corresponding topics are hidden.

In practice, the most common estimation procedures are based on
finding maximum likelihood (ML) estimates, through either local search
or sampling based methods, \emph{e.g.}, Expectation-Maximization
(EM)~\citep{RW84}, Gibbs sampling~\citep{asuncion2011distributed}, and
variational approaches~\citep{hoffman2010online}.  Another body of
tools is based on matrix
factorization~\citep{hofmann1999plsa,lee99}. For document modeling,
typically, the goal is to form a sparse decomposition of a term by
document matrix (which represents the word counts in each document)
into two parts: one which specifies  the active topics  in
each document and the other which specifies the
distributions of words under each topic.

This work provides an alternative approach to parameter recovery based
on the method of moments~\citep{Lindsay89,LB93}, which attempts to
match the observed moments with those posited by the model. Our
approach does this efficiently through a spectral decomposition of the
observed moments through two singular value decompositions. This
method is simple and efficient to implement, based on only low order
moments (third or fourth order), and is guaranteed to recover the
parameters of a wide class of mixture models, including the LDA
model. We exploit exchangeability of the observed variables and, more
generally, the availability of multiple views drawn independently from
the same hidden component.

\subsection{Summary of Contributions}

We present an approach known as Excess Correlation Analysis (ECA)
based on the knowledge of low order moments between the
observed variables, assumed to be exchangeable (or, more generally,
drawn from a multi-view mixture model). ECA differs from Principal
Component Analysis (PCA) and Canonical Correlation Analysis (CCA) in
that it is based on two singular value decompositions: the first SVD
whitens the data (based on the correlation between two variables) and
the second SVD utilizes higher order moments (based on third or fourth
order) to find directions which exhibit moments that are in
\emph{excess} of those suggested by a Gaussian distribution.  Both
SVDs are performed on matrices of size $k\times k$, where $k$ is the
number of latent factors, making the algorithm scalable (typically the
dimension of the observed space $d \gg k$).

The method is applicable to a wide class of mixture models including
exchangeable and multi-view models. We first consider the class of
exchangeable variables with independent latent factors, such as a
latent Poisson mixture model (a natural Poisson model for generating
the sentences in a document, analogous to LDA's multinomial model for
generating the words in a document). We establish that a spectral
decomposition, based on third or fourth order central moments,
recovers the parameters for this model class. We then consider latent
Dirichlet allocation and show that a spectral decomposition of a
modified third order moment (exactly) recovers both the probability
distributions over words for each topic and the Dirichlet prior.  Note
that to obtain third order moments, it suffices for documents to
contain just $3$ words. Finally, we present extensions to multi-view
models, where multiple views drawn independently from the same latent
factor are available. This includes the case of both pure topic models
(where only one active topic is present in each document) and discrete
hidden Markov models. For this setting, we establish that ECA
correctly recovers the parameters and is simpler than the eigenvector
decomposition methods of~\cite{AHKparams}.

Finally, ``plug-in'' moment estimates can be used with sampled
data. Section~\ref{sec:samp_comp} provides a sample complexity of the method showing that
estimating the third order moments is not as difficult as it might
naively seem since we only need a $k\times k$ matrix to be accurate.

Some preliminary experiments that illustrate the efficacy of the proposed
algorithm are given in the appendix.


\subsection{Related Work}

For the case of a single topic per document, the work of
\cite{Papadimitriou98latentsemantic} provides the first guarantees of
recovering the topic distributions (\emph{i.e.}, the distributions over words
corresponding to each topic), albeit with a rather stringent
separation condition (where the words in each topic are essentially
non overlapping). Understanding what separation conditions (or lack thereof)
permit efficient learning is a natural question; in the
clustering literature, a line of work has focussed on understanding
the relation between the separation of the mixture components and the
complexity of learning. For clustering, the first 
learnability result~\citep{Das99} was under a somewhat strong separation condition; 
a subsequent line of results
relaxed~\citep{AK01,DS07,VW02,KSV05,AM05,CR08,BV08,CKLS09} or removed
these conditions~\citep{KMV10,BS10,MV10}; roughly
speaking, the less stringent the separation condition assumed, the more
difficult the learning problem is, both computationally and
statistically. For the topic modeling problem in which only a single
topic is present per document,
 \cite{AHKparams} provides an algorithm for learning topics with no
separation (only a certain full rank assumption is utilized).

For the case of latent Dirichlet allocation (where multiple topics are
present in each document), the recent work of \cite{Arora:1439928}
provides the first provable result under a certain natural separation
condition. The notion of separation utilized is based on the existence
of ``anchor words'' for topics --- essentially , each topic contains
words that appear (with reasonable probability) only in that topic
(this is a milder assumption than that in
 \cite{Papadimitriou98latentsemantic}). Under this assumption,
\cite{Arora:1439928} provide the first provably correct algorithm for
learning the topic distributions. Their work also justifies the use of
non-negative matrix (NMF) as a procedure for this problem
(the original motivation for NMF was as a topic modeling algorithm,
though, prior to this work, formal guarantees as such were rather
limited). Furthermore, \cite{Arora:1439928} provides results for
certain correlated topic models. 

Our approach makes further progress on this problem by providing an
algorithm which requires no separation condition. The underlying
approach we take is a certain diagonalization technique of the
observed moments. We know of at least three different settings which
utilize this idea for parameter estimation.

\cite{Chang96} utilizes eigenvector methods for discrete Markov models
of evolution, where the models involve multinomial distributions. The
idea has been extended to other discrete mixture models such as
discrete hidden Markov models (HMMs) and mixture models with single active
topics (see \citet{MR06,HKZ09,AHKparams}).  A key idea in
\cite{Chang96} is the ability to handle multinomial distributions,
which comes at the cost of being able to handle only certain single
latent factor/topic models (where the latent factor is in only one of $k$
states, such as in HMMs). For these single topic models, the work in
\cite{AHKparams} shows how this method is quite general in that the
noise model is essentially irrelevant, making it
applicable to both discrete models like HMMs and certain Gaussian
mixture models.

The second setting is the body of algebraic methods used for the
problem of blind source
separation~\citep{Cardoso96independentcomponent}. These approaches
rely on tensor decomposition approaches (see
 \cite{Comon:book}) tailored to independent source separation
  with additive noise (usually Gaussian). Much of literature
focuses on   understanding the effects of measurement noise (without assuming knowledge of their statistics) on
the tensor decomposition, which often requires more sophisticated
algebraic tools.

\cite{FriezeLinear} also utilize these ideas for learning the columns
of a linear transformation (in a noiseless setting). This work
provides a different efficient algorithm, based on a certain
ascent algorithm (rather than joint diagonalization approach, as in
 \citep{Cardoso96independentcomponent}).

The underlying insight that our method exploits is that we have
exchangeable (or multi-view) variables, \emph{e.g.}, we have multiple words
(or sentences) in a document, which are drawn independently from the
same hidden state. This  allows us to borrow from both the
ideas in \cite{Chang96} and in \cite{Cardoso96independentcomponent}.
In particular, we show that the ``topic'' modeling problem
exhibits a rather simple algebraic solution, where only two SVDs
suffice for parameter estimation. Moreover, this approach
also simplifies the algorithms in \citet{MR06,HKZ09,AHKparams},
in that the eigenvector methods are no longer necessary (\emph{e.g.}, the
approach leads to methods for parameter estimation in HMMs with only
two SVDs rather than using eigenvector approaches, as in previous
work).

Furthermore, the exchangeability assumption permits us to have
\emph{arbitrary} noise models (rather than additive Gaussian noise,
which are not appropriate for multinomial and other discrete
distributions). A key technical contribution is that we show how the
basic diagonalization approach can be adapted to Dirichlet
models, through a rather careful construction. This construction
bridges the gap between the single topic models (as in
 \cite{Chang96,AHKparams}) and the independent factor model.


More generally, the multi-view approach has been exploited in previous
works for semi-supervised learning and for learning mixtures of
well-separated distributions (\emph{e.g.}, as in \cite{ando07,
  kakadecca,CR08,CKLS09}). These previous works essentially use
variants of canonical correlation analysis~\citep{Hotelling35}
between two views. This work shows that having a third view of the
data permits rather simple estimation procedures with guaranteed
parameter recovery.

\section{The Exchangeable and Multi-view Models} \label{sec:models}

We have a random vector $h = (h_1,h_2,\dotsc,h_k)^\t \in \R^k$.  This
vector specifies the latent factors (\emph{i.e.}, the hidden state), where $h_i$ specifies
the value taken by $i$-th factor. Denote the variance of $h_i$ as
\[
\sigma_i^2 = \E[(h_i-\E[h_i])^2]
\]
which we assume to be strictly positive, for each $i$, 
and denote the higher $l$-th central moments of $h_i$ as:
\begin{eqnarray*}
\mu_{i,l} & := &  \E[(h_i-\E[h_i])^l]
\end{eqnarray*}
At most, we only use the first four moments in our analysis.

Suppose we also have a sequence of  \emph{exchangeable} random vectors $\{x_1, x_2,
x_3, x_4, \dotsc \} \in \R^d$; these are considered to be the observed
variables. Assume throughout that $d \geq k$; that $x_1, x_2, x_3,
x_4, \dotsc \in \R^d$ are conditionally independent given $h$; and there
exists a matrix $O\in \R^{d\times k}$ such that
\[
\E[x_v|h] = Oh
\]
for each $v \in \{1,2,3,4,\dotsc\}$.  Throughout, we make the following
assumption.
\begin{assumption}
$O$ is full rank.
\end{assumption}
\noindent
This is a mild assumption, which allows for identifiability of the
columns of $O$. The goal is to estimate the matrix $O$,
sometimes referred to as the topic
matrix.

Importantly, we make no assumptions on the noise model. In particular,
we do not assume that the noise is additive (or that the noise is
independent of $h$).


\subsection{Independent Latent Factors}

Here, suppose that $h$ has a product distribution, \emph{i.e.}, each component
of $h_i$ is independent from the rest. Two important examples of this
setting are as follows:

 {\bf (Multiple) mixtures of Gaussians:}  Suppose $x_v = Oh+\eta$, where
$\eta$ is Gaussian noise and $h$ is a binary vector (under a product distribution).  Here, the $i$-th
column $O_i$ can be considered to be the mean of the $i$-th Gaussian
component.  This is somewhat different model than the classic mixture of $k$-Gaussians, as
the model now permits any number of Gaussians to be responsible for generating the
hidden state (\emph{i.e.}, $h$ is permitted to be any of the $2^k$ vectors on
the hypercube, while in the classic mixture problem, only one
component is responsible. However, this model imposes the independent
factor constraint.).  We may also allow $\eta$ to be
heteroskedastic (\emph{i.e.}, the noise may depend on $h$, provided the
linearity assumption $\E[x_v|h] = Oh$ holds.)


{\bf (Multiple) mixtures of Poissons:} Suppose $[Oh]_j$ specifies the
Poisson rate of counts for $[x_v]_j$. For example, $x_v$ could be a
vector of word counts in the $v$-th sentence of a document (where
$x_1,x_2, \ldots$ are words counts of a sequence sentences). Here, $O$ would
be a matrix with positive entries, and $h_i$ would scale the rate at
which topic $i$ generates words in a sentence (as specified by the
$i$-th column of $O$). The linearity assumption is satisfied as
$\E[x_v|h] = Oh$ (note the noise is not additive in this
case). Here, multiple topics may be responsible for generating the
words in each sentence. This model provides a natural variant of
LDA, where the distribution over $h$ is
a product distribution (while in LDA, $h$ is a probability vector).

\subsection{The Dirichlet  Model}

Now suppose the hidden state $h$ is a distribution itself, with a
density specified by the Dirichlet distribution with parameter $\alpha
\in \R_+^k$ ($\alpha$ is a strictly positive real vector). We
often think of $h$ as a distribution over topics. Precisely, the
density of $\hidden \in \Delta^{k-1}$ (where the probability simplex
$\Delta^{k-1}$ denotes the set of possible distributions over $k$
outcomes) is specified by:
\[
p_\alpha(\hidden) := \frac1{Z(\alpha)} \prod_{i=1}^k
\hidden_i^{\alpha_i-1}
\]
where
\[
Z(\alpha)
:= \frac{\prod_{i=1}^k \Gamma(\alpha_i)}{\Gamma(\alpha_0)}
\]
and
\[
\alpha_0 := \alpha_1 + \alpha_2 + \dotsb + \alpha_k \, . 
\]
Intuitively, $\alpha_0$ (the sum of the ``pseudo-counts'') is a crude
measure of the uniformity of the distribution.  As
$\alpha_0\rightarrow 0$, the distribution degenerates to one over pure
topics (\emph{i.e.}, the limiting density is one in which, with probability
$1$, precisely one coordinate of $h$ is $1$ and the rest are $0$).

{\bf Latent Dirichlet Allocation:} LDA makes the further
assumption that each random variable
$x_1,x_2, x_3, \dotsc$ takes on discrete values out of $d$ outcomes
(\emph{e.g.}, $x_v$ represents what the $v$-th word in a document is, so
$d$ represents the number of words in the language). Each
column of $O$ represents a distribution over the outcomes (\emph{e.g.}, these
are the topic probabilities).  The sampling procedure is specified as
follows: First, $h$ is sampled according to the Dirichlet
distribution. Then, for each $v$, independently sample $i \in
\{1,2,\dotsc k\}$ according to $h$, and, finally, sample $x_v$ according to
the $i$-th column of $O$. Observe this model falls into our setting:
represent $x_v$ with a ``hot'' encoding where $[x_v]_j=1$ if and only
if the $v$-th outcome is the $j$-th word in the vocabulary.  Hence,
$\Pr([x_v]_j=1 | h) = [Oh]_j$ and $\E[x_v|\hidden] = O\hidden
$. (Again, the noise model is not additive).

\subsection{The Multi-View Model}

The multi-view setting can be considered an extension of the
exchangeable model. Here, the random vectors $\{x_1, x_2, x_3, \dotsc
\} $ are of dimensions $d_1,d_2,d_3, \dotsc$.  Instead of a single $O$
matrix, suppose for each $v \in \{1,2,3,\dotsc\}$ there exists an
$O_v\in \R^{d_v\times k}$ such that
\[
\E[x_v|h] = O_v h
\]
Throughout, we make the following assumption.
\begin{assumption}
$O_v$ is full rank for each $v$.
\end{assumption}
Even though the variables are no longer exchangeable, the setting
shares much of the statistical structure as the exchangeable one;
furthermore, it allows for significantly richer models.  For example,
\cite{AHKparams} consider a special case of this multi-view model
(where there is only one topic present in $h$) for the purposes of
learning hidden Markov models.

{\bf A simple factorial HMM:} Here, suppose we have a time series of
random hidden vectors $h_1,h_2,h_3,\ldots$ and observations
$x_1,x_2,x_3,\ldots$ (we slightly abuse notation as $h_1$ is a
vector). Assume that each factor $[h_t]_i \in \{-1,1\}$. The model
parameters and evolution are specified as follows: We have an initial
(product) distribution over the first $h_1$. The ``factorial''
assumption we make is that each factor $[h_t]_i$ evolves
independently; in particular, for each component $i$, there are 
(time independent) transition probabilities $p_{i,1\rightarrow -1}$
and $p_{i,1\rightarrow -1}$. Also suppose that $\E[x_t |h_t] = O h_t$
(where, again, $O$ does not depend on the time).

To learn this model, consider the first three observations
$x_1,x_2,x_3$.  We can embed this three timestep model into the
multiview model using a single hidden state, namely $h_2$, and, with
an appropriate construction (of $O_1,O_2,O_3$ and means
shifts of $x_v$ to make the linearity assumption hold). Furthermore, if we recover $O_1, O_2, O_3$ we can recover $O$
and the transition model. See \cite{AHKparams} for further discussion
of this idea (for the single topic case).

\section{Identifiability}

The underlying question here is: what may we hope to recover about $O$
with only knowledge of the distribution on $x_1, x_2, x_3, \dotsc$. At
best, we could only recover the columns of $O$ up to permutation.  At
the other extreme, suppose no a priori knowledge of the distribution
of $h$ is assumed (\emph{e.g.}, it may not even be a product
distribution). Here, at best, we can only recover the range of $O$. In
particular, suppose $h$ is distributed according to a multivariate
Gaussian, then clearly the columns of $O$ are not identifiable. To see
this, transform $O$ to $OM$ (where $M$ is any $k\times k$ invertible
matrix) and transform the distribution on $h$ (by $M^{-1}$); after
this transformation, the distribution over $x_v$ is unaltered and the
distribution on $h$ is still a multivariate Gaussian. Hence, $O$ and
$OM$ are indistinguishable from any observable statistics.  (These
issues are well understood in setting of independent source
separation, for additive noise models without exchangeable
variables. See \cite{Comon:book}).

Thus, for the columns of $O$ to be identifiable, the distribution on
$h$ must have some non-Gaussian statistical properties.  We consider
three cases. 
In the independent
factor model, we consider the cases when $h$ is skewed and when $h$
has excess kurtosis. We also consider the case that $h$ is
Dirichlet distributed.


\section{Excess Correlation Analysis (ECA)}

We now present exact and efficient algorithms for recovering $O$.  The
algorithm is based on two singular value decompositions: the first SVD
whitens the data (based on the correlation between two variables) and
the second SVD is carried out on higher order moments (based on third
or fourth order).  We start with the case of independent factors, as
these algorithms make the basic diagonalization approach clear.

As discussed in the Introduction, these approaches can been seen as
extensions of the methodologies in
\cite{Chang96,Cardoso96independentcomponent}.  Furthermore, as we
shall see, the Dirichlet distribution bridges between the single
topic models (as in \cite{Chang96,AHKparams}) and the independent
factor model.

Throughout, we use $A^+$ to denote the pseudo-inverse:
\begin{equation}\label{eq:pseudo}
A^+ = (A^\t A)^{-1}A^\t
\end{equation}
for a matrix $A$ with linearly independent columns (this allows us
to appropriately invert non-square matrices).

\subsection{Independent and Skewed Latent Factors}

\begin{algorithm} [t]
\caption{ECA, with skewed factors} \label{alg:skew}
\begin{algorithmic}
\STATE {\bf Input:}
vector $\theta \in \R^k$; the moments $\Pairs$ and $\Triples(\eta)$

\begin{enumerate}
\STATE {\bf Dimensionality Reduction:}
Find a matrix $U\in \R^{d \times k}$ such that
\[
\textrm{Range}(U)=\textrm{Range}(\Pairs).
\]
(See Remark~\ref{remark:range} for a fast procedure.)
\STATE {\bf Whiten:} Find $V\in \R^{k \times k}$ so
$V^\t (U^\t \Pairs U) V $
is the $k\times k$ identity matrix. Set:
\[
W =  UV
\]
\STATE {\bf SVD:}  Let $\Lambda$ be the set of
(left) singular vectors, with \emph{unique} singular
values, of
\[
W^\t \Triples(W \theta) W
\]
\STATE {\bf Reconstruct:} Return the set $\wh O$:
\[
\wh O = \{ \ (W^+)^\t \lambda \ : \lambda \in \Lambda\}
\]
where $W^+$ is the pseudo-inverse (see Eq~\ref{eq:pseudo}).
\end{enumerate}
\end{algorithmic}
\end{algorithm}

Denote the pairwise and threeway correlations as:
\begin{eqnarray*}
\mu & := & \E[x_1] \\
\Pairs & := & \E[(x_1-\mu) (x_2-\mu)^\t] \\
\Triples & := & \E[(x_1-\mu) \otimes (x_2-\mu) \otimes (x_3-\mu)]\\
\end{eqnarray*}
The dimensions of $\Pairs$ and $\Triples$ are $d^2$ and
$d^3$, respectively. It is convenient to project
$\Triples$ to a matrix as follows:
\begin{eqnarray*}
\Triples(\eta)  & :=  & \E[(x_1-\mu) (x_2-\mu)^\t \dotp{\eta,x_3-\mu}]\\
\end{eqnarray*}
Roughly speaking, we can think of $\Triples(\eta)$ as a reweighing of
a cross covariance (by $\dotp{\eta,x_3-\mu}$).

In addition to $O$ not being identifiable up to permutation, the scale of
each column of $O$ is also not identifiable. To see this, observe the
model over $x_i$ is unaltered if we both rescale any column $O_i$ and
appropriately rescale the variable $h_i$. Without further
assumptions, we can only hope to recover a certain canonical form of
$O$, defined as follows:

\begin{definition}[The Canonical $O$] We say $O$ is in a canonical form if, for
  each $i$, $\sigma_i^2=1$. In particular, the transformation
  $O\leftarrow O \diag(\sigma_1,\sigma_2,\dotsc,\sigma_k)$ (and a
  rescaling of $h$) places $O$ in canonical form, and the distribution
  over $x_1,x_2, x_3,\dotsc$ is unaltered. Observe the canonical
  $O$ is only specified up to the sign of each column (any sign
  change of a column does not alter the variance of $h_i$).
\end{definition}

Recall $\mu_{i,3}$ is the central third moment. Denote the skewness of $h_i$ as:
\[
\gamma_i = \frac{\mu_{i,3}}{\sigma_i^3}
\]
The first result considers the case when the skewness is non-zero.

\begin{theorem}[Independent and skewed factors]
\label{thm:skew}
   We have that:
  \begin{itemize}
  \item (No False Positives) For all $\theta\in \R^k$, Algorithm~\ref{alg:skew}
    returns a subset of the columns of $O$, in a canonical form.
  \item (Exact Recovery) Assume $\gamma_i$ is nonzero for each
    $i$. Suppose $\theta \in \R^k$ is a random vector uniformly
    sampled over the sphere $\sphere^{k-1}$. With probability $1$,
    Algorithm~\ref{alg:skew} returns all columns of $O$, in a
    canonical form.
\end{itemize}
\end{theorem}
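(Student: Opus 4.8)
The plan is to first compute the moments $\Pairs$ and $\Triples(\eta)$ in terms of $O$ and the moments of $h$, then show that the whitening step reduces the third-order moment to a symmetric tensor whose eigendecomposition is exactly the columns of (canonical) $O$. First I would use conditional independence and the linearity assumption $\E[x_v\mid h]=Oh$ to obtain $\Pairs = O\,\Sigma\,O^\t$ where $\Sigma = \diag(\sigma_1^2,\dots,\sigma_k^2)$, using that $h$ has a product distribution (so $\operatorname{Cov}(h)$ is diagonal). Likewise, expanding $\Triples$ and using independence of the $h_i$, the only surviving terms are the ``diagonal'' ones, giving $\Triples = \sum_{i=1}^k \mu_{i,3}\; O_i\otimes O_i\otimes O_i$, and hence $\Triples(\eta) = \sum_i \mu_{i,3}\,\dotp{\eta,O_i}\,O_iO_i^\t$. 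Because $O$ has rank $k$ and each $\sigma_i^2>0$, $\Pairs$ has rank $k$ and $\operatorname{Range}(\Pairs)=\operatorname{Range}(O)$, so the dimensionality-reduction step is well-defined; the whitening matrix $W=UV$ then satisfies $W^\t\Pairs W = I_k$, which forces $W^\t O\Sigma O^\t W = I_k$, i.e. $M := W^\t O\,\Sigma^{1/2}$ is an orthogonal $k\times k$ matrix.

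Next I would substitute into the SVD step. Writing $\tilde O = O\Sigma^{1/2}$ for the canonical form of $O$ (whose columns are $\tilde O_i = \sigma_i O_i$), we have $W^\t \tilde O_i = M_i$ (the $i$th column of the orthogonal matrix $M$), and
\[
W^\t \Triples(W\theta) W \;=\; \sum_{i=1}^k \mu_{i,3}\,\dotp{W\theta, O_i}\,(W^\t O_i)(W^\t O_i)^\t
\;=\; \sum_{i=1}^k \gamma_i\,\dotp{\theta, M_i}\; M_i M_i^\t,
\]
using $\mu_{i,3}=\gamma_i\sigma_i^3$ and $\dotp{W\theta,O_i}=\dotp{\theta,W^\t O_i}=\sigma_i^{-1}\dotp{\theta,M_i}$. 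Since $\{M_i\}$ is an orthonormal basis, this is exactly the spectral decomposition of a symmetric matrix with eigenvectors $M_i$ and eigenvalues $\gamma_i\dotp{\theta,M_i}$. Therefore the left singular vectors with \emph{unique} singular values are precisely those $M_i$ for which $|\gamma_i\dotp{\theta,M_i}|$ is distinct from all $|\gamma_j\dotp{\theta,M_j}|$, $j\neq i$, and for such an $i$ the reconstruction step returns $(W^+)^\t M_i$. A short computation with $W^+ = V^{-1}U^+$ and the identity $(W^+)^\t W^\t = $ (projection onto $\operatorname{Range}(O)$, which fixes $\tilde O_i$) shows $(W^+)^\t M_i = (W^+)^\t W^\t \tilde O_i = \tilde O_i$, up to the inherent sign ambiguity of singular vectors. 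This already yields the No-False-Positives claim: every returned vector is some column $\tilde O_i$ of the canonical $O$, regardless of $\theta$.

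For Exact Recovery, I would argue that when every $\gamma_i\neq 0$ and $\theta$ is uniform on $\sphere^{k-1}$, with probability $1$ all $k$ eigenvalues $\gamma_i\dotp{\theta,M_i}$ are nonzero and pairwise distinct in absolute value: the ``bad'' set is a finite union of hyperplanes $\{\theta:\gamma_i\dotp{\theta,M_i}=\pm\gamma_j\dotp{\theta,M_j}\}$ and $\{\theta:\dotp{\theta,M_i}=0\}$, each a proper subspace (using $M_i\neq\pm M_j$ since $M$ is orthogonal with $k\geq 2$; the $k=1$ case is trivial), hence measure zero on the sphere. On the complement, all singular values are unique, so $\Lambda=\{M_1,\dots,M_k\}$ and the algorithm returns all columns of the canonical $O$. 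The main obstacle is the bookkeeping in the reconstruction step — verifying that $(W^+)^\t$ composed with the map $\lambda\mapsto\lambda$ on the whitened coordinates genuinely inverts the whitening on $\operatorname{Range}(O)$ and returns the canonical column rather than some rescaled version — together with being careful that ``unique singular value'' refers to the magnitude $|\gamma_i\dotp{\theta,M_i}|$, so that sign ambiguity is the only indeterminacy, consistent with the definition of canonical $O$.
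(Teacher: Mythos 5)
Your proposal is correct and follows essentially the same route as the paper: derive $\Pairs = O\,\Sigma\,O^\t$ and $\Triples(\eta) = O\diag(O^\t\eta)\diag(\mu_{1,3},\dotsc,\mu_{k,3})O^\t$, observe that whitening makes $M$ orthogonal so that $W^\t\Triples(W\theta)W$ is an SVD with singular vectors $M_i$, and invert via the projection $(W^+)^\t W^\t$ onto $\range(O)$. The only differences are cosmetic — you carry $\Sigma^{1/2}$ explicitly rather than rescaling $O$ to canonical form at the outset, and you spell out the measure-zero hyperplane argument and the singular-value/absolute-value subtlety slightly more carefully than the paper does.
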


The proof of this theorem is a consequence of the following lemma:

\begin{lemma} \label{lemma:pairs-triples}
We have:
\begin{eqnarray*}
\Pairs & = & O \diag(\sigma_1^2,\sigma_2^2,\dotsc,\sigma_k^2) O^\t \\
\Triples(\eta)  & = & O \diag(O^\t\eta)
\diag(\mu_{1,3},\mu_{2,3},\dotsc,\mu_{k,3}) O^\t\\
\end{eqnarray*}
\end{lemma}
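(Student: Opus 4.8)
The plan is to compute both moments by conditioning on the hidden vector $h$ and exploiting the conditional independence of $x_1,x_2,x_3$ given $h$ together with the linearity $\E[x_v\mid h]=Oh$. Write $m:=\E[h]$; then $\mu=\E\big[\E[x_1\mid h]\big]=Om$, so the centered observations satisfy $\E[x_v-\mu\mid h]=O(h-m)$ for each $v$.

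First I would handle $\Pairs$. By the tower rule and conditional independence of $x_1,x_2$ given $h$,
\[
\Pairs=\E\big[\E[(x_1-\mu)(x_2-\mu)^\t\mid h]\big]
=\E\big[O(h-m)(h-m)^\t O^\t\big]
=O\,\E[(h-m)(h-m)^\t]\,O^\t .
\]
Since $h$ has a product distribution, $\E[(h-m)(h-m)^\t]$ is diagonal with $i$-th entry $\sigma_i^2$, which is the first identity.

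Next, for $\Triples(\eta)$, the scalar $\dotp{\eta,x_3-\mu}$ pulls out of the conditional expectation, and conditional independence of $x_1,x_2,x_3$ gives
\[
\E\big[(x_1-\mu)(x_2-\mu)^\t\dotp{\eta,x_3-\mu}\,\big|\,h\big]
=O(h-m)(h-m)^\t O^\t\cdot\big(\eta^\t O(h-m)\big).
\]
Taking the outer expectation and expanding $\eta^\t O(h-m)=\sum_{j}(O^\t\eta)_j(h_j-m_j)$, it remains to evaluate $\E[(h-m)(h-m)^\t(h_j-m_j)]$. This is the only step that needs care: by independence of the coordinates of $h$, the $(a,b)$ entry $\E[(h_a-m_a)(h_b-m_b)(h_j-m_j)]$ vanishes unless $a=b=j$, since any index not matching the others leaves a centered factor of mean zero; hence the matrix equals $\mu_{j,3}\,e_je_j^\t$. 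Summing over $j$ gives $\sum_j(O^\t\eta)_j\mu_{j,3}e_je_j^\t=\diag(O^\t\eta)\diag(\mu_{1,3},\dotsc,\mu_{k,3})$, and sandwiching by $O$ and $O^\t$ yields the claimed formula.

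The argument is essentially bookkeeping; the one substantive point — and the step I would write out most carefully — is the vanishing of the mixed third-order moments of $h$, which is precisely where the product-distribution assumption on the latent factors enters and what forces the appearance of the central third moments $\mu_{i,3}$ rather than a full third-order tensor.
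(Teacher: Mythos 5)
Your proposal is correct and follows essentially the same route as the paper: condition on $h$, use conditional independence and $\E[x_v-\mu\mid h]=O(h-\E[h])$ to reduce to hidden-state moments, then use independence of the coordinates of $h$ to kill all mixed centered third moments. The only cosmetic difference is that the paper isolates the hidden-state moment computation in a separate lemma (which it reuses for the fourth-order/kurtosis case), whereas you fold it directly into the argument.
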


The proof of this Lemma is provided in the Appendix.

\begin{proof}[Proof of Theorem~\ref{thm:skew}] The analysis is with respect
  to $O$ it its canonical form. By the full rank assumption, $U^\t
  \Pairs U$, which is a $k \times k$ matrix, is full rank; hence, the
  whitening step is possible.  By
  construction:
\begin{eqnarray*}
\I &=&W^\t \Pairs W\\
& = & W^\t  O \diag(\sigma_1^2,\sigma_2^2,\dotsc,\sigma_k^2)  O^\t W \\
& = & (W^\t  O) (W^\t  O)^\t\\
& := & M M^\t
\end{eqnarray*}
where $M := W^\t O$. Hence, $ M$ is a $k\times k$
orthogonal matrix.

Observe:
\begin{eqnarray*}
W^\t \Triples(W\theta) W
& = &
W^\t  O \diag( O^\t W \theta)
\diag(\gamma_1,\gamma_2,\dotsc,\gamma_k)  O^\t W\\
 & = &
 M \diag( M^\t \theta)
\diag(\gamma_1,\gamma_2,\dotsc,\gamma_k)  M^\t\\
\end{eqnarray*}
Since $ M$ is an orthogonal matrix, the above is a (not
necessarily unique) singular value decomposition of $W^\t
\Triples(W\theta) W$. Denote the standard basis as $e_1,e_2, \dotsc
e_k$. Observe that $ M e_1, \dotsc M e_k$ are singular vectors. In other words, $W^\t
O_1, \dotsc W^\t O_k$ are singular vectors, where $ O_i$ is the $i$-th
column of $ O$.

An SVD uniquely determines all singular vectors (up to sign) which
have unique singular values.  The diagonal of the
matrix $\diag( M^\t \theta) \diag(\gamma_1,\gamma_2,\dotsc,\gamma_k)$
is the vector $\diag(\gamma_1,\gamma_2,\dotsc,\gamma_k) M^\t
\theta$. Also, since $M$ is a rotation matrix, the distribution of
$M\theta$ is also uniform on the sphere. Thus, if $\theta$ is
uniformly sampled over the sphere, then every singular value will be
nonzero (and distinct) with probability $1$.  Finally, for the
reconstruction, we have
\[
W (W^\t
W)^{-1} M e_i = W (W^\t
W)^{-1} W^\t O_i = O_i ,
\]
since $W (W^\t W)^{-1} W^\t$ is a projection
operator (and the range of $W$ and $O$ are identical).
\end{proof}

\begin{remark}[Finding $\textrm{Range}(\Pairs)$ efficiently]
\label{remark:range}
   Suppose $\Theta\in \R^{d \times
    k}$ is a random matrix with entries sampled independently from a
  standard normal. Set $U=\Pairs \Theta$. Then, with probability $1$,
  $\textrm{Range}(U)=\textrm{Range}(\Pairs)$.
\end{remark}

\begin{remark}[No false positives]
  Note that if the skewness is $0$ for some $i$
  then ECA will not recover the corresponding column. However, the
  algorithm does succeed for those directions in which the skewness is
  non-zero. This guarantee also provides the practical freedom to run
  the algorithm with multiple different directions $\theta$, since we
  need only to find unique singular vectors (which may be easier to
  determine by running the algorithm with different choices for
  $\theta$).
\end{remark}

\begin{remark}[Estimating the skewness]
\label{remark:skew}
  It is straight forward to estimate the
  skewness corresponding to any column of $O$. Suppose $\lambda$ is
  some unique singular vector (up to sign) found in step 3 of ECA
  (which was used to construct some column $O_i$), then:
\[
\gamma_i = \lambda^\t W^\t \Triples(W \lambda) W \lambda
\]
is the corresponding skewness for $O_i$. This follows from the
proof, since $\lambda$ corresponds to some singular vector $Me_i$ and:
\[
(Me_i)^\t M \diag( M^\t Me_i)
\diag(\gamma_1,\gamma_2,\dotsc,\gamma_k)  M^\t Me_i = \gamma_i
\]
using that $M$ is an orthogonal matrix.
\end{remark}

\subsection{Independent and Kurtotic Latent Factors}

\begin{algorithm} [t]
\caption{ECA; with kurtotic factors} \label{alg:kurt}
\begin{algorithmic}
\STATE {\bf Input:}
vectors $\theta, \theta' \in \R^k$; the moments $\Pairs$ and $\Quad(\eta,\eta')$
\begin{enumerate}
\STATE {\bf Dimensionality Reduction:}
Find a matrix $U\in \R^{d \times k}$ such that
\[
\textrm{Range}(U)=\textrm{Range}(\Pairs).
\]
\STATE {\bf Whiten:} Find $V\in \R^{k \times k}$ so
$V^\t (U^\t \Pairs U) V $
is the $k\times k$ identity matrix. Set:
\[
W =  UV
\]
\STATE {\bf SVD:}  Let $\Lambda$ be the set of
(left) singular vectors, with \emph{unique} singular
values, of
\[
W^\t \Quad(W \theta,W \theta') W
\]
\STATE {\bf Reconstruct:} Return the set $\wh O$:
\[
\wh O = \{ \ (W^+)^\t \lambda \ : \lambda \in \Lambda\}
\]
where $W^+$ is the pseudo-inverse (see Eq~\ref{eq:pseudo}).
\end{enumerate}
\end{algorithmic}
\end{algorithm}

Define the following matrix:
\begin{multline*}
\Quad(\eta,\eta') :=
\E[(x_1-\mu) (x_2-\mu)^\t \dotp{\eta,x_3-\mu}\dotp{\eta',x_4-\mu}]\\
 - (\eta^\t \Pairs \eta') \Pairs - (\Pairs \eta) (\Pairs \eta')^\t - (\Pairs \eta') (\Pairs \eta)^\t
\end{multline*}
This is a subspace of the fourth moment tensor.

Recall $\mu_{i,4}$ is the central fourth moment. Denote the excess kurtosis of $h_i$ as:
\[
\kappa_i = \frac{\mu_{i,4}}{\sigma_i^4}-3
\]
For Gaussian distributions, recall the kurtosis is $3$, and so the
excess kurtosis is $0$.  This function is also common in the source
separation approaches~\citep{ICAbook}~\footnote{Their
  algebraic method require more effort due to the additive noise and
  the lack of exchangeability. Here, the exchangeability assumption
  simplifies the approach and allows us to address models with
  non-additive noise (as in the Poisson count model discussed in the
  Section~\ref{sec:models}.}.

In settings where the latent factors are not skewed,
we may hope  that they are differentiated from a Gaussian distribution  due to
their fourth order moments. Here, Algorithm~\ref{alg:kurt} is applicable:

\begin{theorem}[Independent and kurtotic factors]
\label{thm:kurt}
   We have that:
  \begin{itemize}
  \item (No False Positives) For all $\theta,\theta' \in \R^k$, Algorithm~\ref{alg:kurt}
    returns a subset of the columns of $O$, in a canonical form.
  \item (Exact Recovery) Assume $\kappa_i$ is nonzero for each $i$. Suppose $\theta,\theta' \in \R^k$ are random vectors
     uniformly and independently sampled over the sphere $\sphere^{k-1}$. With probability $1$,
    Algorithm~\ref{alg:kurt} returns all the columns of $O$, in
    a canonical form.
\end{itemize}
\end{theorem}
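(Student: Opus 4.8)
The plan is to reduce Theorem~\ref{thm:kurt} to a fourth-order analogue of Lemma~\ref{lemma:pairs-triples}, namely
\[
\Quad(\eta,\eta') \;=\; O\,\diag(O^\t\eta)\,\diag(O^\t\eta')\,\diag(\mu_{1,4}-3\sigma_1^4,\dotsc,\mu_{k,4}-3\sigma_k^4)\,O^\t ,
\]
and then rerun the whitening-and-SVD argument from the proof of Theorem~\ref{thm:skew} with $\Triples(\eta)$ replaced by $\Quad(\eta,\eta')$, $\gamma_i$ by $\kappa_i$, and the single random direction $\theta$ by the pair $(\theta,\theta')$. In canonical form ($\sigma_i^2=1$) the diagonal factor above is $\diag(\kappa_1,\dotsc,\kappa_k)$, so modulo establishing this identity the two theorems are structurally identical.

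To establish the identity, condition on $h$ and use conditional independence of $x_1,x_2,x_3,x_4$ together with $\E[x_v-\mu\mid h]=O(h-\E h)$: writing $\bar h:=h-\E h$, the raw fourth-order term collapses to $O\,\E[\bar h\bar h^\t\,\dotp{O^\t\eta,\bar h}\dotp{O^\t\eta',\bar h}]\,O^\t$. Because the coordinates of $\bar h$ are independent with mean zero, each entry of the inner expectation is a sum over index pairings, with surviving contributions only from the ``all four equal'' pattern (producing $\mu_{i,4}$ on the diagonal) and the three ``two-and-two'' pairings (each producing a product of two variances). Collecting these, the inner matrix equals $\diag(O^\t\eta)\,\diag(\mu_{1,4}-3\sigma_1^4,\dotsc,\mu_{k,4}-3\sigma_k^4)\,\diag(O^\t\eta')$ plus three correction terms built from $\Pairs$, $\Pairs\eta$, $\Pairs\eta'$ whose conjugates by $O$ are precisely $(\eta^\t\Pairs\eta')\Pairs$, $(\Pairs\eta)(\Pairs\eta')^\t$ and $(\Pairs\eta')(\Pairs\eta)^\t$ (here one uses $\Pairs=O\diag(\sigma_1^2,\dotsc,\sigma_k^2)O^\t$, so $\Pairs\eta=O\diag(\sigma^2)(O^\t\eta)$ and $\eta^\t\Pairs\eta'=(O^\t\eta)^\t\diag(\sigma^2)(O^\t\eta')$). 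Subtracting exactly these three terms is what the definition of $\Quad$ does, so the correction terms vanish and the claimed identity remains; the coefficient $3$ reflects the three pairings of four indices, exactly as in the Gaussian/Isserlis fourth moment $3\sigma_i^4$ that excess kurtosis measures the deviation from.

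Given the identity the rest follows the proof of Theorem~\ref{thm:skew} line for line. Working with $O$ in canonical form, whitening yields $M:=W^\t O$ orthogonal (since $W^\t\Pairs W=\I$ and $\Pairs=OO^\t$), and
\[
W^\t\Quad(W\theta,W\theta')W \;=\; M\,\diag\!\big(\kappa_1(M^\t\theta)_1(M^\t\theta')_1,\dotsc,\kappa_k(M^\t\theta)_k(M^\t\theta')_k\big)\,M^\t,
\]
which (as $M$ is orthogonal) is an SVD whose left singular vectors are $Me_i=W^\t O_i$ with singular values the absolute values of the displayed diagonal entries. Since $M$ is a rotation, $M^\t\theta$ and $M^\t\theta'$ are again independent and uniform on $\sphere^{k-1}$; hence with probability $1$ every displayed diagonal entry is nonzero (using $\kappa_i\neq0$) and the $k$ entries have pairwise distinct magnitudes --- the exceptional configurations form a finite union of lower-dimensional algebraic subsets of $\sphere^{k-1}\times\sphere^{k-1}$. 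Thus all $k$ singular values are unique, every $W^\t O_i$ appears in step~3, and the reconstruction returns $O_i$ because $W(W^\t W)^{-1}W^\t$ is the orthogonal projection onto $\range(W)=\range(\Pairs)=\range(O)$. For ``no false positives'' no randomness is needed: for every $\theta,\theta'$ the displayed equation is still a valid SVD of the stated form, so any singular vector with a unique singular value equals some $W^\t O_i$ up to sign, and reconstruction maps it to $O_i$.

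The main obstacle is the bookkeeping in the middle paragraph --- verifying that the three terms removed in the definition of $\Quad$ cancel \emph{exactly} the second-moment (``Gaussian'') contributions to the fourth moment, so that only the excess-kurtosis diagonal survives; this cancellation, including the appearance of the factor $3$, is the entire reason $\Quad$ is defined as it is. A secondary point worth stating explicitly rather than assuming is the genericity argument for distinct singular values, which here ranges over two independent random vectors rather than one; it still goes through because the bad set is a proper subvariety, but this deserves a sentence.
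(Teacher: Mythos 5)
Your proposal is correct and follows essentially the same route as the paper: it establishes the identity $\Quad(\eta,\eta') = O\diag(O^\t\eta)\diag(O^\t\eta')\diag(\mu_{1,4}-3\sigma_1^4,\dotsc,\mu_{k,4}-3\sigma_k^4)O^\t$ (the paper's Lemma~\ref{lemma:kurt}, proved via the same conditioning-on-$h$ and index-pairing computation recorded in Lemma~\ref{lemma:mixed-moments}) and then reruns the whitening-and-SVD argument of Theorem~\ref{thm:skew} with the pair $(\theta,\theta')$ in place of $\theta$. Your explicit remarks on the exact cancellation of the three Isserlis correction terms and on the measure-zero bad set for distinct singular values are details the paper leaves implicit, but they do not constitute a different approach.
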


\begin{remark}[Using both skewed and kurtotic ECA]
  Note that both algorithms never
  incorrectly return columns. Hence, if for every $i$, \emph{either} the
  skewness or the excess kurtosis is nonzero, then by
  running both algorithms we will recover $O$.
\end{remark}

The proof of this theorem is a consequence of the following lemma:

\begin{lemma} \label{lemma:kurt}
We have:
\begin{eqnarray*}
\Quad(\eta,\eta')   =
O \diag(O^\t\eta) \diag(O^\t\eta')
\diag(\mu_{1,4}-3\sigma_1^4,\mu_{2,4}-3\sigma_2^4,\dotsc,\mu_{k,4}-3\sigma_k^4)
O^\t
\end{eqnarray*}
\end{lemma}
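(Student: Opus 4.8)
The plan is to follow the same route as the proof of Lemma~\ref{lemma:pairs-triples}, carried out one order higher. Write $z_v := x_v - \mu$ and $g := h - \E[h]$; since $\mu = \E[x_1] = O\,\E[h]$, we have $\E[z_v \mid h] = Oh - \mu = Og$. The first step is to condition on $h$ and use that $x_1,x_2,x_3,x_4$ (hence $z_1,z_2,z_3,z_4$) are conditionally independent given $h$: the four-view product factorizes into a product of conditional means, so the (arbitrary) noise contributes nothing, and with $u := O^\t\eta$, $u' := O^\t\eta'$,
\begin{equation*}
\E\!\left[z_1 z_2^\t \dotp{\eta,z_3}\dotp{\eta',z_4} \,\middle|\, h\right]
= (Og)(Og)^\t \dotp{\eta,Og}\dotp{\eta',Og}
= O\!\left(g g^\t (u^\t g)(u'^\t g)\right)\!O^\t .
\end{equation*}
Taking expectations over $h$ gives $\E[z_1 z_2^\t\dotp{\eta,z_3}\dotp{\eta',z_4}] = O N O^\t$ with $N := \E[g g^\t (u^\t g)(u'^\t g)]$.

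The second step is to evaluate $N$ entrywise. Because the coordinates $g_i$ are independent with $\E[g_i]=0$, $\E[g_i^2]=\sigma_i^2$, and $\E[g_i^4]=\mu_{i,4}$, the scalar $\E[g_a g_b g_i g_j]$ vanishes unless the four indices split into matching pairs. Enumerating the surviving patterns ($a=b=i=j$; $a=b$ with $i=j\neq a$; and $a\neq b$ with $\{i,j\}=\{a,b\}$) and writing $D := \diag(\sigma_1^2,\dotsc,\sigma_k^2)$, one gets
\begin{equation*}
N = \diag(u)\,\diag(u')\,\diag(\mu_{1,4}-3\sigma_1^4,\dotsc,\mu_{k,4}-3\sigma_k^4)
\;+\;(u^\t D u')\,D + D u u'^\t D + D u' u^\t D .
\end{equation*}

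The third step is to subtract the three correction terms in the definition of $\Quad(\eta,\eta')$. By Lemma~\ref{lemma:pairs-triples}, $\Pairs = O D O^\t$, so $\Pairs\eta = ODu$, $\Pairs\eta' = ODu'$, and $\eta^\t\Pairs\eta' = u^\t D u'$; hence
\begin{equation*}
(\eta^\t\Pairs\eta')\Pairs + (\Pairs\eta)(\Pairs\eta')^\t + (\Pairs\eta')(\Pairs\eta)^\t
= O\!\left((u^\t D u')D + D u u'^\t D + D u' u^\t D\right)\!O^\t ,
\end{equation*}
which is precisely the sum of the last three terms of $O N O^\t$. They cancel, leaving $\Quad(\eta,\eta') = O\,\diag(O^\t\eta)\,\diag(O^\t\eta')\,\diag(\mu_{1,4}-3\sigma_1^4,\dotsc,\mu_{k,4}-3\sigma_k^4)\,O^\t$, as claimed.

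The only real obstacle is the bookkeeping in the second step: one must correctly identify which of the mixed index patterns survive and track how the all-equal term $\mu_{a,4}$ combines with the single-pair terms, since this is exactly where the coefficient $3$ in $\mu_{i,4}-3\sigma_i^4$ is produced (two units of $\sigma_i^4$ come from the correction terms, one from inside $N$ itself); a sign or combinatorial slip here is the natural failure mode. Everything else is routine: conditional independence of the four views makes the noise model irrelevant, and the rest is bilinearity of the moment tensor together with the already-established form of $\Pairs$. It is worth recording that the argument uses only finiteness of the first four moments of $h$ and the defining properties $\E[x_v\mid h]=Oh$ with the $x_v$ conditionally independent given $h$ — nothing about the noise distribution.
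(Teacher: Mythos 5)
Your proposal is correct and follows essentially the same route as the paper: the paper also conditions on $h$ to reduce the observed fourth-order moment to $O\,\E[zz^\t\dotp{u,z}\dotp{u',z}]\,O^\t$ with $z=h-\E[h]$, computes that hidden-state moment entrywise by the same pair-matching enumeration (its Lemma on hidden state moments, which yields exactly your $N$), and cancels the three correction terms using $\Pairs = ODO^\t$. Your accounting of where the coefficient $3$ comes from (one $\sigma_i^4$ from completing the off-diagonal sum, two from the rank-one corrections) matches the paper's computation.
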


The proof of this Lemma is provided in the Appendix.

\begin{proof}[Proof of Theorem~\ref{thm:kurt}]
The distinction from the argument in Theorem~\ref{thm:skew} is that:
\begin{eqnarray*}
W^\t \Quad(W\theta,W\theta')) W
& = &
W^\t  O \diag( O^\t W \theta) \diag( O^\t W \theta')
\diag(\kappa_1,\kappa_2,\dotsc,\kappa_k)  O^\t W\\
& = &
M \diag( M^\t \theta) \diag( M^\t \theta')
\diag(\kappa_1,\kappa_2,\dotsc,\kappa_k)  M^\t\\
\end{eqnarray*}
The remainder of the argument follows that of the proof of Theorem~\ref{thm:skew}.
\end{proof}

\subsection{Latent Dirichlet Allocation}

Now let us turn to the case where $h$ has a Dirichlet density, where,
each $h_i$ is not sampled independently. Even though the distribution
on $h$ is the product of $\hidden_i^{\alpha_1-1},\dotsc
\hidden_i^{\alpha_k-1} $, the $h_i$'s are not independent due to the
constraint that $h$ lives on the simplex. These dependencies suggest
a modification for the moments to be used in ECA, which we now
provide.

Suppose $\alpha_0$ is known. Recall that $\alpha_0 := \alpha_1 +
\alpha_2 + \dotsb + \alpha_k $ (the sum of the ``pseudo-counts'').
Knowledge of $\alpha_0$ is significantly weaker than having full
knowledge of the entire parameter vector $\alpha$. A common practice
is to specify the entire parameter vector $\alpha$ in a homogeneous
manner, with each component being identical (see
\cite{steyvers2006probabilistic}). Here, we need only specify the sum,
which allows for arbitrary inhomogeneity in the prior.

Denote the mean as
\[
\mu = \E[x_1] 
\]
Define a modified second moment as
\[
\Pairs_{\alpha_0} : = \E[x_1 x_2 ^\t] -
\frac{\alpha_0}{\alpha_0+1}
\mu \mu ^\t  \\
\]
and  a modified third moment as
\begin{multline*}
\Triples_{\alpha_0}(\eta)
: = \E[x_1 x_2^\t \dotp{\eta,x_3}]
-\frac{\alpha_0}{\alpha_0+2}
\Bigl(\E[x_1 x_2 ^\t]  \eta \mu^\t
+\mu \eta^\t \E[x_1 x_2 ^\t]  + \dotp{\eta,\mu} \E[x_1 x_2 ^\t]
\Bigr)
\\
+ \frac{2\alpha_0^2}{(\alpha_0+2)(\alpha_0+1)}
\dotp{\eta,\mu} \mu \mu^\t
\end{multline*}

\begin{remark}[Central vs Non-Central Moments]
 In the limit as $\alpha_0 \rightarrow
0$, the Dirichlet model degenerates so that, with probability $1$, only one coordinate of $h$ equals $1$
and the rest are $0$ (\emph{e.g.}, each document is about $1$ topic). Here, we
limit to  non-central moments:
\[
\lim_{\alpha_0 \rightarrow 0} \Pairs_{\alpha_0} = \E[x_1 x_2^\t ]
\quad
\lim_{\alpha_0 \rightarrow 0} \Triples_{\alpha_0}(\eta) = \E[x_1 x_2^\t \dotp{\eta,x_3}]
\]
In the other extreme, the behavior limits to the central moments:
\[
\lim_{\alpha_0 \rightarrow \infty} \Pairs_{\alpha_0} = \E[(x_1-\mu) (x_2-\mu)^\t ]
\quad
\lim_{\alpha_0 \rightarrow \infty} \Triples_{\alpha_0}(\eta) = \E[(x_1-\mu) (x_2-\mu)^\t \dotp{\eta,(x_3-\mu)}]
\]
(to prove the latter claim, expand the central moment and use that, by
exchangeability, $\E[x_1 x_2^\t ] = \E[x_2 x_3^\t ] = \E[x_1 x_3^\t ]$).
\end{remark}

\begin{algorithm} [t]
\caption{ECA for latent Dirichlet allocation} \label{alg:lda}
\begin{algorithmic}
\STATE {\bf Input:}
a vector $\theta \in \R^k$; the moments $\Pairs_{\alpha_0}$ and $\Triples_{\alpha_0}$
\begin{enumerate}
\STATE {\bf Dimensionality Reduction:}
Find a matrix $U\in \R^{d \times k}$ such that
\[
\textrm{Range}(U)=\textrm{Range}(\Pairs_{\alpha_0}).
\]
(See Remark~\ref{remark:range} for a fast procedure.)
\STATE {\bf Whiten:} Find $V\in \R^{k \times k}$ so
$V^\t (U^\t \Pairs_{\alpha_0} U) V $
is the $k\times k$ identity matrix. Set:
\[
W =  UV
\]
\STATE {\bf SVD:}  Let $\Lambda$ be the set of
(left) singular vectors, with \emph{unique} singular
values, of
\[
W^\t \Triples_{\alpha_0} (W \theta) W
\]
\STATE {\bf Reconstruct and Normalize:} Return the set $\wh O$:
\[
\wh O = \left\{ \
\frac{(W^+)^\t \lambda}{\vec{1} ^\t (W^+)^\t \lambda} \
: \lambda \in \Lambda\right\}
\]
where $\vec{1}\in \R^d$ is a vector of all ones and $W^+$ is the pseudo-inverse (see Eq~\ref{eq:pseudo}).
\end{enumerate}
\end{algorithmic}
\end{algorithm}

Our main result here shows that ECA recovers both the topic matrix $O$, up
to a permutation of the columns
(where each column represents a probability distribution over words
for a given topic) and the parameter vector $\alpha$, using only knowledge of
$\alpha_0$ (which, as discussed earlier, is a significantly less
restrictive assumption than tuning the entire parameter
vector). Also, as discussed in Remark~\ref{remark:lda_general}, the
method applies to cases where $x_v$ is not a multinomial
distribution.

\begin{theorem}[Latent Dirichlet Allocation]
\label{thm:lda}
  We have that:
  \begin{itemize}
  \item (No False Positives) For all $\theta \in \R^k$,
    Algorithm~\ref{alg:lda} returns a subset of the columns of $O$.
  \item (Topic Recovery) Suppose $\theta \in \R^k$ is a random vector
    uniformly sampled over the sphere $\sphere^{k-1}$. With
    probability $1$, Algorithm~\ref{alg:lda} returns all columns of
    $O$.
  \item (Parameter Recovery) We have that:
\[
\alpha = \alpha_0 (\alpha_0+1) O^{+} \Pairs_{\alpha_0} (O^+)^\t \, \vec{1}
\]
where $\vec{1}\in \R^k$ is a vector of all ones.
  \end{itemize}
\end{theorem}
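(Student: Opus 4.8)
The plan is to first establish the Dirichlet analogue of Lemma~\ref{lemma:pairs-triples}:
\[
\Pairs_{\alpha_0} \;=\; O\,\diag\!\Bigl(\tfrac{\alpha_1}{\alpha_0(\alpha_0+1)},\dotsc,\tfrac{\alpha_k}{\alpha_0(\alpha_0+1)}\Bigr)\, O^\t ,
\]
\[
\Triples_{\alpha_0}(\eta) \;=\; O\,\diag(O^\t\eta)\,\diag\!\Bigl(\tfrac{2\alpha_1}{\alpha_0(\alpha_0+1)(\alpha_0+2)},\dotsc,\tfrac{2\alpha_k}{\alpha_0(\alpha_0+1)(\alpha_0+2)}\Bigr)\, O^\t .
\]
These are the exact Dirichlet replacements for the two identities of Lemma~\ref{lemma:pairs-triples}; once they are proved, the first two bullets follow the proof of Theorem~\ref{thm:skew} essentially word for word, and the third bullet is a one-line consequence of the first identity. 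So the substance of the theorem is the moment computation, and I expect that to be the main obstacle.

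To prove the identities I would use conditional independence of $x_1,x_2,x_3$ given $h$ together with $\E[x_v\mid h]=Oh$ to pass to moments of a Dirichlet vector: $\E[x_1x_2^\t]=O\,\E[hh^\t]\,O^\t$, $\mu=\tfrac{1}{\alpha_0}O\alpha$, and $\E[x_1x_2^\t\dotp{\eta,x_3}]=O\,\E[(c^\t h)\,hh^\t]\,O^\t$ with $c:=O^\t\eta$. Using the Dirichlet mixed-moment formula $\E[\prod_i h_i^{m_i}]=\prod_i(\alpha_i)_{m_i}/(\alpha_0)_{|m|}$ (with $(x)_n:=x(x+1)\cdots(x+n-1)$) and separating cases by which indices coincide, one gets $\E[hh^\t]=\tfrac{1}{(\alpha_0)_2}(\diag(\alpha)+\alpha\alpha^\t)$ and $\E[(c^\t h)hh^\t]=\tfrac{1}{(\alpha_0)_3}\bigl((c^\t\alpha)(\diag(\alpha)+\alpha\alpha^\t)+(\alpha\circ c)\alpha^\t+\alpha(\alpha\circ c)^\t+2\diag(\alpha\circ c)\bigr)$, where $\circ$ is the entrywise product. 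Substituting the first into $\Pairs_{\alpha_0}=\E[x_1x_2^\t]-\tfrac{\alpha_0}{\alpha_0+1}\mu\mu^\t$ cancels the $O\alpha\alpha^\t O^\t$ term and gives the first identity. For the third moment, rewriting $O\diag(\alpha)O^\t=(\alpha_0)_2\E[x_1x_2^\t]-\alpha_0^2\mu\mu^\t$ and $O\alpha=\alpha_0\mu$ turns the conjugated pieces into multiples of $\dotp{\eta,\mu}\mu\mu^\t$, $\dotp{\eta,\mu}\E[x_1x_2^\t]$, $\E[x_1x_2^\t]\eta\mu^\t$, $\mu\eta^\t\E[x_1x_2^\t]$ and the wanted $O\diag(O^\t\eta)\diag(\alpha)O^\t$; the Pochhammer ratios collapse to exactly the coefficients $\tfrac{\alpha_0}{\alpha_0+2}$ and $\tfrac{2\alpha_0^2}{(\alpha_0+1)(\alpha_0+2)}$ appearing in the definition of $\Triples_{\alpha_0}$, so the correction terms annihilate all four contaminating pieces and leave the second identity. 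This cancellation bookkeeping is the delicate part; everything else is a transcription of earlier arguments.

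Granting the lemma, the rest proceeds as in Theorem~\ref{thm:skew}. Whitening gives $\I=W^\t\Pairs_{\alpha_0}W=(W^\t O D^{1/2})(W^\t O D^{1/2})^\t$ with $D:=\diag(\alpha_i/(\alpha_0(\alpha_0+1)))$ (well-defined since each $\alpha_i>0$ forces $\range(\Pairs_{\alpha_0})=\range(O)$), so $M:=W^\t O D^{1/2}$ is $k\times k$ orthogonal; using $W^\t O=MD^{-1/2}$ one obtains $W^\t\Triples_{\alpha_0}(W\theta)W=M\,\Sigma_\theta\,M^\t$ with $\Sigma_\theta$ diagonal, its $i$-th entry equal to $\beta_i(M^\t\theta)_i$ for fixed nonzero constants $\beta_i$. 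Hence this is an SVD with orthogonal left factor $M$: any left singular vector with a \emph{unique} singular value must be $\pm Me_i$ for some $i$ (no false positives), and for $\theta$ uniform on $\sphere^{k-1}$ the vector $M^\t\theta$ is uniform on the sphere, so with probability $1$ all of $\beta_i(M^\t\theta)_i$ are nonzero with distinct absolute values and every column is found. For reconstruction, $(W^+)^\t(\pm Me_i)=\pm\sqrt{D_{ii}}\,(W^+)^\t W^\t O_i=\pm\sqrt{D_{ii}}\,O_i$, because $\range(W)=\range(O)$ makes $(W^+)^\t W^\t=W(W^\t W)^{-1}W^\t$ act as the identity on $O_i$; since each column $O_i$ is a probability vector with $\vec{1}^\t O_i=1$, dividing by $\vec{1}^\t(W^+)^\t\lambda$ fixes the positive scale and cancels the sign at once, returning $O_i$ exactly (which is why, unlike Theorems~\ref{thm:skew}--\ref{thm:kurt}, there is no canonical-form caveat). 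Finally, from the first identity $O^+\Pairs_{\alpha_0}(O^+)^\t=O^+O\,D\,O^\t(O^+)^\t=D$ since $O^+O=\I=O^\t(O^+)^\t$, so applying this to $\vec{1}\in\R^k$ and multiplying by $\alpha_0(\alpha_0+1)$ recovers $\alpha$, giving the last bullet.
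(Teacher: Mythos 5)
Your proposal is correct and follows essentially the same route as the paper: you first establish the factorizations $\Pairs_{\alpha_0}=\frac{1}{\alpha_0(\alpha_0+1)}O\diag(\alpha)O^\t$ and $\Triples_{\alpha_0}(\eta)=\frac{2}{\alpha_0(\alpha_0+1)(\alpha_0+2)}O\diag(O^\t\eta)\diag(\alpha)O^\t$ via the Dirichlet mixed-moment formulas and the cancellation of the correction terms (this is exactly the paper's Lemma~\ref{lemma:lda_pairs_triples}), and then reduce to the argument of Theorem~\ref{thm:skew} after rescaling the columns of $O$ by $\sqrt{\alpha_i/(\alpha_0(\alpha_0+1))}$, with the $\vec{1}^\t$-normalization fixing scale and sign and the $\alpha$-recovery formula following directly from the $\Pairs_{\alpha_0}$ identity. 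No gaps.
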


The proof is a consequence of the following lemma:

\begin{lemma} \label{lemma:lda_pairs_triples}
We have:
\[
\Pairs_{\alpha_0} = \frac1{(\alpha_0+1)\alpha_0} O \diag(\alpha) O^\t
\]
and
\[
\Triples_{\alpha_0}(\eta) = \frac{2}{(\alpha_0+2)(\alpha_0+1)\alpha_0} O
\diag(O^\t \eta) \diag(\alpha) O^\t
\]
\end{lemma}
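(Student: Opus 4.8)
The plan is to compute $\Pairs_{\alpha_0}$ and $\Triples_{\alpha_0}(\eta)$ by first conditioning on the hidden vector $h$ and then taking the expectation over the Dirichlet prior. Because $x_1,x_2,x_3$ are conditionally i.i.d.\ given $h$ with $\E[x_v\mid h]=Oh$, we immediately get $\E[x_1 x_2^\t] = \E[\E[x_1\mid h]\E[x_2\mid h]^\t] = O\,\E[hh^\t]\,O^\t$, and similarly $\E[x_1 x_2^\t\dotp{\eta,x_3}] = O\,\E[hh^\t\dotp{O^\t\eta,h}]\,O^\t$ and $\mu = O\,\E[h]$. So everything reduces to computing the raw moments $\E[h]$, $\E[hh^\t]$, and $\E[h_ih_jh_l]$ of a Dirichlet random vector with parameter $\alpha$. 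These are standard: writing $a_i=\alpha_i/\alpha_0$ for the mean, one has $\E[h_i]=a_i$; $\E[h_i^2] = a_i(\alpha_i+1)/(\alpha_0+1)$ and $\E[h_ih_j]=a_ia_j\alpha_0/(\alpha_0+1)$ for $i\ne j$; and the third moments $\E[h_i^3],\E[h_i^2h_j],\E[h_ih_jh_l]$ have analogous closed forms with denominator $(\alpha_0+2)(\alpha_0+1)$. I would record these as a preliminary computation (or cite the Dirichlet moment formulas).

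Next I would assemble the matrix identities. For $\Pairs_{\alpha_0}$: substituting $\E[x_1x_2^\t]=O\,\E[hh^\t]\,O^\t$ and $\mu\mu^\t = O\,\E[h]\E[h]^\t\,O^\t$ into the definition, the claim becomes the matrix identity $\E[hh^\t] - \frac{\alpha_0}{\alpha_0+1}\E[h]\E[h]^\t = \frac{1}{(\alpha_0+1)\alpha_0}\diag(\alpha)$. This I would verify entrywise: the diagonal entry is $\frac{a_i(\alpha_i+1)}{\alpha_0+1} - \frac{\alpha_0}{\alpha_0+1}a_i^2 = \frac{a_i}{\alpha_0+1}(\alpha_i+1-\alpha_0 a_i) = \frac{a_i}{\alpha_0+1} = \frac{\alpha_i}{\alpha_0(\alpha_0+1)}$, using $\alpha_0 a_i=\alpha_i$; the off-diagonal entry is $\frac{\alpha_0}{\alpha_0+1}a_ia_j - \frac{\alpha_0}{\alpha_0+1}a_ia_j = 0$. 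Crucially, the cross terms between $O$ and $\mu$ are handled automatically because $\mu$ itself factors through $O$, so no separate centering argument is needed — that is the point of the particular coefficient $\alpha_0/(\alpha_0+1)$.

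For $\Triples_{\alpha_0}(\eta)$ the bookkeeping is heavier but structurally identical. Every term in the definition — $\E[x_1x_2^\t\dotp{\eta,x_3}]$, the three terms $\E[x_1x_2^\t]\eta\mu^\t$, $\mu\eta^\t\E[x_1x_2^\t]$, $\dotp{\eta,\mu}\E[x_1x_2^\t]$, and $\dotp{\eta,\mu}\mu\mu^\t$ — factors as $O(\cdot)O^\t$ once we substitute the conditional moments, where the inner factor is a polynomial in the Dirichlet raw moments and in $O^\t\eta$. So the claim reduces to one matrix identity in the variable $\xi := O^\t\eta\in\R^k$, namely that a certain combination of $\E[hh^\t\dotp{\xi,h}]$, $\E[hh^\t]$-terms weighted by $\xi$ and $\E[h]$, and $\E[h]\E[h]^\t$-terms equals $\frac{2}{(\alpha_0+2)(\alpha_0+1)\alpha_0}\diag(\xi)\diag(\alpha)$. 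I would again check this entrywise, separating the $(i,i)$ case (which involves $\E[h_i^3]$ and $\E[h_i^2h_j]$) from the $(i,j)$, $i\ne j$, case (involving $\E[h_i^2h_j]$, $\E[h_ih_jh_l]$); the coefficients $\frac{\alpha_0}{\alpha_0+2}$ and $\frac{2\alpha_0^2}{(\alpha_0+2)(\alpha_0+1)}$ in the definition are exactly what make all the ``lower-order'' contributions cancel, leaving only the diagonal piece.

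The main obstacle is the third-moment calculation: keeping track of which Dirichlet raw moment appears in each of the three index patterns ($i=j=l$, two equal, all distinct) and confirming that the prescribed correction coefficients produce exact cancellation of every non-diagonal contribution. This is pure algebra with no conceptual difficulty, but it is error-prone; I would organize it by collecting, for fixed $(i,j)$, the coefficient of $\xi_m$ for each $m$ and showing it vanishes unless $m=i=j$. Once the two matrix identities $\Pairs_{\alpha_0}=\frac{1}{(\alpha_0+1)\alpha_0}O\diag(\alpha)O^\t$ and $\Triples_{\alpha_0}(\eta)=\frac{2}{(\alpha_0+2)(\alpha_0+1)\alpha_0}O\diag(O^\t\eta)\diag(\alpha)O^\t$ are established, the lemma is proved; the theorem then follows by the same whitening/SVD argument as in Theorem~\ref{thm:skew}, with $\diag(\alpha)$ playing the role of $\diag(\sigma_i^2)$ and the skewnesses replaced by the (generically distinct, nonzero) entries of $\diag(O^\t\theta)\diag(\alpha)$ after whitening, plus the normalization step using $\vec 1^\t O_i = 1$ for the recovery of $O$ and the displayed formula for $\alpha$.
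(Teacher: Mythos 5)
Your proposal is correct and follows essentially the same route as the paper: condition on $h$ to factor everything as $O(\cdot)O^\t$, compute the Dirichlet raw moments from the normalizing constant, and verify that the correction coefficients $\alpha_0/(\alpha_0+1)$, $\alpha_0/(\alpha_0+2)$, and $2\alpha_0^2/((\alpha_0+2)(\alpha_0+1))$ cancel everything except the diagonal piece. The only cosmetic difference is that the paper first records the closed-form matrix identity for $\E[(h\otimes h)\dotp{v,h}]$ and performs the cancellation at the matrix level, whereas you propose the equivalent entrywise check organized by index patterns.
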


The proof of this Lemma is provided in the Appendix.

\begin{proof}[Proof of Theorem~\ref{thm:lda}]
Note that with the following rescaling of columns:
\[
\tilde O= \frac1{\sqrt{(\alpha_0+1)\alpha_0}} O
\diag(\sqrt{\alpha_1},\sqrt{\alpha_2},\dotsc,\sqrt{\alpha_k})
\]
we have that $h$ is in canonical form (\emph{i.e.}, the variance of each $h_i$
is 1). The remainder of the proof is identical to that of
Theorem~\ref{thm:skew}. The only modification is that we simply
normalize the output of Algorithm~\ref{alg:skew}. Finally, observe
that claim for estimating $\alpha$ holds due to the functional
form of $\Pairs_{\alpha_0}$.
\end{proof}

\begin{remark}[Limiting behaviors] ECA seamlessly blends between the single
  topic model $(\alpha_0\to 0)$ of \cite{AHKparams} and the skewness based ECA,
  Algorithm~\ref{alg:skew} $(\alpha_0\to \infty)$.  In the single topic case,
  \cite{AHKparams} provide eigenvector based algorithms. This
  work shows that two SVDs suffice for parameter recovery.
\end{remark}

\begin{remark}[Skewed and Kurtotic ECA for LDA]
  We conjecture that the fourth
  moments can be utilized in the Dirichlet case such that the
  resulting algorithm limits to the kurtotic based ECA, when
  $\alpha_0\rightarrow \infty$.   Furthermore, the mixture of
  Poissions model discussed in Section~\ref{sec:models} provides a
  natural alternative to the LDA model in this regime.
\end{remark}

\begin{remark}[The Dirichlet model, more generally]\label{remark:lda_general} It is not necessary that we have a multinomial
  distribution on $x_v$, so long as $\E[x_v|h] = Oh$. In some
  applications, it might be natural for the observations to come from
  a different distribution (say $x_v$ may represent pixel intensities
  in an image or some other real valued quantity). For this case,
  where $h$ has a Dirichlet prior (and where $x_v$ may not be
  multinomial), ECA still correctly recovers the columns of
  $O$. Furthermore, we need not normalize; the set $ \left\{ (W^+)^\t
    \lambda : \lambda \in \Lambda\right\} $ recovers $O$ in a canonical
  form.
\end{remark}

\subsection{The Multi-View Extension}

\begin{algorithm} [t]
\caption{ECA; the multi-view case} \label{alg:multi}
\begin{algorithmic}
\STATE {\bf Input:}
vector $\theta \in \R^k$; the moments $\Pairs_{v,v'}$ and $\Triples_{132}(\eta)$

\begin{enumerate}
\STATE {\bf Project views $1$ and $2$:}
Find matrices $A\in \R^{k\times d_1}$ and $B\in\R^{k\times d_2}$  such
that $A \Pairs_{12}B^\t$ is invertible. Set:
\begin{eqnarray*}
\wt \Pairs_{12} &:=& A\Pairs_{12}B^\t \\
\wt \Pairs_{31} &:=&\Pairs_{31}A^\t\\
\wt \Pairs_{32} &:=&\Pairs_{32} B^\t\\
\wt \Triples_{132}(\eta) & := & A \Triples_{132}(\eta) B^\t
\end{eqnarray*}
(See Remark~\ref{remark:AB} for a fast procedure.)

\STATE {\bf Symmetrize:} Reduce to a single view:
\begin{eqnarray*}
\Pairs_3 & := & \wt \Pairs_{31} (\wt \Pairs_{12}^\t)^{-1} \wt \Pairs_{23}\\
\Triples_3 (\eta) & := & \wt \Pairs_{32} (\wt \Pairs_{12})^{-1}
\wt \Triples_{132}(\eta)
(\wt \Pairs_{12})^{-1} \wt \Pairs_{13}
\end{eqnarray*}

\STATE {\bf Estimate $O_3$ with ECA:}  Call Algorithm~\ref{alg:skew}, with $\theta$, $\Pairs_3$, and $\Triples_3(\eta)$.
\end{enumerate}
\end{algorithmic}
\end{algorithm}

Rather than $O$ being identical for each $x_v$, suppose for each $v \in
\{1,2,3,4,\dotsc\}$ there exists an $O_v\in \R^{d_v\times k}$ such that
\[
\E[x_v|h] = O_v h
\]
For $v \in \{1,2,3\}$, define
\begin{eqnarray*}
\Pairs_{v,v'} & := & \E[(x_v-\mu) (x_v'-\mu)^\t] \\
\Triples_{132}(\eta)  & :=  & \E[(x_1-\mu) (x_2-\mu)^\t \dotp{\eta,x_3-\mu}]
\end{eqnarray*}
We use the notation $132$ to stress that $\Triples_{132}(\eta)$ is a
$d_1 \times d_2$ sized matrix.

\begin{lemma} \label{lemma:multi}
For $v \in \{1,2,3\}$,
\begin{eqnarray*}
\Pairs_{v,v'} & = & O_v \diag(\sigma_1^2,\sigma_2^2,\dotsc,\sigma_k^2) O_v'^\t \\
\Triples_{132}(\eta)  & = & O_1 \diag(O_3^\t\eta)
\diag(\mu_{1,3},\mu_{2,3},\dotsc,\mu_{k,3}) O_2^\t\\
\end{eqnarray*}
\end{lemma}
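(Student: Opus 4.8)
The plan is to mimic the proof of Lemma~\ref{lemma:pairs-triples} essentially verbatim, the only wrinkle being that distinct factor-loading matrices $O_v$ now attach to distinct views. Write $\bar h := \E[h]$, so that $\E[x_v] = O_v \bar h$ is the per-view mean abbreviated as $\mu$ in the statement, and set $z := h - \bar h$. As in the independent-factor model of Section~\ref{sec:models}, the coordinates $z_1,\dotsc,z_k$ are independent with $\E[z_i]=0$, $\E[z_i^2] = \sigma_i^2$, and $\E[z_i^3] = \mu_{i,3}$. First I would establish the pairwise identity: conditioning on $h$ and using that $x_v$ and $x_{v'}$ are conditionally independent given $h$ together with the linearity assumption $\E[x_v \mid h] = O_v h$,
\[
\Pairs_{v,v'} = \E\bigl[\E[x_v - O_v\bar h \mid h]\,\E[x_{v'} - O_{v'}\bar h \mid h]^\t\bigr] = O_v\,\E[z z^\t]\,O_{v'}^\t,
\]
and since the $z_i$ are independent with mean zero, $\E[z z^\t] = \diag(\sigma_1^2,\dotsc,\sigma_k^2)$, which is the first claim.

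Next I would treat the threeway term the same way: conditioning on $h$ and again using conditional independence of $x_1,x_2,x_3$ given $h$ to pull each of the three factors through the conditional expectation gives
\[
\Triples_{132}(\eta) = O_1\,\E\bigl[z z^\t \dotp{O_3^\t\eta,\, z}\bigr]\,O_2^\t.
\]
It then remains to evaluate the inner $k\times k$ matrix. Its $(a,b)$ entry is $\sum_{i} (O_3^\t\eta)_i\,\E[z_a z_b z_i]$, and by independence of the coordinates together with $\E[z_i]=0$ the moment $\E[z_a z_b z_i]$ vanishes unless $a=b=i$, in which case it equals $\mu_{a,3}$. Hence $\E[z z^\t \dotp{O_3^\t\eta, z}] = \diag(O_3^\t\eta)\diag(\mu_{1,3},\dotsc,\mu_{k,3})$, which yields the second identity.

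There is no real obstacle here: the argument is a routine conditioning computation. The one point to be careful about is that the symbol $\mu$ in the definitions of $\Pairs_{v,v'}$ and $\Triples_{132}(\eta)$ now denotes the view-dependent mean $O_v\bar h$, so the centering must be performed consistently within each view before taking the (conditional) expectation; once that is done, conditional independence of the views reduces everything to moments of $z$, and independence of the latent coordinates $h_i$ is exactly what collapses the second- and third-order moment tensors of $z$ to the diagonal form stated.
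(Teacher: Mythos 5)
Your proof is correct and follows exactly the route the paper intends: the paper simply states that the proof of Lemma~\ref{lemma:multi} is analogous to those in Appendix~\ref{sec:analysis}, i.e., condition on $h$, use conditional independence and linearity to reduce to moments of $z = h - \E[h]$, and invoke independence of the latent coordinates to diagonalize (as in Lemma~\ref{lemma:mixed-moments}). Your remark about the view-dependent centering $\mu_v = O_v\E[h]$ is a correct and worthwhile clarification of the paper's slight abuse of notation.
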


The proof for Lemma~\ref{lemma:multi}  is analogous to those in Appendix~\ref{sec:analysis}.

These functional forms make deriving an SVD based algorithm more
subtle. Using the methods in \cite{AHKparams}, eigenvector based
method are straightforward to derive. However, SVD based algorithms
are preferred due to their greater simplicity. The following lemma
shows how the symmetrization step in the algorithm makes this possible.

\begin{lemma} \label{corollary:multi}
For $\Pairs_3$ and $\Triples_3(\eta)$ defined in
Algorithm~\ref{alg:multi}, we have:
\begin{eqnarray*}
\Pairs_3 & = & O_3 \diag(\sigma_1^2,\sigma_2^2,\dotsc,\sigma_k^2)
O_3^\t \\
\Triples_3(\eta)  & = & O_3 \diag(O_3^\t\eta)
\diag(\mu_{1,3},\mu_{2,3},\dotsc,\mu_{k,3}) O_3^\t
\end{eqnarray*}
\end{lemma}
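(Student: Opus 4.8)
The plan is to substitute the explicit factorizations of Lemma~\ref{lemma:multi} into the quantities built in Algorithm~\ref{alg:multi} and check that the ``nuisance'' factors introduced by the projections $A,B$ cancel telescopically, leaving exactly the single-view diagonal form in $O_3$. Write $D := \diag(\sigma_1^2,\dotsc,\sigma_k^2)$ and $M_3 := \diag(\mu_{1,3},\dotsc,\mu_{k,3})$, and set $P := A O_1$ and $Q := B O_2$, both $k \times k$. Then Lemma~\ref{lemma:multi} gives $\wt\Pairs_{12} = P D Q^\t$, $\wt\Pairs_{31} = O_3 D P^\t$, $\wt\Pairs_{32} = O_3 D Q^\t$, and $\wt\Triples_{132}(\eta) = P \diag(O_3^\t\eta) M_3 Q^\t$; using $\Pairs_{v,v'}^\t = \Pairs_{v',v}$ (the views are conditionally independent given $h$), the transposed quantities appearing in the symmetrization step are $\wt\Pairs_{13} = P D O_3^\t$ and $\wt\Pairs_{23} = Q D O_3^\t$.

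First I would record that $P$ and $Q$ are invertible: by the defining property of $A,B$ the matrix $\wt\Pairs_{12} = P D Q^\t$ is invertible, and $D$ is invertible since each $\sigma_i^2 > 0$, so both square factors $P$ and $Q$ are nonsingular; this is the only ingredient needed beyond Lemma~\ref{lemma:multi}. For the second moment, $(\wt\Pairs_{12}^\t)^{-1} = (Q D P^\t)^{-1} = (P^\t)^{-1} D^{-1} Q^{-1}$, so
\[
\Pairs_3 = (O_3 D P^\t)\,(P^\t)^{-1} D^{-1} Q^{-1}\,(Q D O_3^\t) = O_3 D O_3^\t ,
\]
the claimed form. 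For the third moment, $(\wt\Pairs_{12})^{-1} = (Q^\t)^{-1} D^{-1} P^{-1}$, and the five-fold product
\[
\Triples_3(\eta) = (O_3 D Q^\t)\,(Q^\t)^{-1} D^{-1} P^{-1}\,\bigl(P \diag(O_3^\t\eta) M_3 Q^\t\bigr)\,(Q^\t)^{-1} D^{-1} P^{-1}\,(P D O_3^\t)
\]
collapses, since each of $P$, $Q$, $Q^\t$, $D$ meets its inverse, to $O_3 \diag(O_3^\t\eta) M_3 O_3^\t$, which is the asserted expression.

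There is no genuine obstacle: the symmetrization in Algorithm~\ref{alg:multi} is engineered precisely so that the factors $P = A O_1$ and $Q = B O_2$ telescope against the inserted copies of $\wt\Pairs_{12}^{-1}$ and $(\wt\Pairs_{12}^\t)^{-1}$. The only points requiring care are (i) recording the invertibility of $P$ and $Q$, and (ii) keeping the transpose bookkeeping straight — in particular reading $\wt\Pairs_{13},\wt\Pairs_{23}$ in the algorithm as $\wt\Pairs_{31}^\t,\wt\Pairs_{32}^\t$ (equivalently, applying Lemma~\ref{lemma:multi} to the reversed view pairs). Once the lemma is in hand, recovery of $O_3$ in the multi-view model follows immediately by feeding $\Pairs_3$ and $\Triples_3(\eta)$ into Theorem~\ref{thm:skew} through Algorithm~\ref{alg:skew}, exactly as the last step of Algorithm~\ref{alg:multi} prescribes.
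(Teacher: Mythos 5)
Your proof is correct and follows essentially the same route as the paper: substitute the factorizations from Lemma~\ref{lemma:multi} into the symmetrized quantities and let the $AO_1$, $BO_2$ factors telescope against the inserted inverses of $\wt\Pairs_{12}$. The only (harmless) difference is that the paper first reduces to canonical form ($\sigma_i^2=1$) and treats the third-moment identity as ``analogous,'' whereas you carry the diagonal variance matrix explicitly and write out both cancellations in full.
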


\begin{proof}
  Without loss of generality, suppose $O_v$ are in canonical form (for
  each $i$, $\sigma_i^2=1$). Hence, $A \Pairs_{12}B^\t = A O_1 (B
  O_2)^\t $. Hence, $A O_1$ and $B O_2$ are invertible.  Note that:
\[
\Pairs_{31} A^\t (B \Pairs_{21}A^\t)^{-1} B \Pairs_{23}
= O_3 O_1^\t A^\t (B O_2 O_1^\t A^\t)^{-1} B O_2 O_3^\t = O_3 O_3^\t
\]
which proves the first claim. The proof of the second claim is analogous.
\end{proof}

Again, we say that all $O_v$ are in a canonical form if, for
each $i$, $\sigma_i^2=1$.

\begin{theorem}[The multi-view case]
\label{thm:multi}
  We have:
  \begin{itemize}
  \item (No False Positives) For all $\theta\in \R^k$, Algorithm~\ref{alg:multi}
    returns a subset of $O_3$, in a canonical form.
  \item (Exact Recovery) Assume that $\gamma_i$ is nonzero for each
    $i$.  Suppose $\theta \in \R^k$ is a random vector uniformly
    sampled over the sphere $\sphere^{k-1}$. With probability $1$,
    Algorithm~\ref{alg:multi} returns all columns of $O_3$, in a
    canonical form.
\end{itemize}
\end{theorem}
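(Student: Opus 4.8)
The plan is to reduce the multi-view problem, via the symmetrization step of Algorithm~\ref{alg:multi}, to the single-view skewed-factor problem already handled by Theorem~\ref{thm:skew}. First I would check that Step~1 of Algorithm~\ref{alg:multi} is well posed: by Lemma~\ref{lemma:multi}, $\Pairs_{12} = O_1 \diag(\sigma_1^2,\dotsc,\sigma_k^2) O_2^\t$, which has rank exactly $k$ since $O_1,O_2$ are full rank (Assumption) and the $\sigma_i^2$ are strictly positive. Hence matrices $A\in\R^{k\times d_1}$, $B\in\R^{k\times d_2}$ with $A\Pairs_{12}B^\t$ invertible exist (Remark~\ref{remark:AB} gives that random Gaussian $A,B$ work almost surely). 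Writing $A\Pairs_{12}B^\t = (AO_1)\diag(\sigma_1^2,\dotsc,\sigma_k^2)(BO_2)^\t$, invertibility of the product forces $AO_1$ and $BO_2$ to be invertible $k\times k$ matrices — precisely what the proof of Lemma~\ref{corollary:multi} requires.

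Next, Lemma~\ref{corollary:multi} establishes that the symmetrized moments satisfy $\Pairs_3 = O_3\diag(\sigma_1^2,\dotsc,\sigma_k^2)O_3^\t$ and $\Triples_3(\eta) = O_3 \diag(O_3^\t\eta)\diag(\mu_{1,3},\dotsc,\mu_{k,3})O_3^\t$, which is exactly the pair of functional forms given by Lemma~\ref{lemma:pairs-triples} for the single-view case, with $O$ replaced by $O_3$. Note that $\Triples_3(\eta)$ is linear in $\eta$ (it is built from $\wt\Triples_{132}(\eta)=A\Triples_{132}(\eta)B^\t$, and $\Triples_{132}$ is linear in $\eta$), so it is legitimate to feed $\Pairs_3$ and $\Triples_3(\cdot)$ into Algorithm~\ref{alg:skew} in place of $\Pairs$ and $\Triples(\cdot)$: the whitening step acts on the $k\times k$ matrix $U^\t\Pairs_3 U$, which is full rank because $O_3$ is, and the SVD step evaluates $W^\t\Triples_3(W\theta)W$ exactly as in Algorithm~\ref{alg:skew}.

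Finally I would observe that the proof of Theorem~\ref{thm:skew} is entirely ``black box'' in these functional forms: it uses only that $\Pairs$ factors as $O\diag(\sigma_1^2,\dotsc,\sigma_k^2)O^\t$ with $O$ full rank and that $\Triples(\eta)$ factors as $O\diag(O^\t\eta)\diag(\mu_{1,3},\dotsc,\mu_{k,3})O^\t$; nothing about the provenance of these matrices as expectations of exchangeable variables enters. Applying that argument verbatim with $O:=O_3$ and the skewnesses $\gamma_i=\mu_{i,3}/\sigma_i^3$ of the same hidden variables: $M:=W^\t O_3$ is orthogonal, $W^\t\Triples_3(W\theta)W = M\diag(M^\t\theta)\diag(\gamma_1,\dotsc,\gamma_k)M^\t$ is an SVD, so every returned singular vector equals some $Me_i$ and reconstruction returns columns of $O_3$ in canonical form, giving ``No False Positives'' for all $\theta$. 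For ``Exact Recovery'', since $A,B$ are fixed before $\theta$ is drawn, $\theta$ is still uniform on $\sphere^{k-1}$ and independent of $M$, so $M^\t\theta$ is uniform on the sphere; when every $\gamma_i\neq 0$ the entries of $\diag(\gamma_1,\dotsc,\gamma_k)M^\t\theta$ are distinct and nonzero with probability $1$, hence all singular values are unique and every column of $O_3$ is recovered.

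I expect no deep obstacle: the nontrivial content is already packaged in Lemma~\ref{corollary:multi} and Theorem~\ref{thm:skew}. The only point needing care is confirming that the proof of Theorem~\ref{thm:skew} never invokes anything beyond the algebraic forms of its inputs — so that it applies to the synthesized matrices $\Pairs_3,\Triples_3$ rather than genuine moments — and that the linearity of $\Triples_3$ in $\eta$ makes the call $W^\t\Triples_3(W\theta)W$ meaningful; once these are checked, the theorem is immediate.
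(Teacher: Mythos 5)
Your proposal is correct and follows exactly the paper's route: the paper proves Theorem~\ref{thm:multi} by citing Lemma~\ref{corollary:multi} for the symmetrized functional forms of $\Pairs_3$ and $\Triples_3(\eta)$ and then declaring the rest ``identical to the proof of Theorem~\ref{thm:skew}.'' You have simply made explicit the details the paper leaves implicit (well-posedness of the projection step, linearity in $\eta$, and the black-box nature of the argument in Theorem~\ref{thm:skew}), all of which check out.
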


\begin{proof}[Proof of Theorem~\ref{thm:multi}]
The proof is identical to that of Theorem~\ref{thm:skew}.
\end{proof}

\begin{remark}[Simpler algorithms for HMMs]
  \cite{MR06,AHKparams} provide
  eigenvector based algorithms for HMM parameter estimation. These
  results show that we can achieve parameter estimation with only two
  SVDs (see~\cite{AHKparams} for the reduction of an HMM to the
  multi-view setting). The key idea is the symmetrization that reduces
  the problem to a single view.
\end{remark}

\begin{remark}[Finding $A$ and $B$]\label{remark:AB}
   Suppose $\Theta,\Theta' \in \R^{d \times k}$
  are random matrices with entries sampled independently from a
  standard normal. Set $A =\Pairs_{1,2} \Theta$ and $B =\Pairs_{2,1}
  \Theta'$. With probability $1$,
  $\textrm{Range}(A)=\textrm{Range}(O_1)$ and
  $\textrm{Range}(B)=\textrm{Range}(O_2)$, and the invertibility
  condition will be satisfied (provided that $O_1$ and $O_2$ are full
  rank).
\end{remark}

\section{Sample Complexity}\label{sec:samp_comp}


\begin{algorithm} [t]
\caption{Empirical ECA for LDA} \label{alg:LDA_empirical}
\begin{algorithmic}
\STATE {\bf Input:}
an integer $k$; an integer $N$; vector $\theta \in \R^k$; the sum $\alpha_0$

\begin{enumerate}
\STATE {\bf Find Empirical Averages:} With $N$ independent samples (of documents),
compute the empirical first, second, and third moments. Then compute
the empirical moments $\wh \Pairsa$ and $\wh \Triplesa(\eta)$.
\STATE {\bf Whiten:} Let $\h W  = A \Sigma^{-1/2} \in \R^{d\times k}$ where
$A \in \R^{d\times k}$ is the matrix of
the orthonormal left singular vectors of $\wh \Pairsa$, corresponding
to the largest $k$ singular values, and $\Sigma \in \R^{k\times k}$ is the
corresponding diagonal matrix of the $k$ largest singular values.
\STATE {\bf SVD:}  Let $\{\hat v_1,\hat v_2,\ldots \hat v_k\}$ be the set of
(left) singular vectors of
\[
\h W^\t \wh\Triplesa(\h W \theta) \h W
\]
\STATE {\bf Reconstruct and Scale:} Return the set $\{\h O_1,\h
O_2,\ldots \h O_k\}$ where
\begin{eqnarray*}
\h Z_i &= &\frac{2}
{(\alpha_0+2) (\h W \hat v_i )^\t 
\wh\Triplesa(\h W \h v_i) \h W \hat v_i}\\
\\
\h O_i & = & \frac{1}{\h Z_i} \ (\h W^+)^\t \hat v_i
\end{eqnarray*}
\\(See Remark~\ref{remark:clip} for a procedure which explicitly
normalizes $\h O_i$.)
\end{enumerate}
\end{algorithmic}
\end{algorithm}

Let us now provide an efficient algorithm utilizing samples from documents,
rather than exact statistics. The following theorem shows that the
empirical version of ECA returns accurate estimates of the
topics. Furthermore, each run of the algorithm succeeds with
probability greater than $3/4$ so the algorithm may be repeatedly
run. Primarily for theoretical analysis, Algorithm~\ref{alg:LDA_empirical}
uses a rescaling procedure (rather than explicitly normalizing the
topics, which would involve some thresholding procedure; see
Remark~\ref{remark:clip}). 

\begin{theorem}[Sample Complexity for LDA]
\label{thm:samples}
  Fix $\delta \in (0,1)$.  Let $\pmin = \min_i
  \frac{\alpha_i}{\alpha_0}$ and let $\sigma_k(O)$ denote the smallest
  (non-zero) singular value of $O$.  Suppose that we obtain 
  $N\geq
  \left( \frac{(\alpha_0+1 )(6 + 6\sqrt{\ln(3/\delta)})}{\pmin
      \sigma_k(O)^2}\right)^2$ 
  independent samples of $x_1, x_2, x_3$
  in the LDA model. With probability greater than $1-\delta$, the
  following holds: for $\theta \in \R^k$ sampled uniformly sampled
  over the sphere $\sphere^{k-1}$, with probability greater than
  $3/4$, Algorithm~\ref{alg:LDA_empirical} returns a set $\{\h O_1,\h
  O_2, \ldots \h O_k\}$ such that there exists a permutation $\sigma$
  of $\{1,2,\ldots k\}$ (a permutation of the columns) so that for all
  $i \in \{1,2,\ldots k\}$
\[
\| O_i - \h O_{\sigma(i)}\|_2 \leq c \ 
\frac{ (\alpha_0+1)^2  k^3}{\pmin^{2} \sigma_k(O)^{3} }   \ \left( \frac{1 +
  \sqrt{\ln(1/\delta)}}{\sqrt{N}}\right)
\]
where $c$ is a universal
constant.
\end{theorem}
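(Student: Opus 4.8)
The plan is to prove the sample-complexity bound in two stages: first establish that the empirical moments $\wh\Pairsa$ and $\wh\Triplesa(\eta)$ concentrate around their population counterparts, and second show that the ECA pipeline (whitening, SVD, reconstruction, rescaling) is stable under these perturbations, so that small moment errors translate into small errors in the recovered topics. For the first stage I would use the fact that $x_1,x_2,x_3$ are indicator ("hot encoding") vectors, so each empirical moment is an average of $N$ i.i.d.\ bounded terms; a vector/matrix Bernstein or Hoeffding-type inequality (or a direct McDiarmid argument in Frobenius norm) gives $\|\wh\Pairsa - \Pairsa\|$ and $\|\wh\Triplesa(\eta) - \Triplesa(\eta)\| = O\big((1+\sqrt{\ln(1/\delta)})/\sqrt N\big)$ with probability $1-\delta$, with the $\alpha_0$-dependence entering through the correction coefficients $\tfrac{\alpha_0}{\alpha_0+1}$, $\tfrac{\alpha_0}{\alpha_0+2}$, etc.\ appearing in the definitions of $\Pairsa$ and $\Triplesa$. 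The threshold $N\geq \big(\tfrac{(\alpha_0+1)(6+6\sqrt{\ln(3/\delta)})}{\pmin\,\sigma_k(O)^2}\big)^2$ is exactly what is needed to guarantee that the perturbation to $\Pairsa$ is small relative to its smallest singular value $\sigma_k(\Pairsa) = \tfrac{\pmin}{(\alpha_0+1)\alpha_0}\sigma_k(O)^2$ (by Lemma~\ref{lemma:lda_pairs_triples}), so the empirical top-$k$ singular subspace is well-defined and close to $\range(\Pairsa) = \range(O)$.

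For the second stage I would carry out a standard matrix-perturbation chain. First, by Weyl's inequality and Wedin's $\sin\Theta$ theorem, the empirical whitening matrix $\wh W = A\Sigma^{-1/2}$ satisfies $\|\wh W - W R\|$ small (for some orthogonal $R$) and, crucially, $\wh W^\t \Pairsa \wh W \approx \I$, so $\wh W^\t O$ is close to an orthogonal matrix. Next, plug the empirical whitened third moment $\wh W^\t \wh\Triplesa(\wh W\theta)\wh W$ into the exact expression from Lemma~\ref{lemma:lda_pairs_triples}: it equals $M\,\diag(M^\t\theta)\diag(\text{effective skewnesses})\,M^\t$ plus a perturbation of controlled size, where $M = \wh W^\t O \cdot(\text{scaling})$. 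Here I need a lower bound on the eigengap of this matrix: the diagonal entries are $\tfrac{2}{(\alpha_0+2)}\sqrt{\alpha_0(\alpha_0+1)}\,(M^\t\theta)_i/\sqrt{\alpha_i}$ (the "effective skewness" times $(M^\t\theta)_i$), and since $\theta$ is uniform on the sphere, with probability $\geq 3/4$ the minimum gap between these is at least some $\mathrm{poly}(1/k)$ quantity — this is an anti-concentration estimate for Gaussian-like coordinates that I would state as a lemma. Given the gap, Wedin's theorem again bounds $\|\wh v_i - (\text{true singular vector})\|$, hence $\|(\wh W^+)^\t \wh v_i - Z_i O_{\sigma(i)}\|$, and finally the rescaling step — which uses the quadratic form $(\wh W\wh v_i)^\t\wh\Triplesa(\wh W\wh v_i)\wh W\wh v_i$ to estimate $Z_i$, exactly as in Remark~\ref{remark:skew} — is itself a smooth function of quantities already controlled, so it contributes only lower-order error.

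The main obstacle I expect is bookkeeping the eigengap and its effect through the Wedin bound: the factors $k^3$, $\pmin^{-2}$, and $\sigma_k(O)^{-3}$ in the final bound all arise from dividing the $O(1/\sqrt N)$ moment error by powers of the eigengap, the condition number of $\wh W^\t O$, and the smallest effective skewness, and one must be careful that the "with probability $\geq 3/4$ over $\theta$" event (controlling $\min_i |(M^\t\theta)_i|$ and $\min_{i\neq j}|(\text{diag})_i - (\text{diag})_j|$ from below) is handled correctly — in particular that it is taken \emph{conditionally} on the $1-\delta$ high-probability event for the samples, so that the two randomness sources compose as stated. A secondary subtlety is that the rescaling denominator $\h Z_i$ could in principle be small or have the wrong sign; one shows that on the good events it is bounded away from zero (its population value is the effective skewness, which is bounded below on the $3/4$-probability $\theta$-event), so division is safe. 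Everything else — the triangle inequalities assembling $\|O_i - \h O_{\sigma(i)}\|_2$ from the whitening error, the singular-vector error, and the scaling error — is routine once the gap-dependent constants are pinned down.
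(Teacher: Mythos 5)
Your proposal follows essentially the same route as the paper's proof: concentration of the empirical moments via the boundedness of the hot-encoded $x_v$ (the paper's Lemma~\ref{lemma:topic-concentration}), the condition $N \gtrsim (\alpha_0+1)^2/(\pmin\,\sigma_k(O)^2)^2$ ensuring the moment error is at most half of $\sigma_k(\Pairsa)$, Weyl/Wedin perturbation of the whitening and of the whitened third-moment SVD, a Dasgupta--Gupta-style anti-concentration bound over $\theta$ giving the $\mathrm{poly}(1/k)$ eigengap with probability $3/4$, and a triangle-inequality assembly that treats the $\h Z_i$ rescaling as a controlled perturbation of the effective skewness. The only slip is a minor bookkeeping one ($\sigma_k(\Pairsa)\geq \pmin\,\sigma_k(O)^2/(\alpha_0+1)$, not $\pmin\,\sigma_k(O)^2/((\alpha_0+1)\alpha_0)$, since $\pmin$ already absorbs a factor of $\alpha_0$), which does not affect the argument.
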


\vspace*{0.1in}
\begin{remark}[Normalizing  and $\ell_1$ accuracy]
\label{remark:clip}
  An alternative procedure would be to just explicitly
  normalize $\hat O_i$. If $d$ large, to do this
  robustly, one should first set to $0$ the smallest elements and then
  normalize. The reason for clipping the smallest elements is related to
  obtaining low $\ell_1$ error.

  Our theorem currently guarantees $\ell_2$
  norm accuracy of each column. Another natural error measure for
  probability distributions is the $\ell_1$ error (the total variation
  error). Ideally, we would like the $\ell_1$ error to be small with a
  number of samples does not depend on the dimension $d$ (\emph{e.g.}, the
  size of the vocabulary). Unfortunately, in general, this is not
  possible. For example, in the simplest case where $k=1$ (\emph{i.e.}, every
  document is about the same topic), then this amounts to estimating
  the distributions over words for this topic; in other words, we must
  estimate a distribution over $d$, which may require $\Omega(d)$
  samples to obtain some fixed target $\ell_1$-error. However, this
  situation occurs only when the target distribution is near to
  uniform. If instead, for each topic, say most of the probability
  mass is contained within the most frequent $d_{\textrm{effective}}$
  words (for that topic), then it is possible to translate our
  $\ell_2$ error guarantee into an $\ell_1$ guarantee (in terms of
  $d_{\textrm{effective}}$).
\end{remark}

\section{Discussion: Sparsity}\label{sec:discussion}

Note that sparsity considerations have not entered into
our analysis. Often, in high dimensional statistics,
notions of sparsity are desired as this generally decreases the sample
size requirements (often at an increased computational burden).

Here, while these results have no explicit dependence on the sparsity
level, sparsity is helpful in that it does implicitly affect the
skewness (and the whitening) , which determines the sample
complexity. As the model becomes less sparse, the skewness tends to
$0$.  In particular, for the case of LDA, as $\alpha_0 \rightarrow
\infty$ note that error increases (see Theorem~\ref{thm:samples}).

Perhaps surprisingly, the sparsity level has no direct impact on the
computational requirements of a ``plug-in'' empirical algorithm (beyond the
linear time requirement of reading the data in order to construct the empirical
statistics).

\subsection*{Acknowledgements}
We thank Kamalika Chaudhuri, Adam Kalai, Percy Liang, Chris Meek,
David Sontag, and Tong Zhang for many invaluable insights. We also
give warm thanks to Rong Ge for sharing early insights and their
preliminary results (in~\cite{Arora:1439928}) into this problem with
us.

\bibliography{lda}
\bibliographystyle{plainnat}

\appendix

\section{Analysis with Independent Factors} \label{sec:analysis}

\begin{lemma}[Hidden state moments] \label{lemma:mixed-moments}
Let $z = h - \E[h]$. For any vectors
$u, v \in \R^k$,
\begin{eqnarray*}
\E[ z z^\t ] 
&= &
\diag(\sigma_1^2,\sigma_2^2,\dotsc,\sigma_k^2) \\
\E[ z z^\t \dotp{u,z} ] 
& = &
\diag(u) \diag(\mu_{i,3},\mu_{2,3},\dotsc,\mu_{k,3}) 
\end{eqnarray*}
and
\begin{multline*}
\E[ z z^\t \dotp{u,z} \dotp{v,z} ] 
= 
\diag(u) \diag(v) \diag(\mu_{1,4}-3 \sigma_1^4,\mu_{2,4}-3 \sigma_2^4,\dotsc,\mu_{k,4}-3 \sigma_k^4) \\
+(u^\t \E[ z z^\t ] v) \E[ z z^\t ] +  (\E[ z z^\t ]u) (\E[ z z^\t ] v)^\t +
(\E[ z z^\t ]v) (\E[ z z^\t ] u)^\t
\end{multline*}
\end{lemma}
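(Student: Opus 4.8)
The plan is to exploit the product structure of the distribution of $h$ assumed throughout this section: since the coordinates $h_i$ are mutually independent, the centered coordinates $z_i = h_i - \E[h_i]$ are mutually independent with $\E[z_i] = 0$. Every identity then follows by computing the relevant matrix entry-by-entry and invoking the elementary fact that the expectation of a product of independent mean-zero random variables vanishes unless each variable appears with multiplicity at least two. For the first identity, the $(i,j)$ entry of $\E[zz^\t]$ is $\E[z_i z_j]$, which equals $\sigma_i^2$ when $i=j$ and $0$ otherwise, giving the stated diagonal matrix. For the second, writing $\dotp{u,z} = \sum_\ell u_\ell z_\ell$, the $(i,j)$ entry of $\E[zz^\t\dotp{u,z}]$ is $\sum_\ell u_\ell \E[z_i z_j z_\ell]$; a product of three independent mean-zero variables has nonzero expectation only if all three indices coincide, so the sole surviving term is $i=j=\ell$, contributing $u_i\mu_{i,3}$, which is exactly the $(i,j)$ entry of $\diag(u)\diag(\mu_{1,3},\dotsc,\mu_{k,3})$.

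For the third identity, expanding both linear forms gives the $(i,j)$ entry of $\E[zz^\t\dotp{u,z}\dotp{v,z}]$ as $\sum_{\ell,m} u_\ell v_m \E[z_i z_j z_\ell z_m]$. By independence and mean-zero, $\E[z_i z_j z_\ell z_m]$ is nonzero only when $\{i,j,\ell,m\}$ is either a single index repeated four times, contributing $\mu_{i,4}$, or two distinct indices each repeated twice, contributing a product of two variances. I would split into the cases $i = j$ and $i \ne j$: when $i=j$ the surviving pairs are those with $\ell = m$, producing $u_i v_i \mu_{i,4} + \sigma_i^2\sum_{\ell\ne i} u_\ell v_\ell \sigma_\ell^2$; when $i \ne j$ the surviving pairs are $(\ell,m)\in\{(i,j),(j,i)\}$, producing $\sigma_i^2\sigma_j^2(u_i v_j + u_j v_i)$. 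On the right-hand side, the term $\diag(u)\diag(v)\diag(\mu_{1,4}-3\sigma_1^4,\dotsc)$ contributes $u_i v_i(\mu_{i,4}-3\sigma_i^4)$ on the diagonal and nothing off it; $(u^\t\E[zz^\t]v)\E[zz^\t]$ contributes $\sigma_i^2\sum_\ell u_\ell v_\ell \sigma_\ell^2$ on the diagonal and $0$ off it, since $\E[zz^\t]$ is diagonal; and the two rank-one terms each contribute $\sigma_i^4 u_i v_i$ on the diagonal, while off the diagonal they contribute $\sigma_i^2\sigma_j^2 u_i v_j$ and $\sigma_i^2\sigma_j^2 u_j v_i$ respectively. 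Summing, the diagonal totals $u_i v_i\mu_{i,4} - 3\sigma_i^4 u_i v_i + 2\sigma_i^4 u_i v_i + \sigma_i^2\sum_\ell u_\ell v_\ell\sigma_\ell^2$, and isolating the $\ell = i$ term from the last sum reduces this to $u_i v_i\mu_{i,4} + \sigma_i^2\sum_{\ell\ne i} u_\ell v_\ell\sigma_\ell^2$, matching the left-hand side; off the diagonal both sides equal $\sigma_i^2\sigma_j^2(u_iv_j+u_jv_i)$.

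The argument is entirely bookkeeping, so there is no substantive obstacle; the one point demanding care is the case enumeration for $\E[z_i z_j z_\ell z_m]$ and, in particular, pinning down the constant $3$ in the Gaussian-type correction, since it is precisely this constant that cancels the spurious $\sigma_i^4 u_i v_i$ contributions on the diagonal and lets $\Quad$ in Lemma~\ref{lemma:kurt} assume its clean diagonalized form.
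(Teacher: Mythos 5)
Your proof is correct and follows essentially the same route as the paper's: an entry-wise computation exploiting independence and zero means of the $z_i$, with the fourth-moment identity verified by the same diagonal versus off-diagonal case split and the same cancellation of the $-3\sigma_i^4$ term against the two rank-one contributions. No gaps.
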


\begin{proof}
Let $a$, $b$, $u$ and $v$ be vectors. Since the $\{ z_t \}$ are independent and
have mean zero, we have:
\begin{equation*}
\E[ \dotp{a,z} \dotp{b,z} ]
= \E\biggl[
\biggl( \sum_{i=1}^k a_i z_i \biggr)
\biggl( \sum_{i=1}^k b_i z_i \biggr)
\biggr]
= \sum_{i=1}^k a_i b_i \E[z_i^2]
= \sum_{i=1}^k a_i b_i \sigma_i^2
\end{equation*}
and
\begin{equation*}
\E[ \dotp{a,z} \dotp{b,z} \dotp{u,z} ]
= \E\biggl[
\biggl( \sum_{i=1}^k a_i z_i \biggr)
\biggl( \sum_{i=1}^k b_i z_i \biggr)
\biggl( \sum_{i=1}^k u_i z_i \biggr)
\biggr]
= \sum_{i=1}^k a_i b_i u_i \E[z_i^3]
= \sum_{i=1}^k a_i b_i u_i \mu_{i,3}
.
\end{equation*}
For the final claim, let us compute the diagonal and non-diagonal
entries separately. First,
\begin{eqnarray*}
\E[ z_i z_i \dotp{u,z} \dotp{v,z} ] & = & \E[ \sum_{j,k} u_j v_k z_i z_i z_j z_k ]\\
& = & u_i v_i \E[z_i^4] + \sum_{j\neq i} u_j v_j \E[z_i^2] \E[z_j^2]\\
& = & u_i v_i \mu_{i,4} + \sigma_i^2 \sum_{j\neq i} u_j v_j \sigma_j^2
\\
& = & u_i v_i \mu_{i,4} - u_i v_i (\sigma_i^2)^2  +\sigma_i^2\sum_{j} u_j v_j
\sigma_j^2 \\
& = & u_i v_i \mu_{i,4} - u_i v_i (\sigma_i^2)^2  + (u^\t \E[ z z^\t ] v ) \sigma_i^2
\end{eqnarray*}
For $j\neq i$
\begin{eqnarray*}
\E[ z_i z_j \dotp{u,z} \dotp{v,z} ]
 & = &  \E[ \sum_{k,l} u_k v_l z_i z_j z_k z_l ]\\
 & = &  u_i v_j \E[z_i^2 z_j^2 ] +u_j v_i \E[z_i^2 z_j^2 ]\\
 & = &  u_i v_j \sigma_i^2 \sigma_j^2 +u_j v_i \sigma_i^2 \sigma_j^2\\
 & = &  [\E[ z z^\t ] u]_i  [\E[ z z^\t ] v]_j+[\E[ z z^\t ] u]_j  [\E[ z z^\t ] v]_i
\end{eqnarray*}
The proof is completed by noting the $(i,j)$-th components of
$\E[ z z^\t \dotp{u,z} \dotp{v,z} ] $ agree with the above moment expressions.
\end{proof}

The proofs of Lemmas~\ref{lemma:pairs-triples} and ~\ref{lemma:kurt} follow.

\begin{proof}[Proof of Lemmas~\ref{lemma:pairs-triples} and
 \ref{lemma:kurt}]
By the conditional independence of $\{x_1, x_2, x_3\}$ given $h$,
\[
\E[x_1] = O\E[\hidden]
\]
and
\begin{align*}
\E[(x_1-\mu) (x_2-\mu)^\t] &= \E[\E[(x_1-\mu) (x_2-\mu)^\t|h]] \\
&= \E[\E[(x_1-\mu)|h] \E[(x_2-\mu)^\t|h]] \\
&= O \E[(h-\E[h]) (h-\E[h])^\t] O^\t \\
&=O \diag(\sigma_1^2,\sigma_2^2,\dotsc,\sigma_k^2) O^\t
\end{align*}
by Lemma~\ref{lemma:mixed-moments}.

Similarly, the $(i,j)$-th entry of $\Triples(\eta)$ is
\begin{align*}
\E\bigl[\dotp{e_i,x_1-\mu} \dotp{e_j,x_2-\mu} \dotp{\eta,x_3-\mu} \bigr]
& = \E\bigl[ \E[\dotp{e_i,x_1-\mu} \dotp{e_j,x_2-\mu} \dotp{\eta,x_3-\mu} |
h] \bigr] \\
& = \E\bigl[
\E[\dotp{e_i,x_1-\mu} | h]
\cdot \E[\dotp{e_j,x_2-\mu} | h]
\cdot \E[\dotp{\eta,x_3-\mu} | h]
\bigr] \\
& = \E\bigl[
\dotp{e_i,O(h-\E[h])}
\dotp{e_j,O(h-\E[h])}
\dotp{\eta,O(h-\E[h])}
\bigr] \\
& = \E\bigl[
\dotp{O^\t e_i,h-\E[h]}
\dotp{O^\t e_j,h-\E[h]}
\dotp{O^\t \eta,h-\E[h]}
\bigr] \\
& = e_i^\t O \diag(O^\t \eta) \diag(\mu_{i,3},\mu_{2,3},\dotsc,\mu_{k,3}) O^\t e_j
.
\end{align*}

The proof for $\Quad(\eta,\eta')$ is analogous.
\end{proof}

The proof for Lemma~\ref{lemma:multi}  is analogous to the above proofs.

\section{Analysis with Dirichlet Factors} 

We first provide the functional forms of the first, second, and third
moments. With these, we prove Lemma~\ref{lemma:lda_pairs_triples}.

\subsection{Dirichlet moments}

\begin{lemma}[Dirichlet moments]
We have:
\[ \E[\hidden \otimes \hidden] = \frac1{(\alpha_0+1)\alpha_0} \bigl( \diag(\alpha) +
\alpha\alpha^\t \bigr) \]
and
\begin{multline*}
\E[\hidden \otimes \hidden \otimes \hidden]
= \frac1{(\alpha_0+2)(\alpha_0+1)\alpha_0}
\Bigl( \alpha \otimes \alpha \otimes \alpha
+ \sum_{i=1}^k \alpha_i \bigl( e_i \otimes e_i \otimes \alpha \bigr)
+ \sum_{i=1}^k \alpha_i \bigl( \alpha \otimes e_i \otimes e_i \bigr)
\\
+ \sum_{i=1}^k \alpha_i \bigl( e_i \otimes \alpha \otimes e_i \bigr)
+ 2 \sum_{i=1}^k \alpha_i \bigl( e_i \otimes e_i \otimes e_i \bigr)
\Bigr) .
\end{multline*}
Hence, for $v \in \R^k$,
\begin{multline*}
\E[(\hidden \otimes \hidden) \dotp{v,\hidden}]
= \frac1{(\alpha_0+2)(\alpha_0+1)\alpha_0}
\Bigl( \dotp{v,\alpha} \alpha \alpha^\t
+ \diag(\alpha) v\alpha^\t
+ \alpha v^\t \diag(\alpha)
\\
+ \dotp{v,\alpha} \diag(\alpha)
+ 2 \diag(v) \diag(\alpha)
\Bigr)
\end{multline*}
\end{lemma}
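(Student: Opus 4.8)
The plan is to start from the closed form for the raw moments of the Dirichlet distribution and then reorganize the resulting scalar identities into the claimed tensor notation. From the density $p_\alpha(\hidden) = \tfrac1{Z(\alpha)}\prod_i \hidden_i^{\alpha_i-1}$ and the formula for $Z(\alpha)$ given earlier, for any nonnegative integers $m_1,\dotsc,m_k$ the integrand $\prod_i \hidden_i^{\alpha_i+m_i-1}$ is an unnormalized Dirichlet density with parameter $\alpha+m$, so
\[
\E\Bigl[\prod_{i=1}^k \hidden_i^{m_i}\Bigr] = \frac{Z(\alpha+m)}{Z(\alpha)} = \frac{\Gamma(\alpha_0)}{\Gamma(\alpha_0+\sum_i m_i)}\prod_{i=1}^k \frac{\Gamma(\alpha_i+m_i)}{\Gamma(\alpha_i)}.
\]
Applying $\Gamma(x+1)=x\Gamma(x)$ and specializing to $\sum_i m_i\in\{2,3\}$ gives the handful of scalar moments actually needed: $\E[\hidden_i^2]$ and $\E[\hidden_i\hidden_j]$ ($i\neq j$) with denominator $\alpha_0(\alpha_0+1)$, and $\E[\hidden_i^3]$, $\E[\hidden_i^2\hidden_j]$ ($i\neq j$), $\E[\hidden_i\hidden_j\hidden_l]$ (distinct $i,j,l$) with denominator $\alpha_0(\alpha_0+1)(\alpha_0+2)$.

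Next I would verify that the claimed tensor formulas reproduce these scalar moments entrywise. For the second moment this is immediate: the $(i,i)$ entry of $\diag(\alpha)+\alpha\alpha^\t$ is $\alpha_i+\alpha_i^2=\alpha_i(\alpha_i+1)$, and the $(i,j)$ entry for $i\neq j$ is $\alpha_i\alpha_j$, matching the scalar moments scaled by $\alpha_0(\alpha_0+1)$. For the third-moment tensor I would split into the three index patterns — all three indices equal, exactly two equal, all distinct — and check that the five terms on the right-hand side (the rank-one $\alpha^{\otimes 3}$, the three symmetric ``one $e_i\otimes e_i$ and one $\alpha$'' terms, and $2\sum_i\alpha_i e_i^{\otimes 3}$) assemble to $\alpha_i(\alpha_i+1)(\alpha_i+2)=\alpha_i^3+3\alpha_i^2+2\alpha_i$ on the diagonal (all five terms contribute), to $\alpha_i(\alpha_i+1)\alpha_j=\alpha_i^2\alpha_j+\alpha_i\alpha_j$ on the ``two equal'' entries (only $\alpha^{\otimes 3}$ and exactly one of the three mixed terms survive), and to $\alpha_i\alpha_j\alpha_l$ on the fully distinct entries (only $\alpha^{\otimes 3}$ survives). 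The need for all three symmetric copies of the mixed term and for the coefficient $2$ becomes transparent precisely at this step.

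Finally, the contracted identity for $\E[(\hidden\otimes\hidden)\dotp{v,\hidden}]$ follows by contracting the third mode of $\E[\hidden\otimes\hidden\otimes\hidden]$ against $v$ and simplifying term by term: $\alpha^{\otimes 3}\mapsto \dotp{v,\alpha}\alpha\alpha^\t$; $\sum_i\alpha_i(e_i\otimes e_i\otimes\alpha)\mapsto \dotp{v,\alpha}\diag(\alpha)$; $\sum_i\alpha_i(\alpha\otimes e_i\otimes e_i)\mapsto \alpha v^\t\diag(\alpha)$; $\sum_i\alpha_i(e_i\otimes\alpha\otimes e_i)\mapsto \diag(\alpha)v\alpha^\t$; and $2\sum_i\alpha_i(e_i\otimes e_i\otimes e_i)\mapsto 2\diag(v)\diag(\alpha)$. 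Summing these and keeping the common prefactor recovers the stated expression. The only real obstacle is the combinatorial bookkeeping in the third-moment step — tracking which of the symmetric cross-terms contributes to which index pattern — but this is finite case-checking rather than anything deep; everything else is a direct application of the Gamma-function moment formula.
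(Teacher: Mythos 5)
Your proposal is correct and follows essentially the same route as the paper: compute the raw scalar moments from the ratio $Z(\alpha+m)/Z(\alpha)$ of Dirichlet normalizers, verify the claimed tensor formulas entrywise by index pattern, and obtain the contracted matrix identity by pairing each of the five tensor terms with its matrix counterpart. Your entrywise case analysis for the third-moment tensor is a slightly more explicit version of the step the paper summarizes as ``each component agrees with the above expressions.''
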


\begin{proof}

First, let us specify the following scalar moments.

{\bf Univariate moments: }
Fix some $i \in [k]$, and let $\alpha' := \alpha + p \cdot e_i$ for some
positive integer $p$.
Then
\begin{align*}
\E[\hidden_i^p]
& = \frac{Z(\alpha')}{Z(\alpha)} \\
& = \frac{\Gamma(\alpha_i+p)} {\Gamma(\alpha_i)}
\cdot \frac{\Gamma(\alpha_0)} {\Gamma(\alpha_0+p)} \\
& = \frac{(\alpha_i+p-1) (\alpha_i+p-2) \dotsb \alpha_i}
{(\alpha_0+p-1) (\alpha_0+p-2) \dotsb \alpha_0}
.
\end{align*}
In particular,
\begin{align*}
\E[\hidden_i] & = \frac{\alpha_i}{\alpha_0} \\
\E[\hidden_i^2] & = \frac{(\alpha_i+1)\alpha_i}
{(\alpha_0+1)\alpha_0} \\
\E[\hidden_i^3] & = \frac{(\alpha_i+2)(\alpha_i+1)\alpha_i}
{(\alpha_0+2)(\alpha_0+1)\alpha_0}
.
\end{align*}

{\bf Bivariate moments: }
Fix $i, j \in [k]$ with $i \neq j$, and let $\alpha' := \alpha + p \cdot
e_i + q \cdot e_j$ for some positive integers $p$ and $q$.
Then
\begin{align*}
\E[\hidden_i^p\hidden_j^q]
& = \frac{Z(\alpha')}{Z(\alpha)} \\
& = \frac{\Gamma(\alpha_i+p) \cdot \Gamma(\alpha_j+q)}
{\Gamma(\alpha_i) \cdot \Gamma(\alpha_j)}
\cdot \frac{\Gamma(\alpha_0)} {\Gamma(\alpha_0+p+q)}
\\
& = \frac
{\bigl( (\alpha_i+p-1) (\alpha_i+p-2) \dotsb \alpha_i \bigr)
\cdot \bigl( (\alpha_j+q-1) (\alpha_j+q-2) \dotsb \alpha_j \bigr)}
{(\alpha_0+p+q-1) (\alpha_0+p+q-2) \dotsb \alpha_0}
.
\end{align*}
In particular,
\begin{align*}
\E[\hidden_i\hidden_j] & = \frac{\alpha_i\alpha_j}
{(\alpha_0+1)\alpha_0} \\
\E[\hidden_i^2\hidden_j] & = \frac{(\alpha_i+1)\alpha_i\alpha_j}
{(\alpha_0+2)(\alpha_0+1)\alpha_0}
.
\end{align*}

{\bf Trivariate moments: }
{\def\k{{\kappa}}
Fix $i, j, \k \in [k]$ all distinct, and let $\alpha' := \alpha + e_i + e_j
+ e_\k$.
Then
\begin{align*}
\E[\hidden_i\hidden_j\hidden_\k]
& = \frac{Z(\alpha')}{Z(\alpha)} \\
& = \frac{\Gamma(\alpha_i+1) \cdot \Gamma(\alpha_j+1) \cdot
\Gamma(\alpha_\k+1)}
{\Gamma(\alpha_i) \cdot \Gamma(\alpha_j) \cdot \Gamma(\alpha_\k)}
\cdot \frac{\Gamma(\alpha_0)} {\Gamma(\alpha_0+3)}
\\
& = \frac
{\alpha_i \alpha_j \alpha_\k}
{(\alpha_0+2) (\alpha_0+1) \alpha_0}
.
\end{align*}
}

{\bf Completing the proof:} The proof for the second moment matrix and the third moment tensor
follows by observing that each component agrees with the above
expressions. For the final claim,
\begin{multline*}
\E[(\hidden \otimes \hidden) \dotp{v,\hidden}]
= \frac1{(\alpha_0+2)(\alpha_0+1)\alpha_0}
\Bigl( \dotp{v,\alpha} (\alpha \otimes \alpha)
+ \sum_{i=1}^k \alpha_i v_i \bigl( e_i \otimes \alpha \bigr)
+ \sum_{i=1}^k \alpha_i v_i \bigl( \alpha \otimes e_i \bigr)
\\
+ \sum_{i=1}^k \alpha_i \dotp{v,\alpha} (e_i \otimes e_i)
+ 2 \sum_{i=1}^k \alpha_i v_i (e_i \otimes e_i)
\Bigr)
\\
= \frac1{(\alpha_0+2)(\alpha_0+1)\alpha_0}
\Bigl( \dotp{v,\alpha} \alpha \alpha^\t
+ \diag(\alpha) v\alpha^\t
+ \alpha v^\t \diag(\alpha)
\\
+ \dotp{v,\alpha} \diag(\alpha)
+ 2 \diag(v) \diag(\alpha)
\Bigr)
\end{multline*}
which completes the proof.
\end{proof}

\subsection{The proof of Lemma~\ref{lemma:lda_pairs_triples}}

\begin{proof} 
Observe:
\[
\E[x_1] = O\E[\hidden]
\]
and
\[
\E[x_1 x_2^\t ] = \E[\E[x_1 x_2^\t|\hidden] ] =  O \E[\hidden\hidden^\t] O^\t
\]
Define the analogous quantity: 
\[
\Pairs_h  = \E[\hidden \hidden^\t] - 
\frac{\alpha_0}{\alpha_0+1} 
\E[\hidden]\E[\hidden]^\t  
\]
and so:
\[
\Pairs_{\alpha_0}  = O \Pairs_h O^\t
\]
Observe:
\begin{align*}
\Pairs_h 
& = \E[\hidden \hidden^\t] - 
\frac1{(\alpha_0+1)\alpha_0} 
\alpha\alpha^\t \\
& = \frac1{(\alpha_0+1)\alpha_0} \diag(\alpha) 
\end{align*}
Hence,
\[
\Pairs_{\alpha_0}  = O \Pairs_h O^\t = \frac1{(\alpha_0+1)\alpha_0}   O \diag(\alpha)  O^\t 
\]
which proves the first claim.

Also, define:
\begin{multline*}
\Triples_h(v)
: = \E[(\hidden \otimes \hidden) \dotp{v,\hidden}]
-
\frac{\alpha_0}{\alpha_0+2}
\Bigl(\E[\hidden \hidden^\t]  v\E[\hidden]^\t
+ \E[\hidden] v^\t \E[\hidden \hidden^\t] + \dotp{v,\E[\hidden]} \E[\hidden \hidden^\t]
\Bigr) 
\\
+\frac{2\alpha_0^2}{(\alpha_0+2)(\alpha_0+1)}
\dotp{v,\E[\hidden]} \E[\hidden] \E[\hidden]^\t
\end{multline*}

Since
\begin{align*}
\E[x_1 x_2^\t \dotp{\eta,x_3}] 
& = 
\E[ \E[x_1 x_2^\t \dotp{\eta,x_3}|\hidden]]
\\
& = 
O \E[\hidden \hidden^\t \dotp{\eta ,O\hidden}] O^\t
\\
& = 
O \E[\hidden \hidden^\t \dotp{O^\t \eta ,\hidden}] O^\t
\end{align*}
we have
\[
\Triples_{\alpha_0}(\eta) = O \Triples_h(O^\t \eta) O^\t
\]

Let us complete the proof by showing:
\[
\Triples_h(v) := \frac{2}{(\alpha_0+2)(\alpha_0+1)\alpha_0} \diag(v) \diag(\alpha)
\]
Observe:
\begin{multline*}
\frac{2}{(\alpha_0+2)(\alpha_0+1)\alpha_0} \diag(v) \diag(\alpha)
= \E[(\hidden \otimes \hidden) \dotp{v,\hidden}]
- \frac1{(\alpha_0+2)(\alpha_0+1)\alpha_0}
\Bigl( \dotp{v,\alpha} \alpha \alpha^\t
+ \diag(\alpha) v\alpha^\t
\\
+ \alpha v^\t \diag(\alpha) + \dotp{v,\alpha} \diag(\alpha)
\Bigr)
\end{multline*}
Let us handle each term separately. First,
\[
\frac1{(\alpha_0+2)(\alpha_0+1)\alpha_0}
\dotp{v,\alpha} \alpha \alpha^\t 
= 
\frac{\alpha_0^2}{(\alpha_0+2)(\alpha_0+1)}
\dotp{v,\E[\hidden]} \E[\hidden] \E[\hidden]^\t
\]
Also, since:
\[
\frac1{(\alpha_0+1)\alpha_0} \diag(\alpha)  = 
\E[\hidden \hidden^\t] - 
\frac1{(\alpha_0+1)\alpha_0} 
\alpha\alpha^\t 
\]
we have:
\begin{align*}
& \frac1{(\alpha_0+2)(\alpha_0+1)\alpha_0}
\Bigl( \diag(\alpha) v\alpha^\t
+ \alpha v^\t \diag(\alpha) + \dotp{v,\alpha} \diag(\alpha)
\Bigr)
\\
=&  \frac1{\alpha_0+2}
\Bigl(\E[\hidden \hidden^\t]  v\alpha^\t
+ \alpha v^\t \E[\hidden \hidden^\t] + \dotp{v,\alpha} \E[\hidden \hidden^\t]
\Bigr) -  \frac{3}{(\alpha_0+2)(\alpha_0+1)\alpha_0} \dotp{v,\alpha}
\alpha\alpha^\t 
\\
= & \frac{\alpha_0}{\alpha_0+2}
\Bigl(\E[\hidden \hidden^\t]  v\E[\hidden]^\t
+ \E[\hidden] v^\t \E[\hidden \hidden^\t] + \dotp{v,\E[\hidden]} \E[\hidden \hidden^\t]
\Bigr) 
-  
\frac{3\alpha_0^2}{(\alpha_0+2)(\alpha_0+1)}
\dotp{v,\E[\hidden]} \E[\hidden] \E[\hidden]^\t
\end{align*}
Hence,
\begin{multline*}
\frac{2}{(\alpha_0+2)(\alpha_0+1)\alpha_0} \diag(v) \diag(\alpha)
= \E[(\hidden \otimes \hidden) \dotp{v,\hidden}]
\\
-
\frac{\alpha_0}{\alpha_0+2}
\Bigl(\E[\hidden \hidden^\t]  v\E[\hidden]^\t
+ \E[\hidden] v^\t \E[\hidden \hidden^\t] + \dotp{v,\E[\hidden]} \E[\hidden \hidden^\t]
\Bigr) 
\\
+\frac{2\alpha_0^2}{(\alpha_0+2)(\alpha_0+1)}
\dotp{v,\E[\hidden]} \E[\hidden] \E[\hidden]^\t
\end{multline*}
which proves the claim.
\end{proof}

\section{Sample Complexity Analysis}

Throughout, we work in a canonical form. Define:
\[
\canO:= \frac1{\sqrt{(\alpha_0+1)\alpha_0}} O
\diag(\sqrt{\alpha_1},\sqrt{\alpha_2},\dotsc,\sqrt{\alpha_k})
\]
Under this transformation, we have:
\[
\Pairs_{\alpha_0} = \frac1{(\alpha_0+1)\alpha_0} O \diag(\alpha) O^\t
= \canO \canO^\t
\]
Using the definition:
\[
\gamma_i := 2
\sqrt{\frac{\alpha_0(\alpha_0+1)}{(\alpha_0+2)^2} \ \frac{1}{\alpha_i}}
\]
we also have that :
\[
\Triples_{\alpha_0}(\eta) =\canO
\diag(\canO^\t \eta) \diag(\gamma) \canO^\t
\]
Hence, we can consider $\gamma_i$ to be the effective skewness.
Let us also define:
\[
\pmin := \min_i \frac{\alpha_i}{\alpha_0}
\] 
Since $\alpha_i \leq \alpha_0$, we have that:
\[
 \frac{1}{\sqrt{ \alpha_0+2}} \leq \gamma_i \leq 2 \frac1{\sqrt{\pmin(\alpha_0+2)}}
\]
Note that:
\[
\sigma_k(O) \sqrt{\frac{\pmin}{\alpha_0+1}} \leq \sigma_k(\canO) \leq 1
\]
and
\[
\sigma_1(\canO) \leq \sigma_1(O) \frac1{\sqrt{\alpha_0+1}} \leq \frac1{\sqrt{\alpha_0+1}}
\]
where $\sigma_j(\cdot)$ denotes the $j$-th largest singular
value. These lower bounds are relevant for lower bounding certain
singular values in our analysis. 

We use $\|M\|$ to denote the spectral norm of a matrix $M$. Let us
suppose that for all $\eta$,
\begin{eqnarray*}
\| \wh \Pairsa- \Pairsa  \| &= & E_P \\
\| \Triplesa (\eta)  - \wh \Triplesa (\eta) \| 
& \leq & \|\eta\| E_T
\end{eqnarray*}
for some $E_P$ and $E_T$ (which we set later).

\subsection{Perturbation Lemmas}

Let $\wh \Pairsa_{,k}$ be the best rank $k$ approximation to
$\Pairsa$.  We have that $\h W$, as defined in
Algorithm~\ref{alg:LDA_empirical}, whitens $\wh \Pairsa_{,k}$, \emph{i.e.}, \[
\h W^\t \ \wh \Pairsa_{,k} \h W= \I \, .
\]
Let
\[
\h W^\t  \Pairsa \h W = A D A^\t
\]
be an SVD of $\h W^\t  \Pairsa \h W$, where $A \in \R^{k \times
  k}$.  Define:
\[
W: = \h W  A D^{-1/2} A^\t
\]
and observe that $W$ also whitens $\Pairsa$, \emph{i.e.}, \[
W^\t \Pairsa W = (A D^{-1/2} A^\t )^\t \h W^\t  \Pairsa \h W (A D^{-1/2} A^\t )= \I
\]
Due to sampling error, the $\range(W)$ may not equal the $\range(\Pairsa)$. 

Define:
\[
M:=W^\t  \canO , \quad \quad \h M=\h W^\t  \canO
\]

\begin{lemma}\label{lemma:pairs_error}
Let $\Pi_W$ be the orthogonal projection onto the range of $W$ and
$\Pi$ be the orthogonal projection onto the range of $O$.  Suppose
$E_P \leq \sigma_k(\Pairsa) /2$. We have that: 
\begin{eqnarray*}
  \|M\| & = & 1 \\
  \|\h M\| & \leq & 2\\
  \|\h W\| & \leq & \frac{2}{\sigma_k(\canO)}\\
  \\
  \|\h W^+\| & \leq & 2 \sigma_1(\canO) \\
  \\
  \| W^+\| & \leq & 3 \sigma_1(\canO) \\
  \\
 \|M - \h M\|  
  & \leq & 
  \frac{4 }{\sigma_k(\canO)^2} E_P\\
  \\
  \|\h W^+ - W^+\|
  & \leq &  
  \frac{6 \sigma_1(\canO) }{\sigma_k(\canO)^2} E_P\\
  \\
  \| \Pi -  \Pi_W\| & \leq & \frac{4}{\sigma_k(\canO)^2} E_P
\end{eqnarray*}
\end{lemma}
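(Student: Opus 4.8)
The plan is to reduce the whole lemma to a single $k\times k$ perturbation. Recall $\range(O)=\range(\canO)=\range(\Pairsa)$ and $\sigma_j(\Pairsa)=\sigma_j(\canO)^2$, so the hypothesis reads $E_P\le\sigma_k(\canO)^2/2$. The key object is
\[
Q\ :=\ \wh W^{\t}\,\Pairsa\,\wh W\ =\ \wh W^{\t}\canO\canO^{\t}\wh W\ =\ \wh M\wh M^{\t},
\]
together with the ``re-whitening'' identity $W=\wh W\,Q^{-1/2}$ explained below. I would first extract the raw norm bounds on $\wh W$ and $\wh W^{+}$ from Weyl's inequality, then show $Q$ is close to the identity and propagate that through the identities, and finally handle $\|\Pi-\Pi_W\|$ by a subspace-perturbation (Davis--Kahan / Wedin) bound.

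\textbf{Weyl step.} Since $\|\wh\Pairsa-\Pairsa\|=E_P$, Weyl gives $|\sigma_j(\wh\Pairsa)-\sigma_j(\Pairsa)|\le E_P$ for all $j$; as $\Pairsa$ has rank $k$ this yields $\sigma_k(\wh\Pairsa)\ge\sigma_k(\canO)^2-E_P\ge\sigma_k(\canO)^2/2$, $\ \sigma_{k+1}(\wh\Pairsa)\le E_P$, and $\sigma_1(\wh\Pairsa)\le\sigma_1(\canO)^2+E_P\le\tfrac32\sigma_1(\canO)^2$. Because $\wh W=A\Sigma^{-1/2}$ with $A^{\t}A=\I$ and $\Sigma=\diag(\sigma_1(\wh\Pairsa),\dots,\sigma_k(\wh\Pairsa))$, one reads off $\wh W^{\t}\wh W=\Sigma^{-1}$ and $\wh W^{+}=\Sigma^{1/2}A^{\t}$, hence $\|\wh W\|=\sigma_k(\wh\Pairsa)^{-1/2}\le\sqrt2/\sigma_k(\canO)$ and $\|\wh W^{+}\|=\sigma_1(\wh\Pairsa)^{1/2}\le\sqrt{3/2}\,\sigma_1(\canO)\le 2\sigma_1(\canO)$. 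I would also record
\[
\|\Pairsa-\wh\Pairsa_{,k}\|\ \le\ \|\Pairsa-\wh\Pairsa\|+\|\wh\Pairsa-\wh\Pairsa_{,k}\|\ =\ E_P+\sigma_{k+1}(\wh\Pairsa)\ \le\ 2E_P,
\]
using optimality of the rank-$k$ truncation of $\wh\Pairsa$ together with $\rank(\Pairsa)=k$.

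\textbf{Perturbing $Q$ and propagating.} Since $ADA^{\t}$ is an SVD of the symmetric PSD matrix $Q$, once $Q$ is known invertible we have $AD^{-1/2}A^{\t}=Q^{-1/2}$, so $W=\wh W\,Q^{-1/2}$, and therefore $M=W^{\t}\canO=Q^{-1/2}\wh M$, $\ \wh M=Q^{1/2}M$, and $W^{+}=Q^{1/2}\wh W^{+}$ (the last because $\wh W$ has full column rank and $Q^{1/2}$ is invertible). To control $Q$, write $Q=\wh W^{\t}\wh\Pairsa_{,k}\wh W+\wh W^{\t}(\Pairsa-\wh\Pairsa_{,k})\wh W=\I+E'$ with
\[
\|E'\|\ \le\ \|\wh W\|^{2}\,\|\Pairsa-\wh\Pairsa_{,k}\|\ \le\ \frac{2}{\sigma_k(\canO)^2}\cdot 2E_P\ =\ \frac{4E_P}{\sigma_k(\canO)^2},
\]
which is $<1$ under the strict form of the hypothesis (and at the boundary $E_P=\sigma_k(\Pairsa)/2$ one shrinks the constant $1/2$ infinitesimally, or argues via the transversality of $\range(\canO)$ and $\range(\wh W)$ furnished by the Davis--Kahan bound below); this gives that $Q$ and $D$ are invertible, $\range(W)=\range(\wh W)$, and $\Pi_W=\Pi_{\wh W}$. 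Now $MM^{\t}=Q^{-1/2}QQ^{-1/2}=\I$, so $M$ is orthogonal and $\|M\|=1$; using $\|Q^{1/2}-\I\|\le\|E'\|$ and $\|Q^{1/2}\|\le\sqrt{1+\|E'\|}$ for $\|E'\|<1$, the rest falls out: $\|\wh M\|=\|Q^{1/2}M\|\le\|Q^{1/2}\|\le 2$; $\|M-\wh M\|=\|(\I-Q^{1/2})M\|\le\|E'\|\le 4E_P/\sigma_k(\canO)^2$; $\|W^{+}\|\le\|Q^{1/2}\|\,\|\wh W^{+}\|\le 3\sigma_1(\canO)$; and $\|\wh W^{+}-W^{+}\|=\|(\I-Q^{1/2})\wh W^{+}\|\le\|E'\|\,\|\wh W^{+}\|\le 6\sigma_1(\canO)E_P/\sigma_k(\canO)^2$, the numerical constants being absorbed by the sharper bounds $\|\wh W^{+}\|\le\sqrt{3/2}\,\sigma_1(\canO)$ and $\|E'\|\le 4E_P/\sigma_k(\canO)^2$ from the Weyl step.

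\textbf{Subspace bound and the main obstacle.} Since $\Pi_W=\Pi_{\wh W}$ is the projection onto the top-$k$ left-singular subspace of $\wh\Pairsa$ and $\Pi$ is that of $\Pairsa$ (whose remaining singular values vanish), the Davis--Kahan / Wedin $\sin\Theta$ theorem gives
\[
\|\Pi-\Pi_W\|\ \le\ \frac{E_P}{\sigma_k(\Pairsa)-\sigma_{k+1}(\wh\Pairsa)}\ \le\ \frac{E_P}{\sigma_k(\canO)^2-E_P}\ \le\ \frac{2E_P}{\sigma_k(\canO)^2}\ \le\ \frac{4E_P}{\sigma_k(\canO)^2};
\]
if $\wh\Pairsa$ is not exactly symmetric one first symmetrizes it (changing $E_P$ by at most a constant) or invokes the singular-subspace form of Wedin directly, and the resulting bound $\|\Pi-\Pi_W\|<1$ is exactly the transversality used above. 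The only step needing genuine care is ``$Q$ is close to $\I$ / invertible'': one must squeeze both the lower bound $\sigma_k(\wh\Pairsa)\ge\sigma_k(\canO)^2/2$ and the factor-$2$ truncation estimate $\|\Pairsa-\wh\Pairsa_{,k}\|\le 2E_P$ out of the single hypothesis $E_P\le\sigma_k(\Pairsa)/2$, and dispatch the borderline case where the naive estimates only give $\|E'\|\le 2$ and $\|\Pi-\Pi_W\|\le 1$. Once $Q=\I+E'$ with $\|E'\|<1$ and $\|E'\|=O(E_P/\sigma_k(\canO)^2)$ is in hand, everything else is mechanical propagation through $W=\wh W\,Q^{-1/2}$, $M=Q^{-1/2}\wh M$, $W^{+}=Q^{1/2}\wh W^{+}$ as above.
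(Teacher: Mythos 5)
Your proposal is correct and follows essentially the same route as the paper's proof: the matrix $Q=\wh W^{\t}\Pairsa\wh W$ you introduce is exactly the paper's $ADA^{\t}$, your bound $\|Q-\I\|\le\|\wh W\|^{2}\bigl(\|\wh\Pairsa_{,k}-\wh\Pairsa\|+E_P\bigr)\le 4E_P/\sigma_k(\canO)^{2}$ is the paper's bound on $\|\I-D\|$, the propagation through $W=\wh W Q^{-1/2}$, $\wh M=Q^{1/2}M$, $W^{+}=Q^{1/2}\wh W^{+}$ matches the paper's manipulations with $D^{1/2}$, and the final projection bound is the same Wedin argument. If anything you are slightly more careful than the paper, which silently assumes $D$ is invertible even though the hypothesis $E_P\le\sigma_k(\Pairsa)/2$ only yields $\|\I-D\|\le 2$ at the boundary.
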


\begin{proof}
Since $W$ whitens $\Pairsa$,  we have $M M^\t = W^\t \canO \canO^\t W=
\I$ and 
\[
\|M \| =1
\]
By Weyl's theorem (see Lemma~\ref{lemma:weyl}),
\[
\|\h W\|^2 = \frac{1}{\sigma_k(\wh\Pairsa)} \leq
\frac{1}{\sigma_k(\Pairsa)-\| \wh \Pairsa- \Pairsa  \|} 
\leq  \frac{2}{\sigma_k(\Pairsa)}
=  \frac{2}{\sigma_k(\canO)^2}
\]

Also, $\h W =  W  A D^{1/2} A^\t$ so that 
$\h M = A D^{1/2} A^\t M$ and
\begin{eqnarray*}
\| M - \h M\| 
& = &
\|M  -  A D^{1/2} A^\t M \| \\
& \leq &
\| M\| \|\I -A D^{1/2} A^\t \| \\
& = &
\| \I - D^{1/2} \| \\
& \leq &
\| \I - D^{1/2} \| \|\I +D^{1/2} \|\\
& = &
\|\I -D \| \\
\end{eqnarray*}
where we have used that $D\succeq 0$ and $D$ is diagonal.

We can bound this as follows:
\begin{eqnarray*}
\|\I -D \|  = &=&
\| \I - A D A^\t \| \\
&=&
\| \I - \h W^\t  \Pairsa \h W \| \\
&=&
 \|\h W^\t ( \wh \Pairsa_{,k} - \Pairsa )\h W \|\\
&\leq&
\|\h W\|^2 \| \wh \Pairsa_{,k}- \Pairsa  \|\\
&\leq&
\|\h W\|^2 ( \|\wh \Pairsa_{,k} -\wh \Pairsa\| + \| \wh \Pairsa- \Pairsa  \| )\\
&=&
\|\h W\|^2 ( \sigma_{k+1}(\wh \Pairsa)+\| \wh \Pairsa- \Pairsa  \| )\\
&\leq&
2\|\h W\|^2 \| \wh \Pairsa- \Pairsa  \|\\
&\leq&
4 \frac1{\sigma_k(\canO)^2} E_P
\end{eqnarray*}
using Weyl's theorem in the second to last
step. 

This implies $\|\I -D \| \leq 4 \frac1{\sigma_k(\canO)^2} E_P \leq 2 $ and  so
$\|D\|\leq 3$. 
Since $\h M = A D^{1/2} A^\t M$,
\[
\|\h M\|^2 \leq \| M\|^2 \|D\| \leq 3
\, .
\]

Again, by Weyl's theorem,
\[
\|\h W^+\|^2 = \sigma_1(\wh\Pairsa) \leq \sigma_1(\wh\Pairsa)+E_P \leq 1.5 \sigma_1(\Pairsa) = 1.5 \sigma_1(\canO)^2
\]
Using that $W = \h W  A D^{-1/2} A^\t$, we have:
\[
\| W^+\|^2 \leq \|\h W^+\|^2 \|D\| \leq 4.5 \sigma_1(\canO)^2
\]
and
\[
\|\h W^+ - W^+\| \leq  \|\h W^+\| \|\I -D^{1/2} \| \leq  \|\h W^+\|
\|\I -D \|  
\leq  \frac{6 \sigma_1(\canO) }{\sigma_k(\canO)^2} E_P
\]
which completes the argument for the first set of claims.

We now prove the final claim.  Let
$\Theta$ be the matrix of canonical angles between $\range(\Pairsa)$
and $\range(\wh \Pairsa_{,k})$. 
By  Wedin's theorem (see Lemma~\ref{lemma:wedinB}) (and noting that
the $k$-th singular value of $\wh \Pairsa_{,k}$ is greater than
$\sigma_k(\Pairsa)/2$), we have
\[
\|\sin \Theta\| 
\leq 
2 \frac{\|\Pairsa -\wh \Pairsa_{,k}\|}{\sigma_k(\Pairsa)} 
\leq 
2 \frac{\|\wh \Pairsa_{,k} -\wh \Pairsa\| + \| \wh \Pairsa- \Pairsa  \|}{\sigma_k(\Pairsa)} 
\leq 
4 \frac{E_P}{\sigma_k(\Pairsa)}
\]
Using Lemma~\ref{lemma:sin_project}, $\| \Pi - \Pi_W \|= 
\|\sin \Theta\| $,  which completes the proof.
\end{proof}

\begin{lemma}\label{lemma:triples_error}
Suppose $ E_P \leq \sigma_k(\Pairsa) /2$. For $\|\theta\|=1$, we have:
\[
\| W^\t \Triplesa (W \theta) W - \h W^\t \wh \Triplesa (\h W \theta)
\h W\| 
\ \leq \
c \left( \frac{E_P }{\sqrt{\pmin(\alpha_0+2)} \ \sigma_k(\canO)^{2}} 
  +\frac{E_T}{\sigma_k(\canO)^{3}} \right)
\]
where $c$ is a universal constant.
\end{lemma}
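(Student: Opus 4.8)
The plan is to decompose the error into a part due to estimating the third-moment tensor (governed by $E_T$) and a part due to the error in whitening (governed by $E_P$, via Lemma~\ref{lemma:pairs_error}), exploiting the exact functional form $\Triplesa(\eta) = \canO\diag(\canO^\t\eta)\diag(\gamma)\canO^\t$ recorded at the start of this section.

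First I would write the telescoping identity
\[
\h W^\t \wh\Triplesa(\h W\theta)\h W - W^\t\Triplesa(W\theta)W
= \h W^\t\bigl(\wh\Triplesa(\h W\theta) - \Triplesa(\h W\theta)\bigr)\h W
+ \bigl(\h W^\t\Triplesa(\h W\theta)\h W - W^\t\Triplesa(W\theta)W\bigr).
\]
For the first summand, the hypothesis $\|\Triplesa(\eta) - \wh\Triplesa(\eta)\| \le \|\eta\|E_T$ with $\eta = \h W\theta$, together with $\|\h W\theta\| \le \|\h W\|$ (as $\|\theta\|=1$) and $\|\h W\| \le 2/\sigma_k(\canO)$ from Lemma~\ref{lemma:pairs_error}, gives a bound of $\|\h W\|^3 E_T \le 8\, E_T/\sigma_k(\canO)^3$, which is the second term in the claimed estimate.

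For the second summand, substitute the functional form and set $M := W^\t\canO$ and $\h M := \h W^\t\canO$ as in Lemma~\ref{lemma:pairs_error}, so that $\h W^\t\Triplesa(\h W\theta)\h W = \h M\diag(\h M^\t\theta)\diag(\gamma)\h M^\t$ and $W^\t\Triplesa(W\theta)W = M\diag(M^\t\theta)\diag(\gamma)M^\t$. I would then telescope over the three factors that differ ($\h M$ vs.\ $M$, $\diag(\h M^\t\theta)$ vs.\ $\diag(M^\t\theta)$, and $\h M^\t$ vs.\ $M^\t$; the factor $\diag(\gamma)$ is common), bounding each of the three resulting terms by a product of operator norms of the remaining factors times the norm of the single difference. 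Here I would use, from Lemma~\ref{lemma:pairs_error}, $\|M\|=1$, $\|\h M\|\le 2$, and $\|M-\h M\| \le 4E_P/\sigma_k(\canO)^2$; from $\|\theta\|=1$, $\|\diag(M^\t\theta)\| \le \|M^\t\theta\|_2 \le 1$, $\|\diag(\h M^\t\theta)\| \le \|\h M\| \le 2$, and $\|\diag(M^\t\theta) - \diag(\h M^\t\theta)\| = \|(M-\h M)^\t\theta\|_\infty \le \|M-\h M\|$; and from the preamble of this section, $\|\diag(\gamma)\| = \max_i\gamma_i \le 2/\sqrt{\pmin(\alpha_0+2)}$. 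Summing gives an $O\!\left(E_P/(\sqrt{\pmin(\alpha_0+2)}\,\sigma_k(\canO)^2)\right)$ bound, and adding the two summands yields the lemma.

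The step requiring the most care is the bookkeeping: each of the several nested triangle inequalities must be applied with the correct factor norms so that the powers of $\sigma_k(\canO)$ and the $\pmin,\alpha_0$ dependence come out exactly as stated. The hypothesis $E_P \le \sigma_k(\Pairsa)/2$ is precisely what is needed to invoke Lemma~\ref{lemma:pairs_error} (ensuring $\h W$ genuinely whitens the rank-$k$ truncation and that $M$, $\h M$, $\h W^+$ are all under control). No conceptually hard step is involved once the functional forms are substituted.
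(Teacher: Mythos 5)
Your proposal is correct and follows essentially the same route as the paper's proof: the same split into a tensor-estimation term (bounded by $\|\h W\|^3 E_T \le 8E_T/\sigma_k(\canO)^3$) and a whitening term, followed by substituting the functional form $M\diag(M^\t\theta)\diag(\gamma)M^\t$ and telescoping over the $M$ versus $\h M$ factors using the bounds from Lemma~\ref{lemma:pairs_error} and $\max_i\gamma_i \le 2/\sqrt{\pmin(\alpha_0+2)}$. No substantive differences.
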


\begin{proof}
We have:
\begin{multline*}
\| W^\t \Triplesa (W \theta) W - \h W^\t \wh \Triplesa (\h W \theta)
\h W\|
\leq
\| W^\t \Triplesa (W \theta) W - \h W^\t \Triplesa (\h W \theta) \h W \|\\
+
\|\h W^\t \Triplesa (\h W \theta) \h W - \h W^\t \wh \Triplesa (\h W
\theta) \h W\|
\end{multline*}
For the second term:
\begin{eqnarray*}
\|\h W^\t \Triplesa (\h W \theta) \h W - \h W^\t \wh \Triplesa (\h W
\theta) \h W\|
& \leq &
\| \h W \|^2 \| \Triplesa (\h W \theta)  - \wh \Triplesa (\h W \theta) \|\\
& \leq &
\|\h W \|^3 E_T\\
& \leq &
\frac{8 }{\sigma_k(\canO)^{3}} E_T
\end{eqnarray*}

For the first term, by expanding out the terms and using the bounds in
Lemma~\ref{lemma:pairs_error}, we have:
\begin{eqnarray}
&& \| W^\t \Triplesa (W \theta) W - \h W^\t \Triplesa (\h W \theta) \h
W \| \nonumber \\
& = &
\|M \diag( M^\t \theta) \diag(\gamma)  M^\t
-
\h M \diag( \h M^\t \theta) \diag(\gamma)  \h M^\t
\|\nonumber \\
& \leq &
\| M \diag( M^\t \theta) \diag(\gamma)  M^\t
-
\h M \diag( M^\t \theta) \diag(\gamma)  \h M^\t
\|\nonumber \\
&&
+ \|\h M \diag(  (M-\h M) ^\t \theta) \diag(\gamma)  \h M^\t
\| \nonumber \\
& \leq &
\| M \diag( M^\t \theta) \diag(\gamma)  M^\t
-
\h M \diag( M^\t \theta) \diag(\gamma)  \h M^\t
\| + \max_i \gamma_i \|\h M\|^2 \| M-\h M\| \nonumber \\
& \leq &
c \max_i \gamma_i \| M-\h M\| \label{eq:simalar_arg}
\end{eqnarray}
for some constant $c$ (where the last step follows from expanding out terms).
\end{proof}

\subsection{SVD Accuracy}

Let $\sigma_i$ and $v_i$ denote the corresponding $i$-th singular
value (in increasing order) and vector of $W^\t \Triplesa (W \theta)
W$. Similarly, let $\hat v_i$ and $\hat \sigma_i$ denote the
corresponding $i$-th singular value (in increasing order) and vector
of $\h W^\t \wh \Triplesa (\h W \theta) \h W $.  For
convenience, choose the sign of $\hat v_i$ so that $\dotp{ v_i ,\hat
  v_i}\geq 0$.

The following lemma characterizes the accuracy of the SVD:

\begin{lemma}[SVD Accuracy] \label{lemma:E}
Suppose $ E_P \leq \sigma_k(\Pairsa) /2$. With probability greater
than $1-\delta'$, we have for all $i$: 
\[
\|v_i - \hat v_i \| \leq c \frac{k^3 \sqrt{\alpha_0+2} }{\delta'} 
\left( \frac{E_P }{\sqrt{\pmin(\alpha_0+2)} \ \sigma_k(\canO)^{2}} 
  +\frac{E_T}{\sigma_k(\canO)^{3}} \right)
\]
for some universal constant $c$.
\end{lemma}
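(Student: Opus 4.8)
The plan is to exploit the exact algebraic form of $W^\t \Triplesa(W\theta) W$ in the LDA model, turn the almost-sure genericity used in Theorem~\ref{thm:lda} into a quantitative spectral-gap lower bound that holds with probability $1-\delta'$, and then invoke Wedin's theorem with the perturbation bound from Lemma~\ref{lemma:triples_error}. For the exact form: working in canonical form with $\canO$ and $M := W^\t\canO$, Lemma~\ref{lemma:pairs_error} gives $MM^\t=\I$, so $M$ is orthogonal, and the identity $\Triplesa(\eta)=\canO\diag(\canO^\t\eta)\diag(\gamma)\canO^\t$ yields
\[
W^\t\Triplesa(W\theta)W \;=\; M\,\diag(M^\t\theta)\,\diag(\gamma)\,M^\t ,
\]
which is symmetric with singular vectors the columns $v_i = Me_i$ of $M$ and singular values $s_i := \gamma_i\,|(M^\t\theta)_i|$. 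Since $M$ is orthogonal and $\theta$ is uniform on $\sphere^{k-1}$ and independent of $\h W$, the vector $g:=M^\t\theta$ is again uniform on $\sphere^{k-1}$.

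The key step is to show that with probability at least $1-\delta'$ over $\theta$,
\[
\min_{i\neq j}|s_i-s_j|\;\geq\; c'\,\frac{\delta'\,\min_i\gamma_i}{k^{3}}\;\geq\;\frac{c'\,\delta'}{k^{3}\sqrt{\alpha_0+2}},
\]
the last step using $\gamma_i\geq 1/\sqrt{\alpha_0+2}$ recorded earlier in this section. Fix a pair $i\neq j$: the two-dimensional marginal of $(g_i,g_j)$ has density bounded by $O(k)$ on the unit disk (a standard fact about low-dimensional marginals of the uniform law on $\sphere^{k-1}$), while $\{(x,y)\in[-1,1]^2:\ |\gamma_i|x|-\gamma_j|y||\leq t\}$ is a union of four strips of total area $O(t/\min_i\gamma_i)$, so $\Pr[\,|s_i-s_j|\leq t\,]=O(kt/\min_i\gamma_i)$. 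A union bound over the $\binom{k}{2}$ pairs with $t=\Theta(\delta'\min_i\gamma_i/k^3)$ gives the claim (and in passing reproves, quantitatively, the almost-sure distinctness of the singular values used in Theorem~\ref{thm:lda}).

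For the perturbation, put $E := W^\t\Triplesa(W\theta)W-\h W^\t\wh\Triplesa(\h W\theta)\h W$, so Lemma~\ref{lemma:triples_error} bounds $\|E\|\leq c\bigl(\tfrac{E_P}{\sqrt{\pmin(\alpha_0+2)}\,\sigma_k(\canO)^2}+\tfrac{E_T}{\sigma_k(\canO)^3}\bigr)$. On the event of the previous step, if $\|E\|>\tfrac12\min_{j\neq i}|s_i-s_j|$ then the asserted right-hand side already exceeds an absolute constant, so the bound is trivial since $\|v_i-\hat v_i\|\leq 2$; otherwise $\|E\|\leq\tfrac12\min_{j\neq i}|s_i-s_j|$, and Wedin's theorem (Lemma~\ref{lemma:wedinB}) applied to the one-dimensional singular subspace spanned by $v_i$, together with the sign normalization $\dotp{v_i,\hat v_i}\geq 0$, gives
\[
\|v_i-\hat v_i\|\;\leq\;\sqrt2\,\sin\angle(v_i,\hat v_i)\;\leq\;\frac{c\,\|E\|}{\min_{j\neq i}|s_i-s_j|}\;\leq\;\frac{c\,k^{3}\sqrt{\alpha_0+2}}{\delta'}\,\|E\| .
\]
Substituting the bound on $\|E\|$ yields the statement.

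The main obstacle is the anti-concentration step: pinning down $\min_{i\neq j}|\gamma_i|g_i|-\gamma_j|g_j||$ with the correct $k^3/\delta'$ and $\sqrt{\alpha_0+2}$ dependence. The two fiddly points are the absolute values (which split each bad strip into four pieces) and the fact that $g$ lives on the sphere rather than having independent coordinates; passing to the two-dimensional marginal $(g_i,g_j)$ and using its $O(k)$ density bound is what makes the union bound go through cleanly, and the lower bound $\gamma_i\geq 1/\sqrt{\alpha_0+2}$ is what converts $1/\min_i\gamma_i$ into the $\sqrt{\alpha_0+2}$ factor appearing in the lemma.
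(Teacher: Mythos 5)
Your proposal follows essentially the same route as the paper's proof: the exact form $M\diag(M^\t\theta)\diag(\gamma)M^\t$ with $M$ orthogonal, a probabilistic lower bound of order $\delta'\min_i\gamma_i/k^{3}$ on the singular-value gaps via a union bound over pairs, and then Wedin's theorem combined with the perturbation bound of Lemma~\ref{lemma:triples_error}. The only localized difference is the anti-concentration mechanism: the paper applies the Dasgupta--Gupta projection bound (Lemma~\ref{lemma:sanjoy}) to the one-dimensional projections $\dotp{\theta, M\diag(\gamma)(e_i-e_j)}$ and $\dotp{\theta,M\diag(\gamma)e_i}$ with $\delta_0=\delta'/k^2$, whereas you work with the two-dimensional marginal of $(g_i,g_j)$ and strip areas; both land at the same $k^3\sqrt{\alpha_0+2}/\delta'$ rate. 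In fact your version is slightly more careful on one point the paper elides --- the singular values are $\gamma_i|g_i|$, so the relevant gap is $\bigl|\gamma_i|g_i|-\gamma_j|g_j|\bigr|$ rather than $|\dotp{\theta,M\diag(\gamma)(e_i-e_j)}|$, and your four-strip accounting handles the absolute values explicitly; the one small thing you should add is a matching lower bound on each $s_i$ itself (via $\Pr[|g_i|\le t/\gamma_i]$, exactly as for the pairs), since the Wedin-type bound used here requires the target singular value to be bounded away from zero as well as from its neighbors.
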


First, let us provide a few lemmas. Let 
\[
\| W^\t \Triplesa (W \theta) W - 
\h W^\t \wh \Triplesa (\h W \theta) \h W\| 
\leq E
\]
where the bound on $E$ is provided in Lemma~\ref{lemma:triples_error}.

\begin{lemma}
Suppose for all $i$
\begin{align*}
\sigma_i &\geq \Delta\\
|\sigma_i -\sigma_{i+1}| & \geq \Delta
\end{align*}
For all $i$,  $v_i$ and $\hat v_i$, we have:
\[
\|v_i - \hat v_i \| \leq 2 \frac{\sqrt{k} E}{\Delta-E}
\]
where the sign of $\hat v_i$ chosen so $\dotp{\hat v_i ,\hat v_i}
\geq 0$.
\end{lemma}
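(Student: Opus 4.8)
I would treat this as a generic singular-vector perturbation statement. Abbreviate $T := W^\t \Triplesa(W\theta) W$ and $\hat T := \h W^\t \wh\Triplesa(\h W\theta)\h W$; the only facts used are $\|T - \hat T\| \le E$ together with the hypotheses that the singular values $\sigma_1 \le \cdots \le \sigma_k$ of $T$ satisfy $\sigma_1 \ge \Delta > 0$ and $\sigma_{j+1} - \sigma_j \ge \Delta$, plus the corresponding indexing of $\hat\sigma_j$ and $\hat v_j$ for $\hat T$. The plan is \emph{not} to pass to a symmetric eigenvector problem via $TT^\t$ versus $\hat T\hat T^\t$ (which would cost a spurious factor of $\|T\| = \sigma_k$ in the numerator), but to expand $\hat v_i$ directly in the left singular basis of $T$, carrying along the matching right singular vector of $\hat T$.

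First I would fix an SVD $T = \sum_j \sigma_j v_j w_j^\t$; since every $\sigma_j \ge \Delta > 0$, both $\{v_j\}$ and $\{w_j\}$ are orthonormal bases of $\R^k$, with $T w_j = \sigma_j v_j$ and $T^\t v_j = \sigma_j w_j$. Let $\hat v_i, \hat w_i$ be the matching left/right singular vectors of $\hat T$ at $\hat\sigma_i$, and write $\hat v_i = \sum_j c_j v_j$ and $\hat w_i = \sum_j d_j w_j$, so that $\sum_j c_j^2 = \sum_j d_j^2 = 1$ and, by the stated sign convention, $c_i = \dotp{v_i, \hat v_i} \ge 0$. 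Projecting the identities $\hat T^\t \hat v_i = \hat\sigma_i \hat w_i$ onto $w_m$ and $\hat T \hat w_i = \hat\sigma_i \hat v_i$ onto $v_m$, and using $T^\t v_m = \sigma_m w_m$ and $T w_m = \sigma_m v_m$, yields, for each index $m$,
\begin{align*}
\hat\sigma_i\, d_m &= \sigma_m c_m + w_m^\t(\hat T - T)^\t \hat v_i, \\
\hat\sigma_i\, c_m &= \sigma_m d_m + v_m^\t(\hat T - T)\hat w_i,
\end{align*}
and each error term on the right is bounded in absolute value by $\|\hat T - T\| \le E$. Thus $(c_m, d_m)$ solves a $2\times 2$ linear system whose coefficient matrix is symmetric with rows $(-\sigma_m, \hat\sigma_i)$ and $(\hat\sigma_i, -\sigma_m)$, and hence has smallest singular value $\min(|\hat\sigma_i - \sigma_m|,\ \hat\sigma_i + \sigma_m) = |\hat\sigma_i - \sigma_m|$. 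For $m \ne i$, Weyl's theorem for singular values gives $|\hat\sigma_i - \sigma_i| \le E$, so $|\hat\sigma_i - \sigma_m| \ge |\sigma_i - \sigma_m| - E \ge \Delta - E$ (the gap hypothesis makes any two distinct $\sigma$'s differ by at least $\Delta$). Inverting the system then gives $|c_m| \le \sqrt2\, E/(\Delta - E)$ for every $m \ne i$.

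To finish, I would use $c_i \in [0,1]$ to write
\[
\|v_i - \hat v_i\|^2 = 2(1 - c_i) \le 2(1 - c_i^2) = 2\sum_{m \ne i} c_m^2 \le 2(k-1)\frac{2E^2}{(\Delta - E)^2} \le \frac{4kE^2}{(\Delta - E)^2},
\]
so that $\|v_i - \hat v_i\| \le 2\sqrt k\, E/(\Delta - E)$, as claimed.

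The one genuinely delicate point, and where I expect the main obstacle, is the non-symmetry of $\hat T$: since $\hat v_i$ is not an eigenvector of any symmetric matrix, one cannot invoke Davis--Kahan/Wedin for it directly, and routing through $\hat T\hat T^\t$ versus $TT^\t$ inflates the perturbation to $\|\hat T\hat T^\t - TT^\t\| \lesssim (\|T\| + \|\hat T\|)E$, which degrades the final bound by a factor of order $\sigma_k/\Delta$. Coupling $\hat v_i$ with its right singular vector $\hat w_i$ through the $2\times 2$ system is exactly what avoids this and recovers the stated constant; everything else --- the sign convention, and pinning down $\hat\sigma_i$ via Weyl's inequality --- is routine once one works throughout with nonnegative singular values.
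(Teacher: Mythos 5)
Your proof is correct, and it reaches the paper's exact bound $2\sqrt{k}E/(\Delta-E)$ by a genuinely different route. The paper's own argument is shorter: it writes $\|v_i-\hat v_i\|^2 \le 2\sin^2\theta$, gets the gap $\Delta - E$ from Weyl's inequality exactly as you do, and then invokes the Frobenius-norm version of Wedin's theorem (their Lemma on canonical angles, with the split $U_1=\mathrm{span}(v_i)$) to obtain $|\sin\theta|\le \sqrt{2}\,\|{\cdot}\|_F/(\Delta-E)\le \sqrt{2k}\,E/(\Delta-E)$; the factor $\sqrt{k}$ enters through $\|\hat T - T\|_F \le \sqrt{k}\,\|\hat T - T\|_2$. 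Your argument instead re-derives, from scratch, the one-dimensional case of Wedin: the coupled $2\times 2$ systems in $(c_m,d_m)$ obtained by projecting $\hat T\hat w_i=\hat\sigma_i\hat v_i$ and $\hat T^\t\hat v_i=\hat\sigma_i\hat w_i$ onto the unperturbed singular bases are precisely the mechanism inside Wedin's proof, and your $\sqrt{k}$ arises from summing $k-1$ componentwise bounds --- the same source as the Frobenius conversion, just made explicit. What your version buys is self-containedness (no black-box perturbation theorem) and a correct diagnosis of why one must couple left and right singular vectors rather than symmetrize via $TT^\t$; what the paper's version buys is brevity. The only cosmetic gaps on your side are implicit assumptions the paper also makes (that $E<\Delta$ so the bound is nonvacuous, and that the consecutive-gap hypothesis on ordered singular values yields $|\sigma_i-\sigma_m|\ge\Delta$ for all $m\ne i$), neither of which is a real defect.
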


\begin{proof}
Let $\cos(\theta) = \dotp{v_i, \hat v_i}$ (which is positive since we
assume  $\dotp{ v_i ,\hat v_i}\geq 0$). We have:
\[
\|v_i - \hat v_i \|^2 =2(1-\cos(\theta))  = 
\leq
2(1-\cos^2(\theta)) = 2\sin^2(\theta)
\]
By Weyl's theorem (see Lemma~\ref{lemma:weyl}) and by assumption,
\[
\min_{i} |\hat\sigma_i - \sigma_j| \geq \Delta - E
\]
and
\[
\min_{j\neq i} |\hat\sigma_i - \sigma_j| \geq \min_{j\neq i}
|\sigma_i - \sigma_j| - E \geq \Delta - E
\]
By Wedin's theorem (see Lemma~\ref{lemma:wedin} applied to the split
where $v_i$ and $ v_i$ correspond to the  subspaces
$U_1$ and $ U_1$),
\[
|\sin(\theta)| \leq \sqrt{2} \frac{E_F}{\Delta-E} \leq \sqrt{2} \frac{\sqrt{k} E}{\Delta-E}
\]
\end{proof}

\begin{lemma} 
Fix any $\delta \in (0,1)$ and matrix $A \in \R^{k \times k}$.
Let $\theta \in \R^k$ be a random vector distributed uniformly over
$\sphere^{k-1}$. With probability greater than $1-\delta$, we have
\[
\min_{i \neq j} |\dotp{\theta,A(e_i -
e_j)}| > \frac{\min_{i \neq j} \|A(e_i-e_j)\| \cdot
\delta}{\sqrt{e} k^{2.5}} 
\]
and
\[
\min_{i } |\dotp{\theta,Ae_i }| > \frac{\min_{i} \|Ae_i\| \cdot
\delta}{\sqrt{e} k^{2.5}} 
\]
\end{lemma}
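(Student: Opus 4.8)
The plan is to reduce both inequalities to a single one‑dimensional anti‑concentration estimate and then take a union bound over the $O(k^2)$ directions involved. The key fact is that for a fixed unit vector $u\in\R^k$, rotational invariance of the uniform measure on $\sphere^{k-1}$ implies that $\dotp{\theta,u}$ has the same distribution as the first coordinate $\theta_1$, whose density on $[-1,1]$ is $c_k(1-t^2)^{(k-3)/2}$ with $c_k=\Gamma(k/2)/(\sqrt{\pi}\,\Gamma((k-1)/2))$. For $k\ge 3$ this density attains its maximum $c_k$ at $t=0$, so $\Pr(|\dotp{\theta,u}|\le s)\le 2sc_k$ for every $s>0$; the cases $k\le 2$ are checked by hand and are only easier.

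The next step is to make $c_k$ explicit. By Wendel's inequality $\Gamma(x+\beta)/\Gamma(x)\le x^\beta$ (valid for $x>0$ and $0<\beta<1$), applied with $x=(k-1)/2$ and $\beta=1/2$, one gets $\Gamma(k/2)/\Gamma((k-1)/2)\le\sqrt{(k-1)/2}$, hence $c_k\le\sqrt{k/(2\pi)}$ and $\Pr(|\dotp{\theta,u}|\le s)\le s\sqrt{2k/\pi}$ for all $k$ and all $s>0$. (One could instead use $\dotp{\theta,u}\overset{d}{=}g_1/\|g\|$ with $g$ a standard Gaussian in $\R^k$ and split on $\{\|g\|\le\lambda\}$, but that introduces a spurious $\sqrt{\log(k/\delta)}$ factor, so working with the exact density is preferable here.)

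To conclude, fix a pair $i\ne j$, write $|\dotp{\theta,A(e_i-e_j)}|=\|A(e_i-e_j)\|\cdot|\dotp{\theta,u_{ij}}|$ with $u_{ij}:=A(e_i-e_j)/\|A(e_i-e_j)\|$, and apply the anti‑concentration bound with $s=s_{ij}:=t/\|A(e_i-e_j)\|$, where $t$ denotes the claimed threshold $\min_{p\ne q}\|A(e_p-e_q)\|\cdot\delta/(\sqrt{e}\,k^{2.5})$. Since $\min_{p\ne q}\|A(e_p-e_q)\|\le\|A(e_i-e_j)\|$ we have $s_{ij}\le\delta/(\sqrt{e}\,k^{2.5})$, so the probability that the $(i,j)$ term falls below $t$ is at most $\delta\sqrt{2/\pi}/(\sqrt{e}\,k^{2})$; the identical estimate applies to each singleton term $|\dotp{\theta,Ae_i}|$ against its own threshold. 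Since there are $\binom{k}{2}+k\le k^{2}$ such events, a union bound gives total failure probability at most $\sqrt{2/(\pi e)}\,\delta<\delta$, which is exactly the stated conclusion.

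The union‑bound bookkeeping is routine. The step that actually needs care is pinning down the constant: the stated bound, with $\sqrt{e}$ in the denominator, leaves only the slack $\sqrt{2/(\pi e)}\approx 0.48$ in the final union bound, so one genuinely needs the sharp density estimate $c_k\le\sqrt{k/(2\pi)}$ rather than a crude $O(\sqrt{k})$ bound from Stirling; this elementary Gamma‑ratio inequality is the one place where a looser intermediate constant would fail to reproduce the inequality as written.
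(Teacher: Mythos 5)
Your proof is correct, but it follows a different route from the paper. The paper simply invokes the Dasgupta--Gupta concentration lemma (its Lemma~\ref{lemma:sanjoy}): for a fixed $v$, $\Pr\bigl[|\dotp{\theta,v}|\le \|v\|\beta/\sqrt{k}\bigr]\le \exp\bigl(\tfrac12(1-\beta^2+\ln\beta^2)\bigr)$, and the specific choice $\beta=\delta_0/\sqrt{e}$ makes the right-hand side collapse to $\sqrt{e}\,e^{-\beta^2/2}\beta\le\delta_0$ --- this is precisely where the $\sqrt{e}$ in the denominator of the stated threshold comes from. Setting $\delta_0=\delta/k^2$ and union bounding over $\binom{k}{2}+k\le k^2$ events then gives exactly the stated constant. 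You instead derive the one-dimensional anti-concentration bound from scratch: rotational invariance reduces to the first coordinate, whose exact marginal density $c_k(1-t^2)^{(k-3)/2}$ you bound at its mode via Wendel's inequality, yielding $\Pr(|\dotp{\theta,u}|\le s)\le s\sqrt{2k/\pi}$, and the union bound then closes with slack $\sqrt{2/(\pi e)}<1$. Your argument is more self-contained (no external concentration lemma) and in fact gives a slightly better constant; the paper's is shorter because the black-box lemma is already stated in its appendix. Two minor remarks: your final comment that the \emph{sharp} estimate $c_k\le\sqrt{k/(2\pi)}$ is genuinely needed is a bit overstated (any bound $2c_k\le\sqrt{e\,k}$ would do, e.g.\ $c_k\le\sqrt{k}/2$ suffices), and for $k=2$ the density is unbounded near $\pm1$ so the ``density at the mode'' argument must be replaced by the explicit CDF bound $\tfrac{2}{\pi}\arcsin(s)\le 2s/\sqrt{\pi}$ --- which you correctly flag as a case to check by hand, and which does go through.
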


\begin{proof}
By Lemma~\ref{lemma:sanjoy}, for any fixed pair $\{i,j\} \subseteq
[k]$ and $\beta := \delta_0 / \sqrt{e}$,
\[ \Pr\biggl[ |\dotp{\theta,A(e_i-e_j)}| \leq \|A(e_i-e_j)\| \cdot
\frac{1}{\sqrt{k}} \cdot \frac{\delta_0}{\sqrt{e}} \biggr]
\leq exp\left(\frac12 (1 - (\delta_0^2 / e) + \ln (\delta_0^2/e)) \right)
\leq \delta_0
.
\]
Similarly,  for each $i$
\[ \Pr\biggl[ |\dotp{\theta,Ae_i)}| \leq \|Ae_i\| \cdot
\frac{1}{\sqrt{k}} \cdot \frac{\delta_0}{\sqrt{e}} \biggr]
\leq exp\left(\frac12 (1 - (\delta_0^2 / e) + \ln (\delta_0^2/e)) \right)
\leq \delta_0
.
\]
Let $\delta_0 := \delta / k^2$. The claim follows by a union bound over all ${k \choose 2}+k\leq k^2$
possibilities.  
\end{proof}

We now complete the argument.

\begin{proof}[Proof of Lemma~\ref{lemma:E}] Choose
  $A=\diag(\gamma_1,\gamma_2,\dotsc,\gamma_k) M^\t $, where $M=W^\t  \canO$. The proof of
  Theorem~\ref{thm:skew} shows $\dotp{e_i, A\theta}$ are the singular
  values. Also the minimal singular value of $A$ is greater than
  $ \min_i \gamma_i \geq \frac{1}{\sqrt{ \alpha_0+2}} \leq$ (since $MM^\t=\I$). Hence, we have:
\begin{eqnarray*}
\sigma_i &\geq \frac{\delta}{2 k^{2.5}\sqrt{ \alpha_0+2}} &:=\Delta \\
|\sigma_i -\sigma_{i+1}| & \geq 
\frac{\delta}{2 k^{2.5}\sqrt{ \alpha_0+2}} 
\end{eqnarray*}
Suppose $E \leq \Delta/2$. Here,
\[
\|v_i - \hat v_i \| \leq 2 \frac{\sqrt{k} E}{\Delta-E} \leq 4 \frac{\sqrt{k} E}{\Delta}
=
8 \frac{k^3 \sqrt{ \alpha_0+2} }{\delta}D
\]
Also, since $\|v_i - \hat v_i \| \leq 2$ the above also holds for
$E > \Delta/2$, which proves the first claim.
\end{proof}

\subsection{Reconstruction Accuracy}

\begin{lemma}\label{lemma:reconstruct}
Suppose $ E_P \leq \sigma_k(\Pairsa) /2$. With probability greater
than $1-\delta'$, we have that for all $i$: 
\[
\|O_i - \frac{1}{\h Z_i} \ (\h W^+)^\t  \hat v_i \| 
\leq  c
\frac{ (\alpha_0+2)^2 k^3}{\pmin^{2} \sigma_k(O)^{3} \delta' }  
\max\{E_P,E_T\}
\]
where $\{O_1,O_2,\ldots O_k\}$ is some permutation of the columns of $O$.
\end{lemma}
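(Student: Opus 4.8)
The plan is to compare the algorithm's output $\h O_i := \frac{1}{\h Z_i}(\h W^+)^\t \h v_i$ with an \emph{idealized} reconstruction $\bar O_i := \frac{1}{Z^W_i}(W^+)^\t v_i$, where $W$ and $v_i$ are the exact whitening matrix and exact singular vectors introduced in the perturbation analysis and $Z^W_i := \frac{2}{(\alpha_0+2)\, v_i^\t W^\t \Triplesa(W v_i) W v_i}$ is the exact analogue of $\h Z_i$. I would then bound $\|\bar O_i - O_{\pi(i)}\|$ (a pure ``wrong-subspace'' error) and $\|\h O_i - \bar O_i\|$ (a pure perturbation error) separately, invoking Lemmas~\ref{lemma:pairs_error}, \ref{lemma:triples_error}, and~\ref{lemma:E}, and at the very end convert all factors of $\sigma_j(\canO)$ and $\gamma_j$ into $\sigma_k(O)$, $\pmin$, $\alpha_0$, $k$, $\delta'$ via the inequalities recorded at the start of this section (together with $\sigma_k(O) \le 1$, which follows from $k\,\sigma_k(O)^2 \le \|O\|_F^2 \le k$).

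First I would pin down the idealized quantities. In canonical form, the functional form of $\Triplesa$ recorded above (cf.\ Lemma~\ref{lemma:lda_pairs_triples}) gives $W^\t \Triplesa(W\theta) W = M\diag(M^\t\theta)\diag(\gamma) M^\t$ with $M := W^\t\canO$ orthogonal (as in the proof of Theorem~\ref{thm:skew}), so with probability one over $\theta$ the singular vectors are $v_i = s_i M e_{\pi(i)}$ for signs $s_i \in \{\pm 1\}$ and a permutation $\pi$. A one-line computation then gives $(W^+)^\t v_i = s_i \Pi_W \canO_{\pi(i)}$ and $v_i^\t W^\t\Triplesa(W v_i) W v_i = s_i \gamma_{\pi(i)}$, hence $Z^W_i = s_i Z_{\pi(i)}$ with $Z_{\pi(i)} = \sqrt{\alpha_{\pi(i)}/((\alpha_0+1)\alpha_0)}$, and therefore $\bar O_i = \Pi_W O_{\pi(i)}$; in particular $\h O_i$ and $\bar O_i$ are manifestly invariant under flipping the sign of $\h v_i$ (resp.\ $v_i$), which is why only the permutation, not the sign, survives in the statement. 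Since $O_{\pi(i)} \in \range(O)$ and $\|O_{\pi(i)}\|_2 \le 1$, Lemma~\ref{lemma:pairs_error} then yields $\|\bar O_i - O_{\pi(i)}\| = \|(\Pi_W - \Pi)O_{\pi(i)}\| \le \|\Pi_W - \Pi\| \le 4 E_P / \sigma_k(\canO)^2$.

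For the perturbation error I would use the decomposition
\[
\h O_i - \bar O_i \;=\; \frac{1}{\h Z_i}\bigl((\h W^+)^\t\h v_i - (W^+)^\t v_i\bigr) \;+\; (W^+)^\t v_i \Bigl(\tfrac{1}{\h Z_i} - \tfrac{1}{Z^W_i}\Bigr).
\]
Here $\tfrac{1}{|\h Z_i|} = \tfrac{\alpha_0+2}{2}|\h\gamma_i|$ with $\h\gamma_i := \h v_i^\t \h W^\t \wh\Triplesa(\h W\h v_i)\h W\h v_i$, and $|\h\gamma_i| \le \gamma_{\pi(i)} + |\h\gamma_i - \gamma^W_i|$ with $\gamma_{\pi(i)} \le 2/\sqrt{\pmin(\alpha_0+2)}$; the vector difference splits as $(\h W^+)^\t(\h v_i - v_i) + ((\h W^+)^\t - (W^+)^\t)v_i$ and is at most $\|\h W^+\|\,\|\h v_i - v_i\| + \|\h W^+ - W^+\|$ by Lemmas~\ref{lemma:pairs_error} and~\ref{lemma:E}; and $\|(W^+)^\t v_i\| = \|\Pi_W \canO_{\pi(i)}\| \le 1/\sqrt{\alpha_0+1}$. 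The only ingredient not already packaged in an earlier lemma is the scalar error $|\h\gamma_i - \gamma^W_i|$, where $\gamma^W_i := v_i^\t W^\t\Triplesa(W v_i)W v_i$. I would bound it by a two-stage telescoping: first replace $(\h W, \wh\Triplesa)$ by $(W, \Triplesa)$ at the \emph{fixed} argument $\h v_i$, at cost at most $E$ (the bound of Lemma~\ref{lemma:triples_error}, since $\|\h v_i\|=1$ and the deviation is sandwiched between unit vectors); then move from $\h v_i$ to $v_i$ inside the fixed, fully symmetric trilinear form $\mathcal T(a,b,c) = \sum_j \gamma_j (M^\t a)_j (M^\t b)_j (M^\t c)_j$, at cost at most $3\max_j\gamma_j\,\|\h v_i - v_i\|$ by multilinearity (using that $M$ is orthogonal and $\|v_i\| = \|\h v_i\| = 1$). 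Thus $|\h\gamma_i - \gamma^W_i| \le E + 3\max_j\gamma_j\,\|\h v_i - v_i\|$.

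Finally I would substitute the bound on $\|\h v_i - v_i\|$ from Lemma~\ref{lemma:E} and the bound on $E$ from Lemma~\ref{lemma:triples_error}, replace $\sigma_k(\canO)^{-1} \le \sqrt{(\alpha_0+1)/\pmin}/\sigma_k(O)$, $\sigma_1(\canO) \le 1/\sqrt{\alpha_0+1}$, $\gamma_j \le 2/\sqrt{\pmin(\alpha_0+2)}$, and collect terms; every contribution is then dominated by $c\,\frac{(\alpha_0+2)^2 k^3}{\pmin^2 \sigma_k(O)^3 \delta'}\max\{E_P, E_T\}$, the $k^3/\delta'$ being inherited from Lemma~\ref{lemma:E} through the two terms $\|\h W^+\|\,\|\h v_i - v_i\|/|\h Z_i|$ and $\|(W^+)^\t v_i\|\cdot\tfrac{\alpha_0+2}{2}\cdot 3\max_j\gamma_j\,\|\h v_i - v_i\|$. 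I expect the two-stage telescoping for $|\h\gamma_i - \gamma^W_i|$ (needed precisely because $\h W \neq W$) to be the only genuinely new step; the rest is the routine but lengthy bookkeeping of checking that the product of all conversion factors never exceeds the stated exponents.
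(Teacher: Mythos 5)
Your proposal is correct and follows essentially the same route as the paper's proof: the same idealized intermediate quantity $\Pi_W O_{\pi(i)} = \tfrac{1}{Z_i}(W^+)^\t v_i$, the same split into a projection error $\|\Pi-\Pi_W\|$ plus a telescoped perturbation error in $(\h W^+, \h v_i, \h Z_i)$ versus $(W^+, v_i, Z_i)$, and the same two-stage telescoping (swap the operators at a fixed argument, then move the argument inside the trilinear form) to control the empirical normalization constant. The only differences are cosmetic bookkeeping choices (which factor of $1/Z_i$ versus $1/\h Z_i$ multiplies which difference, and your slightly more explicit treatment of the sign ambiguity in $v_i$).
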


\begin{proof}
First, observe that $W$ whitens $\Pi_W \Pairsa \Pi_W^\t$. To see,
observe that $\Pi_W^\t = \Pi_W$ (since $\Pi_W$ is an orthogonal 
projection) and  $\Pi_W^\t W = \Pi_W W = W$; so
\[
W^\t \left( \Pi_W \Pairsa \Pi_W^\t\right) W
= 
(\Pi_W^\t W)^\t \Pairsa (\Pi_W^\t W) 
= 
W^\t \Pairsa W = \I
\]
Using the definition $M=W^\t  \canO = W^\t  \Pi_W \canO $, we have:
\[
W^\t \Triplesa(W\theta) W =  M \diag( M^\t \theta) \diag(\gamma_1,\gamma_2,\dotsc,\gamma_k)  M^\t
\]
Since $\range(W) = \range(\Pi_W \canO)$ the proof of Theorem~\ref{thm:skew} shows that:
\[
\Pi_W \canO_i = (W^+)^\t v_i
\]
Define:
\[
Z_i := \frac{2} {(\alpha_0+2) ( W v_i )^\t  \Triplesa( W  v_i) W v_i}
\, ,
\]
Since $v_i=M^\t e_i$ are the singular vectors of $W^\t \Triplesa(W\theta) W$,
we have 
\[
( W v_i )^\t  \Triplesa( W  v_i) W v_i = \gamma_i
\]
and so:
\[
Z_i = \sqrt{\frac{\alpha_i}{(\alpha_0+1)\alpha_0}} \, .
\]
This implies $\canO_i = Z_i O_i$ and so
\[
\Pi_W O_i = \frac{1}{Z_i} \Pi_W \canO_i = \frac1{Z_i} (W^+)^\t v_i
\]
since $\Pi_W \canO_i = (W^+)^\t v_i$. 

Now let us bound the reconstruction error as follows:
\begin{eqnarray*}
& & \|O_i - \frac{1}{\h Z_i} \ (\h W^+)^\t  \hat v_i \| \\
& \leq & \|O_i - \Pi_W O_i \| 
+\|\Pi_W O_i - \frac{1}{\h Z_i} \ (\h W^+)^\t  \hat v_i \|  \\
&= & \|\Pi O_i- \Pi_W O_i \| 
+\|\frac1{Z_i} (W^+)^\t v_i - \frac{1}{\h Z_i} \ (\h W^+)^\t
\hat v_i \|  \\
&\leq & \|\Pi - \Pi_W \| \|O_i\|+
\|\frac1{Z_i} (W^+)^\t v_i - \frac1{Z_i} (W^+)^\t  \hat v_i \|  
+\|\frac1{Z_i} (W^+)^\t \hat v_i - \frac{1}{\h Z_i} \ (\h W^+)^\t
\hat v_i \|  \\
&\leq & \|\Pi - \Pi_W \| +
\frac{\|W^+\|}{Z_i}  \| v_i -  \hat v_i \|  
+\|\frac1{Z_i} W^+ - \frac{1}{\h Z_i} \ \h W^+  \|  \\
&\leq & \|\Pi - \Pi_W \| +
\frac{\|W^+\|}{Z_i}  \| v_i -  \hat v_i \|  
+ \|\frac1{Z_i} W^+ - \frac1{Z_i} \h W^+  \| +
\|\frac1{ Z_i} \h W^+ - \frac{1}{\h Z_i} \ \h W^+  \| \\
&\leq & \|\Pi - \Pi_W \| +
\frac{\|W^+\|}{Z_i}  \| v_i -  \hat v_i \|  
+\frac1{Z_i} \left\| W^+ - \ \h W^+\right\|+
\|\h W^+\|  \left|\frac1{Z_i} - \frac1{\h Z_i}  \right| 
\end{eqnarray*}

For bounding $|\frac1{Z_i} - \frac1{\h Z_i}  |$, first observe:
\begin{eqnarray*}
&&|(W v_i )^\t \Triplesa( W v_i)  W  v_i -
(\h W  \h v_i )^\t \wh\Triplesa(\h W \h v_i) \h W \h v_i 
|\\
&\leq &
|(W v_i )^\t \Triplesa( W v_i)  W  v_i -
(W \h v_i )^\t \Triplesa(W \h v_i) W \h v_i 
|\\
&&
+
|(W \h v_i )^\t \Triplesa(W \h v_i) W \h v_i  -
(\h W  \h v_i )^\t 
\wh\Triplesa(\h W \h v_i) \h W \hat v_i 
|
\\
&\leq &
c \|v_i -\h v_i\| \max_i \gamma_i
+
\| W^\t \Triplesa (W \h v_i) W - \h W^\t \wh \Triplesa (\h W \h v_i)
\h W\| \\
\end{eqnarray*}
where $c$ is a constant and where that last step uses an argument
similar to that of Equation~\ref{eq:simalar_arg} (along with the
bounds $\|W^\t \canO\|=1$, $\|M^\t v_i\|\leq 1$ and $\|M^\t \h
v_i\|\leq 1$). Continuing,
\begin{eqnarray*}
&&|(W v_i )^\t \Triplesa( W v_i)  W  v_i -
(\h W  \h v_i )^\t \wh\Triplesa(\h W \h v_i) \h W \h v_i 
|\\
&\leq &
c_1 \|v_i -\h v_i\| \max_i \gamma_i
+
c_2 \left( \frac{E_P }{\sqrt{\pmin(\alpha_0+2)} \ \sigma_k(\canO)^{2}} 
  +\frac{E_T}{\sigma_k(\canO)^{3}} \right) \\
&\leq &
c_3 \frac{k^3 }{\delta' \sqrt{\pmin}} 
\left( \frac{E_P }{\sqrt{\pmin(\alpha_0+2)} \ \sigma_k(\canO)^{2}} 
  +\frac{E_T}{\sigma_k(\canO)^{3}} \right)
\end{eqnarray*}
using that $\gamma_i \leq 2 \frac1{\sqrt{\pmin(\alpha_0+2)}}$ in the
last step (for constants $c_1,c_2,c_3$).

The fourth
term is bounded as follows:
\begin{eqnarray*}
&&\|\h W^+\|  \left|\frac1{Z_i} - \frac1{\h Z_i}  \right|  \\
&= & \|\h
W^+\| \frac{\alpha_0+2}{2}\ 
|(W v_i )^\t \Triplesa( W v_i)  W  v_i -
(\h W  \h v_i )^\t \wh\Triplesa(\h W \h v_i) \h W \h v_i |
\\
& \leq & c_1 \|\h W^+\|  \frac{k^3 (\alpha_0+2) }{\delta' \sqrt{\pmin}} 
\left( \frac{E_P }{\sqrt{\pmin(\alpha_0+2)} \ \sigma_k(\canO)^{2}} 
  +\frac{E_T}{\sigma_k(\canO)^{3}} \right)\\
& \leq & c_2 \sigma_1(\canO)  \frac{k^3 (\alpha_0+2) }{\delta' \sqrt{\pmin}} 
\left( \frac{E_P }{\sqrt{\pmin(\alpha_0+2)} \ \sigma_k(\canO)^{2}} 
  +\frac{E_T}{\sigma_k(\canO)^{3}} \right)\\
\end{eqnarray*}
for constants $c_1,c_2,c_3$.

We have:
\[
\frac{\|W^+\|}{Z_i} \leq c \frac{\sigma_1(\canO)}{Z_i} = c \sigma_1(\canO)
\sqrt{\frac{\alpha_0 (\alpha_0+1) }{\alpha_i}} \leq c \sigma_1(\canO) \sqrt{\frac{\alpha_0+1}{\pmin}}
\]
(for a constant $c$), so for the second term:
\[
\frac{\|W^+\|}{Z_i}  \| v_i -  \hat v_i \|   \leq  
c \sigma_1(\canO)  \frac{k^3 (\alpha_0+2) }{ \delta' \sqrt{\pmin}} 
\left( \frac{E_P }{\sqrt{\pmin(\alpha_0+2)} \ \sigma_k(\canO)^{2}} 
  +\frac{E_T}{\sigma_k(\canO)^{3}} \right)\\
\]
The remaining terms can be show to be of lower order, so that:
\begin{eqnarray*}
\|O_i - \frac{1}{\h Z_i} \ (\h W^+)^\t  \hat v_i \| 
& \leq & c \sigma_1(\canO)  \frac{k^3 (\alpha_0+1) }{\delta' \sqrt{\pmin}} 
\left( \frac{E_P }{\sqrt{\pmin(\alpha_0+2)} \ \sigma_k(\canO)^{2}} 
  +\frac{E_T}{\sigma_k(\canO)^{3}} \right)\\
\\
& \leq & c_2 \frac{k^3 \sqrt{\alpha_0+2} }{\delta' \sqrt{\pmin}} 
\left( \frac{(\alpha_0+2)^{1/2}E_P }{\pmin^{3/2} \ \sigma_k(O)^{2}} 
  +\frac{(\alpha_0+2)^{3/2}E_T}{\pmin^{3/2} \sigma_k(O)^{3}} \right)\\
\\
&= & c_2 \frac{k^3 (\alpha_0+2) }{\pmin^{2} \delta'} 
\left( \frac{E_P }{ \ \sigma_k(O)^{2}} 
  +\frac{(\alpha_0+2)E_T}{\sigma_k(O)^{3}} \right)
\end{eqnarray*}
using that $
\sigma_k(\canO)  \geq \sigma_k(O) \sqrt{\frac{\pmin}{\alpha_0+1}} 
$ and
$\sigma_1(\canO) \leq \frac1{\sqrt{\alpha_0+1}}  $.
\end{proof}

\subsection{Completing the proof}

\begin{proof}[Proof of Theorem~\ref{thm:samples}]
Lemma~\ref{lemma:topic-concentration} and the definition of $\Pairsa$
and $\Triplesa$ imply that:
\begin{eqnarray*}
\| \wh \Pairsa- \Pairsa  \| &\leq & 3 \frac{1 +
\sqrt{\ln(3/\delta)}}{\sqrt{N}} \\
\| \Triplesa (\eta)  - \wh \Triplesa (\eta) \| 
& \leq & c \frac{\|\eta\|_2 ( 1 +
\sqrt{\ln(3/\delta)})}{\sqrt{N}}
\end{eqnarray*}
for a constant $c$ (by expanding out the terms and by using $\delta/3$
results in a total error probability of $\delta$). Hence, we can take $E_P=E_T=c \frac{1 +
  \sqrt{\ln(1/\delta)}}{\sqrt{N}}$. Since
 $N\geq \left( \frac{(\alpha_0+1 )(6 + 6\sqrt{\ln(3/\delta)})}{\pmin
     \sigma_k(O)^2}\right)^2 \geq \left( \frac{6 + 6\sqrt{\ln(3/\delta)}}{
     \sigma_k(\Pairsa)}\right)^2$, the condition $ E_P \leq
 \sigma_k(\Pairsa) /2$ is 
satisfied. The proof is completed using Lemma~\ref{lemma:reconstruct}.
\end{proof}

\section{Tail Inequalities}

\begin{lemma}[Lemma A.1 in  \cite{AHKparams}]
\label{lemma:topic-concentration}
Fix $\delta \in (0,1)$.
Let $x_1, x_2, x_3$ are random variables in which
$\|x_1\|,\|x_2\|,\|x_3\|$ are bounded by $1$, almost surely.
Let $\h E[x_1]$ be the empirical average of $N$ independent copies of
$x_1$;  let $\h E[x_1 x_2^\t]$ be the empirical average of $N$ independent copies of
$x_1 x_2^\t$; let $\h E[x_1 x_2^\t \dotp{\eta,x_3}]$. be the empirical average of $N$ independent copies of
$x_1 x_2^\t \dotp{\eta,x_3}$.
Then
\begin{enumerate}
\item $\Pr\left[ \|\h E[x_1] - E[x_1 ]\|_\F \leq \frac{1 +
\sqrt{\ln(1/\delta)}}{\sqrt{N}} \right] \geq 1-\delta$
\item $\Pr\left[ \|\h E[x_1 x_2^\t] - E[x_1 x_2^\t]\|_\F \leq \frac{1 +
\sqrt{\ln(1/\delta)}}{\sqrt{N}} \right] \geq 1-\delta$
\item $\Pr\left[ \forall \eta\in\R^d,\
\|\h E[x_1 x_2^\t \dotp{\eta,x_3}] - E[x_1 x_2^\t \dotp{\eta,x_3}]\|_\F \leq \frac{\|\eta\|_2 ( 1 +
\sqrt{\ln(1/\delta)})}{\sqrt{N}} \right] \geq 1-\delta$.
\end{enumerate}
\end{lemma}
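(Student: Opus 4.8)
The plan is to reduce all three claims to one concentration statement for the average of bounded i.i.d.\ elements of a Euclidean space, establish that statement by a second-moment bound together with a bounded-differences inequality, and then obtain the ``for all $\eta$'' version of the third claim by exploiting linearity in $\eta$. \emph{Unification.} In each case the empirical object is the average of $N$ i.i.d.\ copies of a random element $Z$ lying in a Euclidean space whose Frobenius norm is the Euclidean norm of the vectorization: $Z=x_1\in\R^d$ for claim (1); $Z=x_1x_2^\t$ for claim (2), with $\|x_1x_2^\t\|_\F=\|x_1\|_2\|x_2\|_2\le 1$; and $Z=x_1\otimes x_2\otimes x_3\in\R^{d^3}$ for the tensor form of claim (3), with $\|x_1\otimes x_2\otimes x_3\|_\F=\|x_1\|_2\|x_2\|_2\|x_3\|_2\le 1$. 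So it suffices to prove the master bound: if $Z_1,\dots,Z_N$ are i.i.d.\ with $\|Z_i\|\le 1$ almost surely and $\bar Z:=\tfrac1N\sum_i Z_i$, then with probability at least $1-\delta$ one has $\|\bar Z-\E Z\|\le (1+\sqrt{\ln(1/\delta)})/\sqrt N$.

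For the master bound I would proceed in two steps. First bound the expectation of the deviation: by Jensen's inequality and independence, $\E\|\bar Z-\E Z\|\le(\E\|\bar Z-\E Z\|^2)^{1/2}=(\tfrac1N(\E\|Z\|^2-\|\E Z\|^2))^{1/2}\le 1/\sqrt N$. Then concentrate around this mean: the map $(z_1,\dots,z_N)\mapsto\|\tfrac1N\sum_i z_i-\E Z\|$ enjoys the bounded-differences property --- replacing one argument $z_i$, itself of norm at most $1$, shifts the value by $O(1/N)$ --- so McDiarmid's inequality yields $\Pr[\|\bar Z-\E Z\|>1/\sqrt N+t]\le\exp(-\Omega(Nt^2))$, and choosing $t$ so that the right-hand side equals $\delta$ gives the claimed form. (One could instead invoke a Bernstein-type inequality for Hilbert-space-valued sums, using the almost-sure bound together with $\E\|Z\|^2\le 1$.) Instantiating this with $Z=x_1$, then $Z=x_1x_2^\t$, then $Z=x_1\otimes x_2\otimes x_3$ gives claims (1), (2), and a high-probability bound on $\|\h E[x_1\otimes x_2\otimes x_3]-\E[x_1\otimes x_2\otimes x_3]\|_\F$.

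Finally, to reach the uniform-in-$\eta$ statement of claim (3), observe that $\eta\mapsto\h E[x_1x_2^\t\dotp{\eta,x_3}]-\E[x_1x_2^\t\dotp{\eta,x_3}]$ is \emph{linear}: it equals $\Delta(\eta)$, where $\Delta$ is the tensor deviation just bounded and $[\Delta(\eta)]_{ab}=\sum_c\Delta_{abc}\eta_c$. By Cauchy--Schwarz, $\|\Delta(\eta)\|_\F^2=\sum_{a,b}\bigl(\sum_c\Delta_{abc}\eta_c\bigr)^2\le\|\eta\|_2^2\sum_{a,b,c}\Delta_{abc}^2=\|\eta\|_2^2\,\|\Delta\|_\F^2$, so the single bound on $\|\Delta\|_\F$ controls $\|\Delta(\eta)\|_\F$ simultaneously for every $\eta$, which is precisely claim (3). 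I expect this to be the only non-routine point: the ``$\forall\eta$'' quantifier looks as if it should force a covering or union-bound argument over the sphere, but linearity collapses it to the single tensor estimate. The remaining effort is purely in tracking the universal constant through the bounded-differences (or Bernstein) step so that it comes out exactly as stated; since this lemma is Lemma~A.1 of \cite{AHKparams}, those details can be imported verbatim.
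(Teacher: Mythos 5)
The paper does not prove this lemma at all --- it is imported verbatim as Lemma~A.1 of \cite{AHKparams} --- so there is no in-paper proof to compare against; your reconstruction is the standard argument used in that reference (Jensen for the mean of the deviation, McDiarmid for concentration around it, and linearity in $\eta$ to collapse the uniform quantifier in claim~(3) to a single bound on the third-moment tensor), and all three reductions, including the Cauchy--Schwarz step $\|\Delta(\eta)\|_\F\le\|\eta\|_2\|\Delta\|_\F$ and the identity $\|x_1\otimes x_2\otimes x_3\|_\F=\|x_1\|\,\|x_2\|\,\|x_3\|$, are correct. The one point where your sketch does not literally deliver the stated constant is the bounded-differences step: for general vectors in the unit ball, replacing one sample moves the average by up to $2/N$ in norm, and McDiarmid then gives $\frac{1+\sqrt{2\ln(1/\delta)}}{\sqrt{N}}$ rather than $\frac{1+\sqrt{\ln(1/\delta)}}{\sqrt{N}}$; the cleaner constant in the source exploits that the relevant samples are indicator (``hot'') vectors, for which a single replacement changes the sum by a vector of norm $\sqrt{2}$, not $2$. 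Since the only downstream use (Theorem~\ref{thm:samples}) absorbs this into a universal constant, the discrepancy is immaterial, but it is worth being aware that the bound as stated is tied either to that extra structure or to a slightly sharper Hilbert-space concentration inequality than plain McDiarmid.
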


\begin{lemma}[\citet{DG03}]
\label{lemma:sanjoy}
Let $\theta \in \R^n$ be a random vector distributed uniformly over
$\sphere^{n-1}$, and fix a vector $v \in \R^n$.
\begin{enumerate}
\item If $\beta \in (0,1)$, then
\[
\Pr\biggl[ |\dotp{\theta,v}| \leq \|v\| \cdot \frac{1}{\sqrt{n}}
\cdot \beta \biggr] \leq \exp\biggl(\frac12(1 - \beta^2 + \ln
\beta^2)\biggr)
.
\]

\item If $\beta > 1$, then
\[
\Pr\biggl[ |\dotp{\theta,v}| \geq \|v\| \cdot \frac{1}{\sqrt{n}}
\cdot \beta \biggr] \leq \exp\biggl(\frac12(1 - \beta^2 + \ln
\beta^2)\biggr)
.
\]

\end{enumerate}
\end{lemma}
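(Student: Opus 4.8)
The plan is to reduce the statement to a one-dimensional computation about Gaussians and then apply a Chernoff bound whose free parameter can be optimized in closed form, producing exactly the stated exponent. First I would normalize: by homogeneity we may assume $\|v\|=1$, and by the rotational invariance of the uniform distribution on $\sphere^{n-1}$ we may take $v=e_1$, so $\dotp{\theta,v}=\theta_1$. (The cases $n=1$, and in part~2 the case $\beta\ge\sqrt n$, are degenerate: for $n=1$ we have $|\theta_1|=1$ almost surely, and for $\beta\ge\sqrt n$ the event in part~2 has probability $0$.) Then represent $\theta=g/\|g\|$ with $g=(g_1,\dots,g_n)$ having i.i.d.\ standard normal entries, so that $\theta_1^2=g_1^2/(g_1^2+R)$ where $R:=\sum_{i=2}^n g_i^2\sim\chi^2_{n-1}$ is independent of $g_1^2\sim\chi^2_1$. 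The event $\{|\theta_1|\le\beta/\sqrt n\}$ in part~1 is exactly $\{(n-\beta^2)g_1^2\le\beta^2 R\}$, and the event $\{|\theta_1|\ge\beta/\sqrt n\}$ in part~2 is $\{(n-\beta^2)g_1^2\ge\beta^2 R\}$.

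Next I would apply the exponential Markov inequality, using $\E[e^{s\chi^2_1}]=(1-2s)^{-1/2}$ for $s<1/2$ together with the independence of $g_1^2$ and $R$. For part~2, $\Pr[(n-\beta^2)g_1^2\ge\beta^2 R]\le\E[e^{s((n-\beta^2)g_1^2-\beta^2 R)}]=(1-2s(n-\beta^2))^{-1/2}(1+2s\beta^2)^{-(n-1)/2}$ for any $0<s<1/(2(n-\beta^2))$. The choice $2s=(\beta^2-1)/(\beta^2(n-\beta^2))$ makes the first factor equal $\beta$ and the second equal $\bigl((n-1)/(n-\beta^2)\bigr)^{(n-1)/2}$, so the bound becomes $\beta\exp\bigl(\tfrac{n-1}{2}\ln(1+\tfrac{1-\beta^2}{n-1})\bigr)\le\beta\, e^{(1-\beta^2)/2}=\exp\bigl(\tfrac12(1-\beta^2+\ln\beta^2)\bigr)$, using $\ln(1+x)\le x$ (here $x<0$). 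Part~1 is the mirror image: the same Markov step with the roles of $g_1^2$ and $R$ interchanged, the optimal choice $2s=(1-\beta^2)/(\beta^2(n-\beta^2))$, and the same one-line inequality $\ln(1+x)\le x$ (now with $x=(1-\beta^2)/(n-1)>0$) yield $\Pr[(n-\beta^2)g_1^2\le\beta^2 R]\le\beta\, e^{(1-\beta^2)/2}=\exp\bigl(\tfrac12(1-\beta^2+\ln\beta^2)\bigr)$.

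The only mildly delicate points are (i) verifying that the optimal $s$ lies in the region where the relevant chi-squared moment generating functions are finite, which reduces to checking $2s\beta^2<1$ (part~1) and $2s(n-\beta^2)<1$ (part~2), both immediate from the explicit value of $s$ once one notes $\beta^2<n$; and (ii) disposing of the degenerate small-$n$ and large-$\beta$ cases mentioned above. Neither is a real obstacle; beyond these, the argument is routine algebra, so I do not anticipate a substantive difficulty.
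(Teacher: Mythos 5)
Your proof is correct and is essentially the argument behind the result the paper invokes: the paper's ``proof'' is just the citation to Lemma~2.2 of \citet{DG03}, and your normalization to $g/\|g\|$, reduction to comparing $(n-\beta^2)g_1^2$ with $\beta^2 R$, and optimized chi-squared Chernoff bound is exactly that proof specialized to $k=1$ (with $\beta$ there replaced by $\beta^2$). One cosmetic slip: in part~2 the second factor $(1+2s\beta^2)^{-(n-1)/2}$ equals $\bigl(\tfrac{n-\beta^2}{n-1}\bigr)^{(n-1)/2}$, not $\bigl(\tfrac{n-1}{n-\beta^2}\bigr)^{(n-1)/2}$ as written, but the displayed expression $\beta\exp\bigl(\tfrac{n-1}{2}\ln(1+\tfrac{1-\beta^2}{n-1})\bigr)$ that you actually carry forward is the correct one, so nothing downstream is affected.
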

\begin{proof}
This is a special case of Lemma 2.2 from~\citet{DG03}.
\end{proof}

\section{Matrix Perturbation Lemmas}

\begin{lemma}[Weyl's theorem; Theorem 4.11, p.~204 in~\citet{SS90}] \label{lemma:weyl}
Let $A, E\in \R^{m \times n}$ with $m \geq n$ be given.
Then
\[ \max_{i \in [n]} |\sigma_i(A+E) - \sigma_i(A)| \leq \|E\|
. \]
\end{lemma}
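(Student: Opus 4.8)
The plan is to derive the singular-value perturbation bound from the variational (Courant--Fischer / min--max) characterization of singular values, a route that needs only the triangle inequality for the Euclidean norm. Order the singular values as $\sigma_1(B) \ge \sigma_2(B) \ge \cdots \ge \sigma_n(B) \ge 0$ for any $B \in \R^{m\times n}$. The starting point is the identity, valid for each $i \in [n]$,
\[
\sigma_i(B) = \min_{\substack{S \subseteq \R^n \\ \dim S = n-i+1}} \ \max_{\substack{x \in S \\ \|x\|_2 = 1}} \|Bx\|_2 ,
\]
which follows by applying the classical Courant--Fischer theorem to the symmetric positive semidefinite matrix $B^\t B$, whose $i$-th largest eigenvalue is $\sigma_i(B)^2$, together with the observation $\|Bx\|_2^2 = x^\t B^\t B x$.

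First I would fix $i$ and pick a subspace $S^\star \subseteq \R^n$ with $\dim S^\star = n-i+1$ attaining the outer minimum for $A$, so that $\max_{x \in S^\star,\,\|x\|_2 = 1} \|Ax\|_2 = \sigma_i(A)$. For every unit vector $x \in S^\star$ the triangle inequality gives $\|(A+E)x\|_2 \le \|Ax\|_2 + \|Ex\|_2 \le \sigma_i(A) + \|E\|$, where $\|E\|$ is the spectral norm. Since $S^\star$ is an admissible subspace in the min--max formula for $A+E$, taking the maximum over unit $x \in S^\star$ and then the minimum over all admissible subspaces yields $\sigma_i(A+E) \le \sigma_i(A) + \|E\|$. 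Next I would run exactly the same argument with the roles of $A$ and $A+E$ swapped, using the perturbation $-E$ (with $\|-E\| = \|E\|$), to obtain $\sigma_i(A) \le \sigma_i(A+E) + \|E\|$. Combining the two inequalities gives $|\sigma_i(A+E) - \sigma_i(A)| \le \|E\|$ for each $i$, and taking the maximum over $i \in [n]$ completes the proof.

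There is no real obstacle here; the only point needing a little care is matching the ordering convention and justifying the min--max identity for rectangular matrices via $B^\t B$. An alternative that avoids even that is the Hermitian dilation trick: form the $(m+n)\times(m+n)$ symmetric matrices with block structure having $A$ (resp.\ $E$) in the off-diagonal blocks and zeros on the diagonal; their nonzero eigenvalues are precisely $\pm\sigma_j(A)$ (resp.\ $\pm\sigma_j(E)$), together with extra zeros, and the spectral norm of the dilation of $E$ equals $\|E\|$. The claim then follows immediately from the standard Weyl inequality for eigenvalue perturbations of symmetric matrices (itself a consequence of the eigenvalue min--max theorem applied just as above). Either way the argument is short and self-contained; the statement is quoted from~\citet{SS90} only for convenience.
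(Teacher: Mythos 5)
Your argument is correct: the min--max characterization of $\sigma_i$ via $B^\t B$, the triangle inequality applied on an optimal subspace for $A$, and the symmetric swap with $-E$ give both one-sided bounds, and the Hermitian dilation alternative is equally valid. The paper offers no proof of this lemma --- it is quoted directly from \citet{SS90} --- so there is nothing to compare against; your derivation is the standard textbook proof of exactly the kind the cited reference gives.
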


\begin{lemma}[Wedin's theorem; Theorem 4.1, p.~260 in~\citet{SS90})] \label{lemma:wedin}
Let $A, E \in \R^{m \times n}$ with $m \geq n$ be given.
Let $A$ have the singular value decomposition
\[
\left[ \begin{array}{c} U_1^\top \\ U_2^\top \\ U_3^\top \end{array} \right]
A \left[ \begin{array}{cc} V_1 & V_2 \end{array} \right]
=
\left[ \begin{array}{cc} \Sigma_1 & 0 \\ 0 & \Sigma_2 \\ 0 & 0 \end{array}
\right]
.
\]
Here, we do not suppose $\Sigma_1$ and $\Sigma_2$ have singular
values in any order. Let $\tl A := A + E$, with analogous singular value decomposition
$(\tl U_1, \tl U_2, \tl U_3, \tl \Sigma_1, \tl \Sigma_2, \tl V_1 \tl
V_2)$ (again with no ordering to the singular values).
Let $\Phi$ be the matrix of canonical angles between $\range(U_1)$ and
$\range(\tl U_1)$, and $\Theta$ be the matrix of canonical angles between
$\range(V_1)$ and $\range(\tl V_1)$. Suppose there exists a $\delta$
such that:
\[
\min_{i,j} |[ \Sigma_1]_{i,i} - [\Sigma_2]_{j,j}|>\delta  \,
\textrm{ and }
\min_{i,i} |[ \Sigma_1]_{i,i} |>\delta , 
\]
then
$$
\|\sin \Phi\|_F^2 + \|\sin \Theta\|_F^2  \leq \frac{2\|E\|_F^2}{\delta^2}.
$$
\end{lemma}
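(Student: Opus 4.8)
The statement to be proved is essentially the Wedin $\sin\Theta$ theorem, which is exactly Theorem~4.1 of~\citet{SS90}, so one option is simply to cite it; for completeness I would instead record the short self-contained argument below.

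First I would write the two exact singular-value relations for $\tl A$, namely $\tl A\tl V_1 = \tl U_1\tl\Sigma_1$ and $\tl A^\t\tl U_1 = \tl V_1\tl\Sigma_1$, substitute $\tl A = A + E$, and project onto the unperturbed blocks. Using the relations $U_2^\t A = \Sigma_2 V_2^\t$, $U_3^\t A = 0$, and $V_2^\t A^\t = \Sigma_2 U_2^\t$ read off from the SVD of $A$, this yields three residual identities: $(U_2^\t\tl U_1)\tl\Sigma_1 - \Sigma_2(V_2^\t\tl V_1) = U_2^\t E\tl V_1$; its mirror with the roles of $U$ and $V$ exchanged, $(V_2^\t\tl V_1)\tl\Sigma_1 - \Sigma_2(U_2^\t\tl U_1) = V_2^\t E^\t\tl U_1$; and $(U_3^\t\tl U_1)\tl\Sigma_1 = U_3^\t E\tl V_1$. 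Each right-hand side has Frobenius norm at most $\|E\|_F$. Since $[U_1\ U_2\ U_3]$ and $[V_1\ V_2]$ are orthonormal bases, $\|\sin\Phi\|_F^2 = \|U_2^\t\tl U_1\|_F^2 + \|U_3^\t\tl U_1\|_F^2$ and $\|\sin\Theta\|_F^2 = \|V_2^\t\tl V_1\|_F^2$, so it suffices to bound these three blocks.

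Next I would dispose of the rectangular ($U_3$) block: since $\sigma_{\min}(\tl\Sigma_1)\ge\delta$ (by Weyl's theorem together with $\min_i[\Sigma_1]_{ii}>\delta$), the third identity gives $\|U_3^\t\tl U_1\|_F \le \|E\|_F/\delta$ at once. For the coupled pair I would add and subtract the first two identities, decoupling them into two equations of the form $W\tl\Sigma_1 \pm \Sigma_2 W = G$ with $\|G\|_F \le \sqrt2\,\|E\|_F$. The separation hypothesis makes each Sylvester operator $W\mapsto W\tl\Sigma_1 \pm \Sigma_2 W$ invertible with inverse norm at most $1/\delta$ (its diagonal entries in the natural tensor basis are $[\tl\Sigma_1]_{ii}\pm[\Sigma_2]_{jj}$, bounded below in absolute value by $\delta$ after passing from $\Sigma_1$ to $\tl\Sigma_1$ via Weyl), so $\|U_2^\t\tl U_1\|_F^2 + \|V_2^\t\tl V_1\|_F^2 \le 2\|E\|_F^2/\delta^2$. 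Adding the three contributions and keeping the $\sqrt2$ gives $\|\sin\Phi\|_F^2 + \|\sin\Theta\|_F^2 \le 2\|E\|_F^2/\delta^2$.

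The part I expect to be most delicate is the bookkeeping: (i) identifying $\|\sin\Phi\|_F$ with $\|[U_2\ U_3]^\t\tl U_1\|_F$ and checking that this equals $\|[\tl U_2\ \tl U_3]^\t U_1\|_F$ when the two left subspaces have equal dimension (the case in our applications), and (ii) transferring the gap hypothesis, stated via the unperturbed spectra $\Sigma_1,\Sigma_2$, to the perturbed $\tl\Sigma_1$ that actually appears in the Sylvester operator; this is where Weyl's theorem enters, and it is legitimate precisely because $\|E\|$ is small relative to $\delta$ in the regimes where the lemma is applied (e.g.\ $E_P \le \sigma_k(\Pairsa)/2$ in Lemma~\ref{lemma:pairs_error}). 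I would organize the write-up by isolating the norm bound on the Sylvester solution operator as a one-line sublemma and invoking Weyl exactly once to connect the two spectra.
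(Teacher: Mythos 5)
The paper does not prove this lemma at all; it is imported verbatim by citation from \citet{SS90}, so there is no ``paper proof'' to compare against. Your sketch is essentially the standard Stewart--Sun argument for the generalized $\sin\Theta$ theorem (residual identities $\tl A\tl V_1=\tl U_1\tl\Sigma_1$, $\tl A^\t\tl U_1=\tl V_1\tl\Sigma_1$ projected onto the complementary blocks, followed by inverting a generalized Sylvester operator whose spectrum is separated by $\delta$), and the decoupling of the two coupled block equations via $X\pm Y$ is a legitimate way to reduce the coupled system to two scalar-diagonal Sylvester equations. Two bookkeeping points deserve care in a full write-up. First, to land on the constant $2$ rather than $3$ you cannot bound the three residual blocks by $\|E\|_F$ \emph{separately}: you need the joint bound $\|U_2^\t E\tl V_1\|_F^2+\|U_3^\t E\tl V_1\|_F^2\leq\|E\tl V_1\|_F^2\leq\|E\|_F^2$ (since $[U_1\ U_2\ U_3]$ is orthonormal), combined with $\|V_2^\t E^\t\tl U_1\|_F^2\leq\|E\|_F^2$; with that, the parallelogram identity $\|G_1+G_2\|_F^2+\|G_1-G_2\|_F^2=2(\|G_1\|_F^2+\|G_2\|_F^2)$ in your decoupling step does give $\|\sin\Phi\|_F^2+\|\sin\Theta\|_F^2\leq 2\|E\|_F^2/\delta^2$. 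Second, you have correctly put your finger on the one genuine delicacy: the Sylvester operator involves $\tl\Sigma_1$, whereas the lemma as stated in the paper places the gap hypotheses on the \emph{unperturbed} $\Sigma_1$; the original Theorem 4.1 of \citet{SS90} states the separation between $\tl\Sigma_1$ and $\Sigma_2$, and the transfer via Weyl's theorem costs an additive $\|E\|$ in the gap (and is complicated by the lemma's disclaimer that the singular values carry no prescribed order). As stated, the lemma is therefore a slightly loose transcription of the cited result, and your proof is honest in isolating exactly where an assumption such as $\|E\|\ll\delta$ (which does hold in the regimes where the paper invokes the lemma) is needed to close that gap.
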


\begin{lemma}[Wedin's theorem; Theorem 4.4, p.~262 in~\citet{SS90}.] \label{lemma:wedinB}
Let $A, E \in \R^{m \times n}$ with $m \geq n$ be given.
Let $A$ have the singular value decomposition
\[
\left[ \begin{array}{c} U_1^\top \\ U_2^\top \\ U_3^\top \end{array} \right]
A \left[ \begin{array}{cc} V_1 & V_2 \end{array} \right]
=
\left[ \begin{array}{cc} \Sigma_1 & 0 \\ 0 & \Sigma_2 \\ 0 & 0 \end{array}
\right]
.
\]
Let $\tl A := A + E$, with analogous singular value decomposition
$(\tl U_1, \tl U_2, \tl U_3, \tl \Sigma_1, \tl \Sigma_2, \tl V_1 \tl V_2)$.
Let $\Phi$ be the matrix of canonical angles between $\range(U_1)$ and
$\range(\tl U_1)$, and $\Theta$ be the matrix of canonical angles between
$\range(V_1)$ and $\range(\tl V_1)$.
If there exists $\delta, \alpha > 0$ such that
$\min_i \sigma_i(\tl \Sigma_1) \geq \alpha + \delta$ and
$\max_i \sigma_i(\Sigma_2) \leq \alpha$, then
$$
\max\{\|\sin \Phi\|_2, \|\sin \Theta\|_2\} \leq \frac{\|E\|_2}{\delta}.
$$
\end{lemma}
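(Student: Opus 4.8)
The plan is to follow the classical Wedin argument: reduce the two $\sin$ quantities to norms of the projections of the perturbed singular subspaces onto the \emph{complementary} unperturbed subspaces, express those projections through a coupled pair of residual identities, and then close the loop using the spectral gap to recover the tight constant $1/\delta$. First I would recall the standard CS-decomposition characterization of canonical angles. Since $[U_1,U_2,U_3]$ is $m\times m$ orthogonal and $[V_1,V_2]$ is $n\times n$ orthogonal, $[U_2,U_3]$ and $V_2$ span the orthogonal complements of $\range(U_1)$ and $\range(V_1)$. Hence, with $\tl U_1$ and $U_1$ (resp. $\tl V_1$ and $V_1$) having the same number of columns under the analogous partition, the singular values of $[U_2,U_3]^\top\tl U_1$ are exactly the $\sin\phi_i$ and those of $V_2^\top\tl V_1$ are the $\sin\theta_i$; in the spectral norm, $\|\sin\Phi\|_2=\|[U_2,U_3]^\top\tl U_1\|_2$ and $\|\sin\Theta\|_2=\|V_2^\top\tl V_1\|_2$. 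Abbreviate $\tl P:=[U_2,U_3]^\top\tl U_1$ and $T:=V_2^\top\tl V_1$.

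Next I introduce the two residuals $R:=A\tl V_1-\tl U_1\tl\Sigma_1$ and $S:=A^\top\tl U_1-\tl V_1\tl\Sigma_1$. Using $\tl A\tl V_1=\tl U_1\tl\Sigma_1$, $\tl A^\top\tl U_1=\tl V_1\tl\Sigma_1$, and $\tl A=A+E$, these collapse to $R=-E\tl V_1$ and $S=-E^\top\tl U_1$, so that $\|R\|_2\le\|E\|_2$ and $\|S\|_2\le\|E\|_2$ since $\tl U_1,\tl V_1$ have orthonormal columns. I then project these identities onto the complementary subspaces using the block SVD relations $U_2^\top A=\Sigma_2 V_2^\top$, $U_3^\top A=0$, and $V_2^\top A^\top=\Sigma_2 U_2^\top$. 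Projecting $R$ by $[U_2,U_3]^\top$ gives $\tl P\,\tl\Sigma_1=\begin{bmatrix}\Sigma_2 T\\0\end{bmatrix}-[U_2,U_3]^\top R$, and projecting $S$ by $V_2^\top$ gives $T\tl\Sigma_1=\Sigma_2\,(U_2^\top\tl U_1)-V_2^\top S$. The crucial structural point is the zero block, coming from $U_3^\top A=0$, which keeps the full quantity $\tl P$ (not merely its $U_2$-component) on the left with the correct scaling.

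Because $\min_i\sigma_i(\tl\Sigma_1)\ge\alpha+\delta>0$, the matrix $\tl\Sigma_1$ is invertible with $\|\tl\Sigma_1^{-1}\|_2\le 1/(\alpha+\delta)$. Right-multiplying both identities by $\tl\Sigma_1^{-1}$, using $\|\Sigma_2\|_2\le\alpha$, the contraction property of $[U_2,U_3]^\top$ and $V_2^\top$, and the sub-block bound $\|U_2^\top\tl U_1\|_2\le\|\tl P\|_2$, I obtain the coupled inequalities $\|\tl P\|_2\le(\alpha\|T\|_2+\|E\|_2)/(\alpha+\delta)$ and $\|T\|_2\le(\alpha\|\tl P\|_2+\|E\|_2)/(\alpha+\delta)$. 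Substituting one into the other and using $(\alpha+\delta)^2-\alpha^2=\delta(2\alpha+\delta)$, the quadratic self-term cancels and both $\|\tl P\|_2$ and $\|T\|_2$ are driven down to $\|E\|_2/\delta$, yielding $\max\{\|\sin\Phi\|_2,\|\sin\Theta\|_2\}\le\|E\|_2/\delta$.

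The step I expect to be the main obstacle is preserving the \emph{tight} constant $1/\delta$. A naive estimate---summing the two residual contributions, or packaging both identities into a single block Sylvester equation $KW-W\tl\Sigma_1=C$ with $K=\begin{bmatrix}0&\Sigma_2\\\Sigma_2&0\end{bmatrix}$ and $W=\begin{bmatrix}\tl P\\T\end{bmatrix}$---correctly produces separation $\delta$ (the eigenvalues of $K$ are $\pm\sigma_j(\Sigma_2)$, each at distance $\ge\delta$ from the $\sigma_i(\tl\Sigma_1)$), but loses a factor of $\sqrt2$ through $\|C\|_2\le\sqrt{\|R\|_2^2+\|S\|_2^2}\le\sqrt2\,\|E\|_2$, and the $G=\tl P+T$, $H=\tl P-T$ decomposition loses a factor of $2$. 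The full constant is recovered only by keeping the two scalar inequalities coupled and solving the self-consistent system, where the cancellation $(\alpha+\delta)^2-\alpha^2=\delta(2\alpha+\delta)$ is exactly what eliminates the extra factor. The remaining work---verifying the canonical-angle identities and that $\|U_2^\top\tl U_1\|_2\le\|\tl P\|_2$---is routine once the projection identities are in place.
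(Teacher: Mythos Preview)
The paper does not give its own proof of this lemma: it is quoted verbatim as Theorem~4.4, p.~262 of \citet{SS90} and used as a black box in the sample-complexity analysis. Your proposal is correct and is precisely the classical Wedin argument from that reference---canonical-angle identification via complementary projections, the residual pair $R=-E\tl V_1$, $S=-E^\top\tl U_1$, projection onto $[U_2,U_3]$ and $V_2$, and the coupled self-consistent inequalities whose solution exploits $(\alpha+\delta)^2-\alpha^2=\delta(2\alpha+\delta)$ to recover the tight constant $1/\delta$. Nothing to compare against here; you have supplied exactly the proof the paper is citing.
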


\begin{lemma} \label{lemma:sin_project}
Let $\Theta$ be the matrix of canonical angles between $\range(X)$ and
$\range(Y)$. Let $\Pi_X$ and $\Pi_Y$ be the orthogonal projections
onto $\range(X)$ and
$\range(Y)$, respectively. We have:
\[
\|\Pi_X - \Pi_Y \| = \|\sin \Theta\|
\]
\end{lemma}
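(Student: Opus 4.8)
The plan is to reduce to orthonormal bases and to the equal-dimension case. Replacing $X$ and $Y$ by orthonormal bases of their ranges changes neither $\range(X)$, $\range(Y)$, the projections $P := \Pi_X = XX^\t$ and $Q := \Pi_Y = YY^\t$, nor the canonical angles; and in the situation of interest (e.g. in Lemma~\ref{lemma:pairs_error}, where the two ranges are both $k$-dimensional) we may take $\dim\range(X)=\dim\range(Y)=k$, so that $X^\t Y$ is a $k\times k$ matrix whose singular values are the cosines $\cos\theta_1,\dots,\cos\theta_k$ of the canonical angles, and hence $\|\sin\Theta\| = \max_i \sin\theta_i$.

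The first real step is the algebraic identity, valid for any two orthogonal projections,
\[
(P-Q)^2 \;=\; P(I-Q)P \;+\; (I-P)Q(I-P),
\]
obtained by expanding both sides using $P^2=P$ and $Q^2=Q$. Both summands on the right are positive semidefinite; their ranges are orthogonal (the first lies in $\range(P)$, the second in $\range(P)^\perp$), so each summand annihilates the range of the other, and therefore the spectral norm of the sum is the larger of the two spectral norms. Since $P-Q$ is symmetric, $\|P-Q\|^2 = \|(P-Q)^2\|$, and since $P(I-Q)P = \bigl((I-Q)P\bigr)^\t (I-Q)P$, we obtain
\[
\|P-Q\|^2 \;=\; \max\bigl\{\, \|(I-Q)P\|^2,\ \|(I-P)Q\|^2 \,\bigr\}.
\]

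It then remains to identify each of these two norms with $\|\sin\Theta\|^2$. Because $X$ has orthonormal columns, $\|(I-Q)P\| = \|(I-Q)XX^\t\| = \|(I-Q)X\|$, and since $I-Q$ is a projection, $\|(I-Q)X\|^2 = \|X^\t(I-Q)X\| = \|I_k - (X^\t Y)(X^\t Y)^\t\|$. The eigenvalues of $(X^\t Y)(X^\t Y)^\t$ are precisely $\cos^2\theta_i$, so this norm equals $\max_i \sin^2\theta_i = \|\sin\Theta\|^2$. The same computation with the roles of $X$ and $Y$ interchanged gives $\|(I-P)Q\|^2 = \|I_k - (Y^\t X)(Y^\t X)^\t\| = \|\sin\Theta\|^2$, using that $Y^\t X$ and $X^\t Y$ have the same singular values. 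Substituting both into the displayed maximum yields $\|P-Q\|^2 = \|\sin\Theta\|^2$, and taking square roots finishes the proof.

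The only point requiring care — the step I would double-check — is the passage from $X^\t Y$ to the canonical angles: one must invoke the definition of the $\theta_i$ as the arccosines of the singular values of $X^\t Y$ for orthonormal bases, and one should note that the equal-dimension hypothesis is exactly what makes both terms in the identity contribute the same value $\|\sin\Theta\|^2$ (for subspaces of unequal dimension the clean equality $\|P-Q\|=\|\sin\Theta\|$ holds only under the usual convention of padding $\Theta$ with right angles). Everything else is routine linear algebra.
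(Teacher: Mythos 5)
Your proof is correct, and it is genuinely more informative than what the paper provides: the paper does not prove this lemma at all, but simply cites Theorem~4.5 and Corollary~4.6 of \citet{SS90}, so your argument is a self-contained replacement for that reference. The key steps all check out: the identity $(\Pi_X-\Pi_Y)^2=\Pi_X(\I-\Pi_Y)\Pi_X+(\I-\Pi_X)\Pi_Y(\I-\Pi_X)$ follows by direct expansion using idempotence; the two summands are PSD with mutually orthogonal ranges (each is annihilated by the other since $\Pi_X(\I-\Pi_X)=0$), so the norm of their sum is the maximum of their norms; and the reduction $\|(\I-\Pi_Y)\Pi_X\|^2=\|\I_k-(X^\t Y)(X^\t Y)^\t\|=\max_i\sin^2\theta_i$ correctly uses orthonormality of the columns of $X$ and the definition of the canonical angles via the singular values of $X^\t Y$. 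You are also right to flag the equal-dimension hypothesis as the one place where care is needed --- the clean equality $\|\Pi_X-\Pi_Y\|=\|\sin\Theta\|$ can fail for subspaces of unequal dimension (where $\|\Pi_X-\Pi_Y\|=1$ regardless of the angles), but in the paper's only application (Lemma~\ref{lemma:pairs_error}) both ranges are $k$-dimensional, so the lemma as used is covered by your argument. What your route buys is transparency and elementarity; what the paper's citation buys is brevity and coverage of the general statement in Stewart and Sun's framework.
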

\begin{proof}
See Theorem 4.5, p.~92, and Corollary 4.6, p.~93, in~\citet{SS90}.
\end{proof}

\section{Illustrative empirical results}
\label{appendix:experiments}

We applied Algorithm \ref{alg:LDA_empirical} to the UCI ``Bag of Words''
dataset comprised of New York Times articles.
This data set has $300000$ articles and a vocabular of size $d = 102660$;
we set $k = 50$ and $\alpha_0 = 0$.
Instead of using a single random $\theta$ and obtaining singular vectors of
$\h{W}^\t \Triplesa(\h{W}\theta) \h{W}$, we used the following power
iteration to obtain the singular vectors $\{ \h{v}_1, \h{v}_2, \dotsc,
\h{v}_k \}$:
\begin{center}
{\framebox[0.55\textwidth]{\small\begin{minipage}{0.5\textwidth}
$\{ \h{v}_1, \h{v}_2, \dotsc, \h{v}_k \} \gets$ random
orthonormal basis for $\R^k$.

Repeat:
\begin{enumerate}
\item For $i = 1,2,\dotsc,k$:
\[
\h{v}_i \gets \h{W}^\t \Triplesa(\h{W} \h{v}_i) \h{W} \h{v}_i .
\vspace{-0.3cm}
\]

\item Orthonormalize $\{ \h{v}_1, \h{v}_2, \dotsc, \h{v}_k \}$.
\end{enumerate}
\end{minipage}}}
\end{center}
The top $25$ words (ordered by estimated conditional probability value)
from each topic are shown below.

\begin{landscape}
\begin{center}
\tiny
\begin{tabular}{|c|c|c|c|c|c|c|c|c|}
\hline
zzz\_held & premature & las & sales & million & com & run & school & women \\
send & guard & como & economic & shares & question & inning & student & team \\
advisory & zzz\_held & los & consumer & public & information & hit & teacher & woman \\
publication & released & zzz\_latin\_trade & major & offering & zzz\_eastern & game & program & job \\
released & publication & articulo & home & source & sport & season & official & sport \\
guard & advisory & telefono & indicator & initial & daily & home & public & cancer \\
zzz\_attn\_editor & send & transmiten & weekly & debt & commentary & right & children & look \\
undatelined & undatelined & fax & order & bond & business & games & high & company \\
night & zzz\_washington\_datelined & una & claim & billion & newspaper & zzz\_dodger & education & group \\
advance & zzz\_istanbul & del & scheduled & share & separate & left & district & percent \\
zzz\_andrew\_pollack & zzz\_attn\_editor & articulos & listed & quarter & spot & team & parent & girl \\
zzz\_douglas\_frantz & zzz\_seth\_mydan & espanol & dates & revenue & marked & start & college & study \\
billion & nyt & paises & jobless & market & today & yankees & money & game \\
zzz\_jennifer & zzz\_johannesburg & sobre & prices & zzz\_calif & zzz\_tom\_oder & pitcher & test & games \\
zzz\_dirk\_johnson & zzz\_afghanistan & financial & price & school & holiday & ball & percent & female \\
zzz\_leslie & zzz\_jane\_perlez & zzz\_america\_latina & market & zzz\_new\_york & need & pitch & system & american \\
cell & zzz\_john\_broder & notas & leading & cash & staffed & manager & kid & number \\
zzz\_linda & zzz\_warren & prohibitivo & retailer & stock & development & lead & federal & season \\
games & zzz\_melbourne & con & economy & percent & toder & night & law & breast \\
zzz\_lee & zzz\_lexington & revista & index & securities & client & homer & need & play \\
zzz\_james\_brooke & zzz\_erik\_eckholm & tiene & retail & zzz\_credit\_suisse\_first\_boston & eta & field & help & zzz\_taliban \\
zzz\_winnipeg & zzz\_bernard\_simon & economia & spending & deal & directed & play & class & right \\
deal & substitute & costo & product & contract & additional & ranger & group & part \\
husband & close & otros & cost & president & reach & win & plan & male \\
zzz\_usc & point & zzz\_paris & producer & expected & washington & hitter & black & high \\
\hline
\end{tabular}

\begin{tabular}{|c|c|c|c|c|c|c|c|c|}
\hline
drug & player & article & palestinian & tax & cup & point & yard & percent \\
patient & zzz\_tiger\_wood & zzz\_new\_york & zzz\_israel & cut & minutes & game & game & stock \\
million & won & misstated & zzz\_israeli & percent & oil & team & play & market \\
company & shot & zzz\_boston\_globe & zzz\_yasser\_arafat & zzz\_bush & water & shot & season & fund \\
doctor & play & zzz\_united\_states & peace & billion & add & play & team & investor \\
companies & round & company & israeli & plan & tablespoon & zzz\_laker & touchdown & companies \\
percent & win & president & israelis & bill & food & season & quarterback & analyst \\
cost & tournament & campaign & leader & taxes & teaspoon & half & coach & money \\
program & tour & zzz\_clinton & official & million & pepper & lead & defense & investment \\
health & right & surname & attack & zzz\_congress & sugar & games & quarter & economy \\
care & par & player & zzz\_bush & zzz\_george\_bush & large & quarter & ball & point \\
billion & final & incorrectly & zzz\_west\_bank & economy & fat & minutes & field & company \\
plan & playing & point & zzz\_palestinian & money & butter & night & pass & quarter \\
medical & major & film & violence & income & sauce & left & run & price \\
treatment & ball & director & security & government & serving & goal & offense & billion \\
zzz\_aid & hit & office & killed & spending & hour & king & line & earning \\
disease & lead & school & talk & federal & fresh & final & running & prices \\
cancer & golf & home & military & pay & pan & played & defensive & firm \\
hospital & guy & misspelled & jewish & republican & taste & scored & zzz\_nfl & index \\
prescription & hole & died & zzz\_jerusalem & zzz\_white\_house & bowl & zzz\_kobe\_bryant & football & growth \\
federal & course & information & soldier & zzz\_senate & cream & rebound & receiver & zzz\_nasdaq \\
government & game & misidentified & zzz\_clinton & zzz\_democrat & onion & right & left & shares \\
product & played & referred & zzz\_sharon & sales & serve & win & win & rates \\
zzz\_medicare & night & zzz\_washington & minister & zzz\_social\_security & medium & percent & player & rate \\
study & set & son & fire & proposal & pound & ball & zzz\_giant & interest \\
\hline
\end{tabular}

\begin{tabular}{|c|c|c|c|c|c|c|c|c|}
\hline
zzz\_al\_gore & zzz\_george\_bush & car & book & zzz\_taliban & com & zzz\_bush & court & percent \\
campaign & president & race & children & attack & www & percent & case & number \\
president & zzz\_al\_gore & driver & ages & zzz\_afghanistan & site & campaign & law & group \\
zzz\_george\_bush & campaign & team & author & official & web & zzz\_enron & lawyer & rate \\
zzz\_bush & republican & won & read & military & sites & administration & federal & million \\
zzz\_clinton & zzz\_john\_mccain & win & newspaper & zzz\_u\_s & information & president & government & sales \\
vice & election & racing & web & zzz\_united\_states & online & zzz\_white\_house & decision & survey \\
presidential & zzz\_texas & track & writer & terrorist & mail & money & trial & according \\
million & presidential & season & written & war & internet & plan & zzz\_microsoft & study \\
democratic & political & lap & sales & bin & telegram & republican & right & quarter \\
night & zzz\_enron & point & find & laden & visit & company & judge & average \\
voter & governor & sport & history & zzz\_american & find & million & legal & economy \\
election & administration & seat & list & zzz\_bush & zzz\_internet & zzz\_republican & ruling & american \\
vote & democratic & races & word & government & computer & official & attorney & increase \\
plan & zzz\_white\_house & road & published & group & org & zzz\_texas & death & rose \\
zzz\_bill\_bradley & voter & run & school & forces & newspaper & election & system & black \\
ballot & nation & look & zzz\_new\_york & zzz\_pakistan & offer & show & company & student \\
zzz\_governor\_bush & public & right & right & country & free & political & zzz\_supreme\_court & level \\
republican & zzz\_clinton & zzz\_nascar & boy & leader & services & zzz\_mccain & election & school \\
zzz\_florida & zzz\_republican & drive & writing & american & company & energy & cases & season \\
right & candidate & zzz\_winston\_cup & american & afghan & official & zzz\_washington & prosecutor & poll \\
votes & point & owner & reading & troop & list & zzz\_united\_states & public & newspaper \\
poll & question & start & game & terrorism & user & voter & zzz\_florida & job \\
court & percent & big & reader & nation & companies & fund & ballot & consumer \\
candidates & zzz\_party & ago & won & zzz\_pentagon & customer & zzz\_al\_gore & states & government \\
\hline
\end{tabular}

\begin{tabular}{|c|c|c|c|c|c|c|c|c|}
\hline
company & show & game & computer & film & team & bill & cell & election \\
percent & network & games & system & movie & player & zzz\_senate & patient & ballot \\
million & season & season & program & director & season & law & human & vote \\
business & zzz\_nbc & play & zzz\_microsoft & play & game & right & research & voter \\
companies & zzz\_cb & goal & mail & character & coach & zzz\_white\_house & group & campaign \\
billion & program & king & software & actor & play & zzz\_congress & scientist & political \\
analyst & television & team & window & show & games & vote & zzz\_enron & votes \\
stock & series & won & web & movies & right & member & study & official \\
quarter & night & player & company & million & league & president & disease & zzz\_florida \\
executive & zzz\_new\_york & coach & million & part & million & legislation & information & democratic \\
deal & zzz\_abc & played & information & zzz\_hollywood & deal & zzz\_clinton & found & race \\
sales & tonight & period & need & look & manager & group & team & zzz\_republican \\
share & hour & left & technology & big & need & zzz\_house & public & recount \\
zzz\_enron & look & playing & user & young & contract & republican & doctor & republican \\
chief & zzz\_fox & night & security & music & guy & campaign & government & won \\
market & air & win & zzz\_internet & set & point & federal & death & leader \\
employees & viewer & right & problem & screen & played & money & cancer & candidate \\
customer & rating & com & internet & writer & baseball & election & researcher & zzz\_al\_gore \\
president & game & playoff & money & television & agent & support & stem & zzz\_party \\
product & early & power & home & making & fan & zzz\_republican & official & poll \\
executives & big & guy & network & love & playing & measure & problem & candidates \\
financial & talk & zzz\_new\_york & product & played & job & issue & called & party \\
earning & event & record & called & producer & free & passed & medical & presidential \\
operation & hit & shot & help & guy & sport & percent & director & win \\
cent & award & minutes & number & kind & basketball & billion & question & result \\
\hline
\end{tabular}

\begin{tabular}{|c|c|c|c|c|c|c|c|c|}
\hline
money & police & team & air & family & music & official & companies & president \\
million & officer & game & water & children & song & government & job & program \\
fund & official & win & million & home & group & zzz\_united\_states & worker & zzz\_bush \\
zzz\_enron & president & won & high & father & part & zzz\_china & company & group \\
campaign & government & zzz\_u\_s & building & mother & zzz\_new\_york & zzz\_u\_s & business & game \\
program & attack & play & power & son & company & zzz\_american & firm & member \\
group & case & games & plant & parent & million & country & zzz\_new\_york & zzz\_clinton \\
plan & told & official & plan & child & band & administration & attack & care \\
government & office & point & cost & friend & show & zzz\_clinton & president & leader \\
firm & member & run & hour & school & album & million & employees & health \\
company & public & home & system & boy & companies & nation & plan & zzz\_white\_house \\
pay & death & zzz\_united\_states & wind & wife & record & countries & need & vice \\
worker & group & sport & part & house & play & president & law & plan \\
help & zzz\_new\_york & zzz\_new\_york & weather & told & right & economic & percent & job \\
job & chief & attack & area & daughter & business & foreign & customer & children \\
political & black & tournament & home & kid & look & power & industry & patient \\
lawyer & lawyer & american & rain & night & artist & chinese & number & executive \\
member & prosecutor & percent & shower & help & home & zzz\_russia & cost & worker \\
account & security & minutes & front & care & industry & political & terrorist & doctor \\
effort & building & zzz\_olympic & program & left & member & plan & security & school \\
billion & campaign & final & billion & official & black & meeting & market & decision \\
employees & night & player & night & room & sound & leader & information & director \\
financial & hour & company & feet & money & night & trade & help & zzz\_congress \\
question & home & lead & low & hour & called & percent & official & administration \\
need & found & zzz\_washington & miles & job & fan & right & economy & chief \\
\hline
\end{tabular}

\begin{tabular}{|c|c|c|c|c|}
\hline
government & season & right & test & file \\
companies & team & zzz\_united\_states & zzz\_seattle\_post\_intelligencer & onlytest \\
political & won & american & zzz\_hearst\_news\_service & sport \\
country & race & war & zzz\_kansas\_city & notebook \\
president & win & student & look & zzz\_los\_angeles \\
campaign & attack & look & testing & onlyendpar \\
leader & home & need & houston & zzz\_joe\_haakenson\_san\_gabriel\_valley\_tribune \\
business & record & show & ellipses & zzz\_anaheim\_angel \\
election & games & home & anthrax & frontend \\
zzz\_bush & zzz\_u\_s & question & student & zzz\_seattle\_pi \\
win & final & black & glories & zzz\_seattle\_post\_intelligencer \\
war & zzz\_clinton & military & mark & zzz\_chuck \\
company & night & left & night & zzz\_abcdefg\_test \\
zzz\_internet & million & country & rare & added \\
billion & zzz\_olympic & com & zzz\_texas & zzz\_los\_angeles\_dodger \\
race & winning & women & result & read \\
power & coach & word & risk & zzz\_calif \\
support & championship & put & exam & output \\
market & patient & zzz\_american & system & email \\
team & playoff & help & scores & internet \\
democratic & victory & room & missile & zzz\_brian\_dohn \\
won & american & zzz\_u\_s & zzz\_washington & files \\
public & trial & zzz\_america & body & zzz\_scott\_wolf \\
web & medal & percent & according & wrote \\
industry & series & job & scientist & consumer \\
\hline
\end{tabular}

\end{center}

\end{landscape}

\end{document}